\newtheorem{theorem}{Theorem}
\newtheorem{lemma}{Lemma}
\newtheorem{example}{Example}
\newtheorem{definition}{Definition}
\theoremstyle{definition}
\newtheorem{assumption}{Assumption}
\theoremstyle{remark}
\newcommand{\eins}{\boldsymbol{1}}
\title{PrAda-GAN: A Private Adaptive Generative Adversarial Network with Bayes Network Structure}
\author {
    % Authors
    Ke Jia\textsuperscript{\rm 1, \rm 2}\equalcontrib,
    Yuheng Ma\textsuperscript{\rm 1, \rm 2}\equalcontrib,
    Yang Li\textsuperscript{\rm 1, \rm 2},
    Feifei Wang \textsuperscript{\rm 1, \rm 2}\thanks{Corresponding author.}
}
\begin{document}

\maketitle

\begin{abstract}
We revisit the problem of generating synthetic data under differential privacy.
To address the core limitations of marginal-based methods, we propose the Private Adaptive Generative Adversarial Network with Bayes Network Structure (\texttt{PrAda-GAN}), which integrates the strengths of both GAN-based and marginal-based approaches.
Our method adopts a sequential generator architecture to capture complex dependencies among variables, while adaptively regularizing the learned structure to promote sparsity in the underlying Bayes network.
Theoretically, we establish diminishing bounds on the parameter distance, variable selection error, and Wasserstein distance.
Our analysis shows that leveraging dependency sparsity leads to significant improvements in convergence rates.
Empirically, experiments on both synthetic and real-world datasets demonstrate that \texttt{PrAda-GAN} outperforms existing tabular data synthesis methods in terms of the privacy–utility trade-off.
\end{abstract}

% Uncomment the following to link to your code, datasets, an extended version or similar.
% You must keep this block between (not within) the abstract and the main body of the paper.
% \begin{links}
%     \link{Code}{https://aaai.org/example/code}
%     \link{Datasets}{https://aaai.org/example/datasets}
%     \link{Extended version}{https://aaai.org/example/extended-version}
% \end{links}

\section{Introduction}
\label{sec:intro} 
Synthetic data is the new fossil fuel of modern AI, driving the success of multiple domains as models grow larger and demand unprecedented amounts of data for training \citep{wang2022self, gadre2023datacomp, lu2023machine}.
However, generative models are not immune to privacy risks—particularly membership inference attacks (MIAs), in which adversaries attempt to determine whether specific records are part of the training data. These vulnerabilities arise when generative models inadvertently memorize training samples, causing the synthetic outputs to closely resemble the originals or reveal exploitable statistical patterns \citep{sun2021adversarial, andrey2025tami}.

To address these risks, differential privacy \citep[DP,][]{dwork2006calibrating} is commonly applied during the training of generative models \citep{jordon2018pate, xie2018differentially}, ensuring that the model’s outputs remain indistinguishable regardless of whether any individual data point is included in the training set.
Numerous studies demonstrated DP generation of tabular data \citep{tao2021benchmarking, yang2024tabular, chen2025benchmarking}, which remains the most prevalent data type in data science \citep{hollmann2025accurate, zhang2025tabpfn}.
Among these, marginal-based methods, such as \texttt{PrivBayes} \citep{zhang2017privbayes} and \texttt{AIM} \citep{mckenna2022aim}, often achieve superior utility \citep{nist2019dpchallenge}.

However, marginal-based approaches have notable limitations.
First, they rely on low-dimensional structural assumptions for effective performance.
For instance, \texttt{PrivBayes} assumes a Bayes network structure, while \texttt{AIM} requires that the underlying distribution be well-approximated by a low-dimensional marginal structure in terms of workload error.
Although such assumptions are often reasonable, they are difficult to verify and to adaptively match to the true, unknown degree of low-dimensionality.
For example, if a few nodes have significantly more parent nodes than others, it becomes challenging to infer the overall network structure using the single hyperparameter in \texttt{PrivBayes} \citep{zhang2017privbayes}.
This issue is illustrated in Figure~\ref{fig:bayes-drawback}, where it may either underfit (by ignoring meaningful dependencies) or overfit (by including redundant ones).
Moreover, these marginal-based methods are primarily designed for categorical variables, requiring continuous variables to be discretized through binning.
In addition to the extra tuning cost involved in selecting appropriate bin sizes, this process also hinders the generation of heavy-tailed distributions.

\begin{figure}[htbp]
\vskip -0.1in
    \centering
    \subfigure[True network]{
    \begin{minipage}{0.3\linewidth}
        \centering
        \includegraphics[width=1\linewidth]{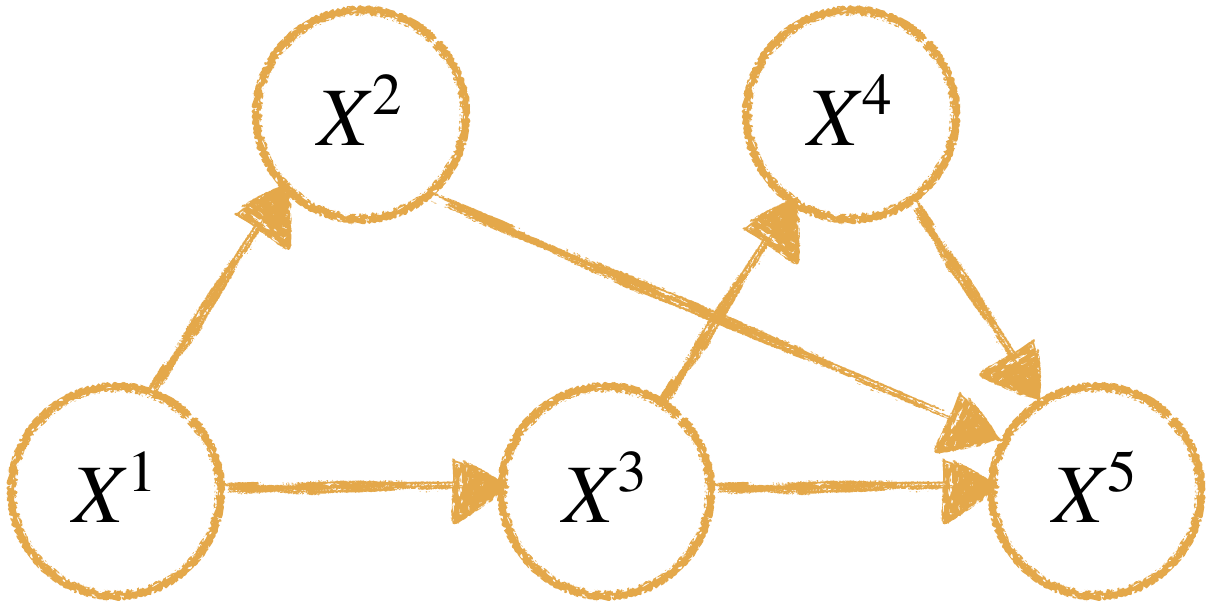}
        \label{fig:bayes1}
        \vskip -0.1in
    \end{minipage}
    }
    \subfigure[Underfitting]{
    \begin{minipage}{0.3\linewidth}
        \centering
        \includegraphics[width=1\linewidth]{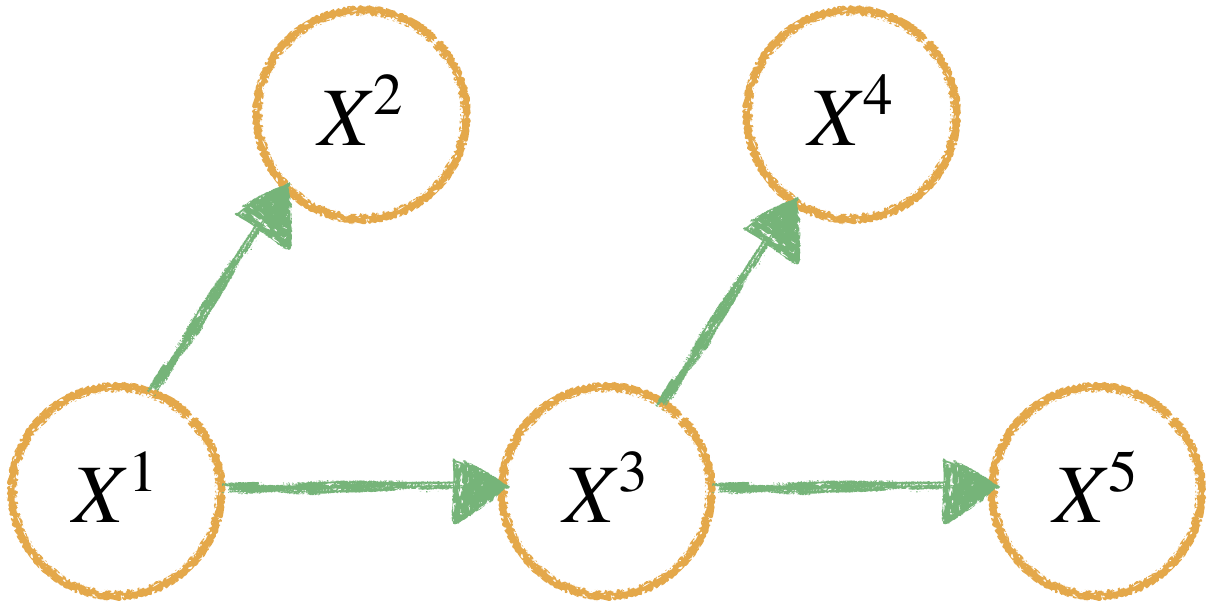}
        \label{fig:bayes2}
        \vskip -0.1in
    \end{minipage}
    }
    \subfigure[Overfitting]{
    \begin{minipage}{0.3\linewidth}
        \centering
        \includegraphics[width=1\linewidth]{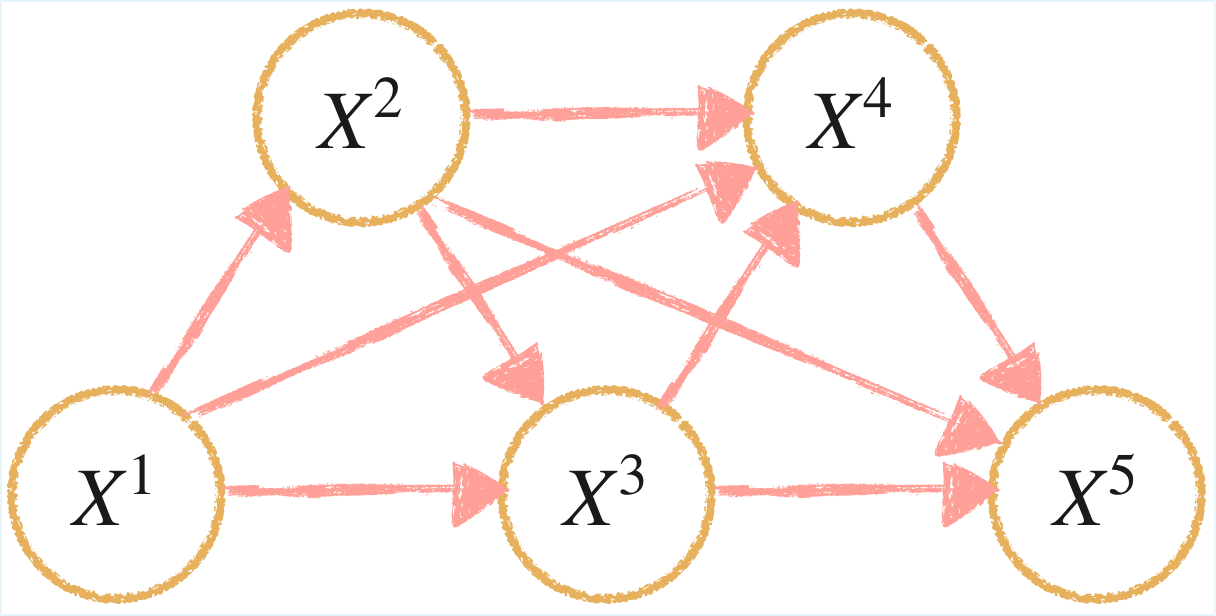}
        \label{fig:bayes3}
    \vskip -0.1in
    \end{minipage}
    }
    \vskip -0.15in
    \caption{Drawback illustration of marginal-based methods. }
    \label{fig:bayes-drawback}
    \vskip -0.1in
\end{figure}

In this paper, we address these challenges by proposing the Private Adaptive Generative Adversarial Network with Bayes Network Structure (\texttt{PrAda-GAN}), a novel approach for DP tabular data generation that integrates generative adversarial networks (GANs) with low-dimensional structural modeling. Our method employs a sequential generator architecture to capture complex dependencies among variables, while adaptively regularizing the learned structure to promote sparsity in the underlying Bayes network.
Compared to existing methods, \texttt{PrAda-GAN} offers two key advantages:
(1) it adapts to unknown low-dimensional structures without the need to tune sensitive hyperparameters; and
(2) it naturally supports unbounded continuous domains without requiring discretization.
Our contributions are: 
\begin{itemize}
    \item We revisit differentially private tabular data generation and identify key limitations of prior marginal-based approaches. To address these challenges, we propose \texttt{PrAda-GAN}, a novel method that combines the strengths of both marginal-based and GAN-based models. By introducing adaptive regularization, our approach implicitly recovers the underlying low-dimensional Bayes network structure during GAN training.

\item We provide a theoretical analysis of \texttt{PrAda-GAN}. First, we establish a bound on the distance between the trained generator and the optimal candidate set. Then, by analyzing the recovery of low-dimensional structures, we derive the first generalization bound for the Wasserstein distance between the generated and true data distributions. Notably, our results demonstrate that adaptive regularization leads to a significantly improved convergence rate.

\item We conduct extensive experiments on both synthetic and real-world datasets. Through a detailed analysis of parameter influence, we show that \texttt{PrAda-GAN} is robust to hyperparameter choices and supports our theoretical findings. We further benchmark \texttt{PrAda-GAN} against state-of-the-art baselines using measures of distributional similarity and downstream utility. The results demonstrate the empirical superiority of our approach.  
\end{itemize}

The remainder of the paper is organized as follows.
Section~\ref{sec:related work} reviews related work. 
Section~\ref{sec:method} introduces the proposed \texttt{PrAda-GAN} framework. Section~\ref{sec:theory} presents theoretical guarantees. Section~\ref{sec:exp} presents numerical results. Finally, Section~\ref{sec:con} concludes the paper.

\section{Related Work}
\label{sec:related work}

\paragraph{GAN Based Methods.}
Given the remarkable success of GANs \citep{goodfellow2014generative}, a growing body of works have explored their applications to differentially private data synthesis
\citep{jordon2018pate, liu2019ppgan, chen2020gs, long2021g, bie2023private,ma2023rdp}.
A common approach to privatizing GANs is to apply differentially private stochastic gradient descent (DPSGD) \citep{abadi2016deep} when updating the discriminator, a method known as DPGAN
\citep{xie2018differentially, zhang2018differentially, torkzadehmahani2019dp, zhao2024ctab}.
This technique is shown to be effective for generating private synthetic data across various domains.

\paragraph{Marginal-based Methods.}
Low-order marginals are widely adopted in tabular data synthesis due to their ability to capture essential low-dimensional structures while exhibiting low sensitivity under DP \citep{hu2024sok}. Marginal-based approaches typically select a set of marginals, inject calibrated noise, and reconstruct the joint distribution to generate synthetic data. A promising line of work employs Bayes networks to model conditional dependencies through a directed acyclic graph (DAG). Representative examples include \texttt{PrivBayes} \citep{zhang2017privbayes}, which learns the network structure from data and perturbs the conditional distributions, and AIM \citep{mckenna2022aim}, which enhances utility by tailoring the network structure to the available private marginals. Recent work established statistical foundations for marginal-based private synthesis \citep{li2023statistical}. 

An alternative line of research models data distributions using Markov random fields (MRFs)—undirected graphical models that capture symmetric relationships among variables. These methods estimate noisy low-order marginals and reconstruct the global distribution using inference techniques such as Gibbs sampling \citep{chen2015differentially, mckenna2019graphical, cai2021data}. By leveraging local Markov properties, MRF-based approaches are capable of representing complex dependencies while maintaining scalability and privacy.

\section{Proposed Method}
\label{sec:method} 
\subsection{Problem Definition}
\label{sec:private-synthesis}

We formalize the problem of synthesizing tabular data with DP. 
Suppose we have a random variable $X\in \mathcal{X}=\mathbb{R}^d$, whose distribution is $\mathrm{P}$. 
We have $n$ observations $\mathcal{D} = \{X_i\}_{i=1}^n$ from $\mathrm{P}$. 
Our target is to learn a generator $g_{{\bm\nu}}$, possibly parameterized by ${\bm\nu}$, such that for some easy-to-generate random variables $Z$, such as Gaussian or uniform random variables, the distribution of $g_{\mathbf{\nu}}(Z)$ is close to $\mathrm{P}$. 
The learned generator should preserve the privacy of training data, in the sense of differential privacy defined as follows.

\begin{definition}[Differential privacy \citep{dwork2006calibrating}]\label{def:dp}
A randomized algorithm $M: \mathcal{X}^n \rightarrow \mathcal{S}$ is $(\varepsilon, \delta)$-differentially private ($(\varepsilon, \delta)$-DP) if for every pair of adjacent data sets $\mathcal{D}, \mathcal{D}^{\prime} \in \mathcal{X}^n$ that differ by one datum and every $S \subseteq \mathcal{S}$, $\mathbb{P}(M(\mathbf{X}) \in S) \leq e^{\varepsilon} \cdot \mathbb{P}\left(M\left(\mathbf{X}^{\prime}\right) \in S\right)+\delta$,
where the probability measure $\mathbb{P}$ is induced by the randomness of $M$ only.    
\end{definition}
In this work, we consider the parameter space of ${\bm\nu}$ to be $\mathcal{S}$. Last, we define some notations. For any vector $x$, let $x^i$ denote the $i$-th element of $x$. 
 % Recall that for $1 \leq p < \infty$, the $L_p$-norm of $x = (x^1, \ldots, x^d)$ is defined by $\|x\|_p := (|x^1|^p + \cdots + |x^d|^p)^{1/p}$. 
We use the notation $a_n \lesssim b_n$ and $a_n \gtrsim b_n$ to denote that there exist positive constant $n_1$, $c$ and $c'$ such that $a_n \leq c b_n$ and $a_n \geq c' b_n$, for all $n \geq n_1$.
In addition, we denote $a_n\asymp b_n$ if $a_n\lesssim b_n$ and $b_n\lesssim a_n$.
Let $[n] = \{1,\ldots, n\}$. 
Let $a\vee b = \max (a,b)$ and $a\wedge b = \min (a,b)$. 
Besides, for any set $A\subset \mathbb{R}^d$, the diameter of $A$ is defined by $\mathrm{diam}(A):=\sup_{x,x'\in A}\|x-x'\|_2$. 
Let $\mathcal{W}(\mathrm{P}, \mathrm{Q})$ be the Wasserstein distance between distribution $\mathrm{P}$ and $\mathrm{Q}$. 

\subsection{Generators and Bayes Network}

We use an autoregressive approach to model the distribution. 
Specifically, let $\mathrm{P}[X]$ denote the joint distribution.
Then the joint distribution can be decomposed into 
\begin{align}\label{equ:autoregressive}
    \mathrm{P}[X] =
    \mathrm{P}[X^1] \cdot \prod_{j=2}^{d}  \mathrm{P}[X^j|X^1,\ldots, X^{j-1}].
\end{align}
To model \eqref{equ:autoregressive}, we use $d$ sub-generators to model the condition distributions of $X^j, j = 1, \ldots, d$, respectively. 
Specifically, let $Z^{j}, j=1,\ldots, d$ be some easy-to-general random variables, generated from $\mathrm{Q}$. 
Then, the $j$-th generator $g^j$ wants to model  
\begin{align}\label{equ:conditionalgenerator}
   g^{j}(X^1,\ldots, X^{j-1}, Z^j)  |X^{1:(j-1)} \sim  X^j  |X^{1:(j-1)}. 
\end{align}
Then the integrated generator is 
\begin{align*}
    g(Z) = \left(g^1(Z^1), g^2 (g^1(Z^1), Z^2), \ldots\right)^{\top}. 
\end{align*}

We adopt a similar assumption of Bayes network dependence as in \citep{zhang2017privbayes}, which significantly improves the efficiency of data generation.
A Bayes network over $\mathcal{X}$ provides a compact representation of the distribution by specifying conditional independencies among attributes in $\mathcal{X}$.
Specifically, a Bayes network is a DAG that represents each attribute in $\mathcal{X}$ as a node and uses directed edges to model the conditional dependencies between attributes.
The assumption is formally specified as follows. 

\begin{assumption}\label{asp:bayesnetwork}
    Assume that there exists a Bayes network $\mathcal{N} = \{(X^{j}, \Pi_j), j=1,\ldots, d\}$, such that : (i) $\Pi_j$ contains a subset of $[d]$, (ii) $X^{j}$ is only dependent on $\Pi_j$, and (iii) $j\notin \Pi_{i}$ for $i < j$. 
\end{assumption}

An example of a Bayes network is provided in Example  \ref{ex:bayesnetwork} in the appendix.
Under Assumption \ref{asp:bayesnetwork}, the autoregressive modeling \eqref{equ:autoregressive} can be further simplified as 
\begin{align}\label{equ:bayesnetautoregressive}
    \mathrm{P}[X] 
    = \mathrm{P}[X^1] \cdot \prod_{j=2}^{d}  \mathrm{P}[X^j| \Pi_j] =  \prod_{j=1}^{d}  \mathrm{P}[X^j| \Pi_j],
\end{align}
where we let $\Pi_{1} = \emptyset$. 
Under \eqref{equ:bayesnetautoregressive}, we can reduce the estimation in \eqref{equ:conditionalgenerator} into estimating the conditional relationship $X^j | \Pi_j$, which would reduce the intrinsic dimensionality.Denote the parameter of each generator $g^j$ by $\theta_j$.

One may argue that Assumption \eqref{asp:bayesnetwork} is too strong in practice. 
However, the marvelous performance of \citep{zhang2017privbayes} shows that the assumption is amenable since there exists a satisfiable $\mathcal{N}$ that captures most of the useful information in the conditional independence relationships at most of the times. 
Thus, an approximated $\mathcal{N}$ could be a nice surrogate to the true conditional relationship. 
Moreover, \citet{rojas2018invariant, zheng2018dags, wang2025dynamic} yield that the relationship can be well approximated. 
If there exist additional public datasets, whether in distribution or out of distribution, that share the same network structure, one can approximate the network and shift the order of the variable to let it satisfies Assumption \ref{asp:bayesnetwork}. 
The existence of such a similar public dataset is a common assumption in privacy-preserving machine learning \citep{yu2021large, ganesh2023public, ma2024decision, ma2024optimal, hod2025reallyneedpublicdata}.

\subsection{Private Generative Adversarial Network}

To generate high-dimensional, complex data (e.g., images), recent work has explored privatizing generative adversarial networks  \citep{goodfellow2014generative} to produce DP synthetic data via DPSGD \citep{abadi2016deep}, a line of research known as DPGAN \citep{xie2018differentially}. 
The common framework for GAN is the mini-max optimization problem 
\begin{align}\label{equ:objective-original-gan}
   \min_{g\in\mathcal{G}}\sup_{f\in \mathcal{F}} \bigg(\mathbb{E}_{X\sim \mathrm{P} }\left[ f(X)\right] -  \mathbb{E}_{Z\sim \mathrm{Q}} \left[ f(g(Z))\right]\bigg) ,
\end{align}
where $\mathcal{F}$ and $\mathcal{G}$ are the class of possible functions of discriminators and generators, and $Z$ is sampled from $\mathrm{Q}$. 
The solution of \eqref{equ:objective-original-gan} is obtained approximately through the minimization of an empirical objective
\begin{align}\label{equ:objective-original-gan-finite}
   \min_{g\in\mathcal{G}} \sup_{f\in \mathcal{F}} \left(\frac{1}{n}\sum_{i=1}^n f(X_i) -  \frac{1}{n_g}\sum_{i=1}^{n_g} f(g(Z_i))\right),
\end{align}
where an observation of $n$ real samples $\{X_{i}\}_{i=1}^n$ and $n_g$ easy-to-sample samples $\{Z_{i}\}_{i=1}^{n_g}$ are available.
Denote the parameter of $g$ and $f$ by $\bm \theta$ and $\bm \nu$, respectively.
We denote the objective
\begin{align}\label{equ:def-of-objective}
    \Delta({\bm\theta}, {\bm \nu}, D) = \frac{1}{n}\sum_{i=1}^n f_{\bm \nu}(X_i) -  \frac{1}{n_g}\sum_{i=1}^{n_g} f_{\bm \nu}\left(g_{\bm \theta}(Z_i)\right). 
\end{align}

Finding the exact solution of \eqref{equ:objective-original-gan-finite} is usually infeasible if the function class $\mathcal{F}$ and $\mathcal{G}$ are complex enough, as the optimization is usually non-convex. 
Thus, the optimization of \eqref{equ:objective-original-gan-finite} is conducted via iteratively minimizing w.r.t. $g$ and maximizing w.r.t. $f$ over the objective function. 
The update of $g_{\bm \theta}$ is done by stochastic gradient descent ${\bm \theta}^{t+1} = {\bm \theta}^{t} - \eta_{\theta} \nabla_{\bm \theta} \Delta({\bm \theta}^{t}, {\bm\nu}^{t}, D)$.
The maximization over $f$ should be conducted under the constraint of DP. 
Specifically, the update of $f_{\bm \nu}$ is conducted with DPSGD \citep{abadi2016deep}, denoted as ${\bm \nu}^{t+1}={\bm \nu}^{t} + \eta_{\nu}$ \texttt{PrivGrad}$(\nabla_{\bm \nu} \Delta({\bm \theta}^{t}, {\bm\nu}^{t}, D), \sigma)$, where $\sigma$ is the privacy noise level. 
Note that the minimization over $g$ is independent of the real data and is thus free of privacy concerns. 
The optimization of $\bm \theta$ and $\bm \nu$ conducted iteratively, meaning that one should update ${\bm \theta}^{t+1}$, use ${\bm \theta}^{t+1}$ to update ${\bm \nu}^{t+1}$, and continue this process. 
In practice, however, due to the performance drop brought by DPSGD during optimizing $f_{\bm \nu}$, the update of discriminator $f_{\bm \nu}$ should proceed multiple times before one update of the generator $g_{\bm \theta}$,  as argued by \citet{bie2022private}. 
Thus, we introduce an additional parameter $t_g$ to account for this relative number of iterations.

\subsection{Adaptive Feature Selection}

The determination of $\Pi_j$ is tricky. 
\citet{zhang2017privbayes} utilized a private version of a surrogate of mutual information to determine the network. 
This approach is, however, highly restrictive to the choice of the degree of the networks, as illustrated in Section \ref{sec:intro}. 
We propose a data-driven feature selection rule that leverages a sparsity-inducing penalty. 
Specifically, we consider 
$\bm{\theta}_j = (\bm{\xi}_j, {\bf W}_j)$, where ${\bf W}_j$ represents the linear feature map from $j$ dimensional space onto an arbitrary dimensional, say $L_j$, space, and $\bm{\nu}_j$ represents the parameters governing the map from this feature to the output. 
This is also known as single index models in the statistics community \citep{xia2008multiple}. 
This leads to 
\begin{align}\label{equ:singleindex}
    g^{j}  (X^1,\ldots, X^{j-1}, Z^j) = \tilde{g}^{j}_{\bm{\nu}_j } \left( {\bm W}_j^{\top } \left[ X^{1:{j-1}}, Z_j\right] \right). 
\end{align}
Here, ${\bm W}_j$ is a $\mathbb{R}^{j \times L_j}$ matrix. 
Model \eqref{equ:singleindex} includes a large class of functions, including neural networks.
To induce sparsity, we penalize the sum of $L_2$ norm of weights associated to each feature following \citep{feng2017sparse, dinh2020consistent, wang2024penalized}, formally 
\begin{align*}
    L({\bm W}_j) = \sum_{k = 1}^{j-1} \|{\bm W}_j^{k, :}\|_2.
\end{align*}
We refer to this penalty as a group lasso penalty, in the sense that it penalizes the entire set of weights corresponding to a feature \citep{yuan2006model, friedman2010note, simon2013sparse}. Unlike a standard $L_2$ penalty, it has a composite structure: an $L_2$ norm is applied within each group (a row of ${\bm W}_j$), followed by an $L_1$-type aggregation across groups (the collection of row norms). This structure enables all weights associated with a single feature to be shrunk to zero simultaneously, thereby inducing sparsity at the feature level.
Thus, we optimize a penalized version of objective function \eqref{equ:def-of-objective} 
\begin{align}\label{equ:objective-ours-finite}
 & \tilde{\Delta}({\bm\theta}, {\bm \nu}, D) \\
 = &   \frac{1}{n}\sum_{i=1}^n f(X_i) -  \frac{1}{n_g}\sum_{i=1}^{n_g} f(g(Z_i))+ \sum_{j = 1}^d \lambda_{j} L({\bm W}_j).\nonumber
\end{align}
Here, $\lambda_1,\ldots, \lambda_d$ are pre-determined tuning parameters.
The overall optimization process is illustrated in Algorithm \ref{alg:privdag-gan}.

\begin{algorithm}[h]
   	\caption{\texttt{PrAda-GAN}}
   	\label{alg:privdag-gan}
    {\bfseries Input: }{ Private data ${D} = \{X_i\}_{i=1}^n$.   }\\
     {\bfseries Parameters: }{ Learning rate $\eta_{\bm \theta}, \eta_{\bm \nu}$, iteration number $T$, relative number of iterations $t_g$,  
     threshold $\tau$, regularizations $\lambda_1,\ldots, \lambda_d$, noise level $\sigma$.  }\\
     {\bfseries Initialization: }{ Initial parameters $\bm \theta^1$, $\bm \nu^1$.   }\\
     \For{$t$ in $[T]$}{
     \texttt{\# Update $\nu$ every step.  }\\
     ${\bm \nu}^{t+1}={\bm \nu}^{t} + \eta_{\nu}\cdot $ \texttt{PrivGrad}$(\nabla_{\bm \nu} \tilde{\Delta}({\bm \theta}^{t}, {\bm\nu}^{t}, D), \sigma)$  \\
    \texttt{\# Update $\theta$ every $t_g$ steps.  } 
     \\
     ${\bm \theta}^{t+1} = {\bm \theta}^t - \eta_{\theta} \cdot \eins \left(t\mod t_g \equiv 0\right) \cdot \nabla_{\bm \theta} \tilde{\Delta}({\bm \theta}^{t}, {\bm\nu}^{t+1}, D)$. 
    }
     
    {\bfseries Output: }{Generator $g_{\bm \theta^{T+1}}$ and discriminator $f_{\bm \nu^{T+1}}$.}
 % \vskip -0.2in
   \end{algorithm} 

\vskip -0.8in

\section{Theoretical Guarantees}
\label{sec:theory}

\subsection{Assumptions}

Denote the assumed function classes for generators and discriminators as $\mathcal{G}$ and $\mathcal{F}$.

\begin{assumption}[\textbf{Effective Private Optimization}]\label{asp:private-optimization}
The output of Algorithm \ref{alg:privdag-gan} achieves an optimization result of 
\begin{align}
    \tilde{\Delta}({\bm \theta}^{T+1}, {\bm \nu}^{T+1}, D) \leq \inf_{{\bm \theta}\in \mathcal{G}} \sup_{{\bm \nu}\in \mathcal{F}}\tilde{\Delta}({\bm \theta}, {\bm \nu}, D) + \mathcal{E}  
\end{align}
for some $\mathcal{E}>0$. 
Suppose $\mathcal{E}$ is diminishing as $n\to \infty$. 
\end{assumption}

Assumption~\ref{asp:private-optimization} is present to control the performance drop brought by the inexact optimization of \eqref{equ:objective-ours-finite} using DPSGD. 
There are mainly two error sources in $\mathcal{E}$, guaranteed respectively by 
\begin{align}\label{equ:private-optimizaiton-error}
    \tilde{\Delta}({\bm \theta}^{T+1}, {\bm \nu}^{T+1}, D) \leq \tilde{\Delta}({\bm \theta}_{np}, {\bm \nu}_{np}, D) + \mathcal{E}_{priv}
\end{align} 
and 
\begin{align}\label{equ:optimizaiton-error}
     \tilde{\Delta}({\bm \theta}_{np}, {\bm \nu}_{np}, D) \leq \inf_{\bm \theta} \sup_{\bm \nu}\tilde{\Delta}({\bm \theta}, {\bm \nu}, D) +  \mathcal{E}_{opt}, 
\end{align} 
and thus $\mathcal{E} = \mathcal{E}_{priv} +  \mathcal{E}_{opt}$ satisfies \eqref{asp:private-optimization}. 
Specifically, the presence of \eqref{equ:private-optimizaiton-error} is due to the weakened optimization performance of DPSGD compared to SGD. 
Namely, each ${\bm \nu}^{t}$ achieves (in expectation) a larger $\tilde{\Delta}$ than ${\bm \nu}_{np}^{t}$, and thus has a smaller discriminative capability. 
Thus, perceiving information from a weaker discriminator, the generator $g_{\bm \theta}$ is, in general, less effective.
Other approaches to mitigate this issue include a more powerful remedy of DPSGD \citep{bu2023automatic, liu2025towards}, longer discriminator training \citep{bie2023private}, or additional information assistance \citep{yu2021large, yu2022differentially}. 
In general, $\mathcal{E}_{priv}$ should be smaller when $\varepsilon$ is large. 
For \eqref{equ:optimizaiton-error}, error $\mathcal{E}_{opt}$ represents the gap between optimization SGD and the exact solution, and is often referred to as optimization error. 
Further verification of the assumption and establishment of bounds on $\mathcal{E}$ should refer to the literature in training dynamics of GANs \citep{biau2020some, xu2020understanding, huang2022error} and differentially private stochastic optimization \citep{bassily2021differentially, su2024faster}.

\begin{assumption}[\textbf{Finite Moment}]\label{asp:finite-moment}
 Let $X \sim \mathrm{P}$. 
 For some $c_m>0$, $X$ satisfies the first moment tail condition $\mathbb{E}\left[\|X\| \eins\left(\|X\|>\log t\right)\right] = O\left(t^{-(\log t)^{c_1} /d }\right)$, for any $t \geq 1$.
\end{assumption}

% \begin{assumption}[\textbf{Signal Strength}]\label{asp:signal-strength}
% There exists a constant $c_s>0$ such that
% \begin{align}
%     \min _{j \in \Pi_{j' - 1}} \left|\mathbb{E}_{X}\left[X^{j'} \mid X^{:(j' - 1), -j}\right]-\mathbb{E}_{X}[X^{j'} \mid X^{:(j'-1)}]\right| \geq c_s,
% \end{align} where $X^{:(j' - 1), -j} =(X^{1}, \ldots, X^{j-1}, 0, X^{j+1}, \ldots, X^{j'-1})$.
% \end{assumption}

% Assumptions \ref{asp:finite-moment} and \ref{asp:signal-strength} are standard conditions used in the literature of statistical learning with sparsity \cite{fan2001variable, ma2024better, wang2024penalized}.
% Assumption \ref{asp:finite-moment} requires $\mathrm{P}$ is concentrated, and is commonly satisfied, for instance by sub-Gaussian variables. 
% Assumption \ref{asp:signal-strength} requires that the dependence between variables has a minimal strength. 

\begin{assumption}[\textbf{Analytic Generator}]\label{asp:analytic}
   The function $g_{\bm \theta}$ is analytic with respect to $\bm \theta$. 
\end{assumption}

Assumption \ref{asp:finite-moment} requires $\mathrm{P}$ is concentrated, and is commonly satisfied, for instance by sub-Gaussian variables. 
Assumption \ref{asp:analytic} is easily satisfied with several choices of activation functions, including the classic ones such as the linear function, tanh function, and sigmoid function, as well as the newly developed ReLU-type activation functions such as GeLU, ELU, and PELU.

% To further bound the capacity of the function sets, we need to introduce the following fundamental descriptions of \textit{covering number} which enables an approximation of an infinite set by finite subsets.
% \begin{definition}[\textbf{Covering Number}]
% Let $(\mathcal{X}, d(\cdot, \cdot))$ be a metric space and $A \subset \mathcal{X}$. For $t>0$, the $t$-covering number of $A$ is denoted as 
% \begin{align*}
% \mathcal{N}(A, t) 
% := \min \biggl\{ n \geq 1 : \exists x_1, \ldots, x_n \in \mathcal{X} \text{ such that } A \subset \bigcup^n_{i=1} B(x_i, t) \biggr\},
% \end{align*}
% where $B(x, t) := \{ x' \in \mathcal{X} : d(x, x') \leq t \}$.
% \end{definition}

\subsection{Parameter Estimation and Feature Selection}

In this section, we demonstrate that under the assumptions outlined in the previous section, the generator obtained from Algorithm \ref{alg:privdag-gan} exhibits superiority in both parameter estimation and feature selection. We first define the relevant criteria for evaluation.
For parameter estimation, we consider the quantity of $d\left({\boldsymbol{\theta}}^{T+1}, \Theta\right)$ defined as follows.
Let
\begin{align}
\Theta = \left\{{\bm \theta} \in \mathcal{G} \mid g_{\bm \theta} \in {\arg \min}_{\bm \theta } \mathcal{W} \left( \mathrm{P}_{g_{\bm \theta}(Z)}, \mathrm{P}_X\right)\right\}, 
\end{align}
denote the set of feasible parameters that achieve the minimum Wasserstein distance within $\mathcal{G}$. 
Then, 
\begin{align*}
    d\left({\boldsymbol{\theta}}^{T+1}, \Theta\right) =  { \min}_{{\bm \theta}\in \Theta} d({\bm \theta}^{T+1}, {\bm \theta}), 
\end{align*}
represents the closest distance between ${\bm \theta}^{T+1}$ and any element in $\Theta$. 
For feature selection, we evaluate the sum of norms associated with unimportant features (i.e., those outside $\Pi_j$): $\sum_{k \in \Pi_j^c }\left\| \left({\bm W}_j^{T+1}\right)^{k}\right\|_2$. 
Both quantities are expected to be small. 
Define $\mathcal{F}_{Lip, 1}$ be the 1-Lipschitz class on $\mathbb{R}^d$. 
Given these definitions, we have the following theorem.

\begin{theorem}\label{thm:feature-selection}
    Suppose that Assumptions \ref{asp:bayesnetwork}, \ref{asp:private-optimization}, \ref{asp:finite-moment}, and \ref{asp:analytic} hold. 
    Suppose that $\lambda_j=d^{-1}\psi^{1/2}_{n,d} , j = 1,\ldots,d$,
    where we define 
\begin{align}
\nonumber
\psi_{n, d} = &
(\sqrt{d} + \log n) n^{- \frac{1}{d}}
+    
\mathcal{E}\\
+& \sup_{f \in \mathcal{F}_{Lip, 1}}  \inf_{f' \in \mathcal{F}} \left\|f - f'\right\|_{\infty}. 
\label{equ:def-of-psi}
\end{align}
Then, the fitted parameters $  {\bm \theta}^{T+1}$ and ${\bm W }^{T} \in  {\bm \theta}^{T+1}$ from Algorithm \ref{alg:privdag-gan} (without comment) satisfy
\begin{align}\label{equ:thm-bound-of-feature-selection}
    \mathbb{E}[  \sum_{k \in \Pi_j^c }\| ({\bm W}_j^{T+1})^{k}\|_2] \lesssim  d \cdot \psi_{n,d}^{\frac{1}{2(a-1)}}, \;\;\;\; j\in [d],
\end{align}
as well as 
\begin{align}\label{equ:thm-bound-of-estimation}
    \mathbb{E}\left[d\left({\boldsymbol{\theta}}^{T+1}, \Theta\right) \right] \lesssim \psi_{n,d}^{\frac{1}{2(a-1)}}.
\end{align}
Here, $a>2$ is a positive constant.
The expectation $\mathbb{E}$ is taken w.r.t. training samples $\left\{X_i\right\}_{i=1}^{n}$ and $\left\{ Z_i \right \}_{ i=1}^{ n}$.
\end{theorem}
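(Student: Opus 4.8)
The plan is to bound the two quantities in two stages: first control a population-level objective gap for the trained parameters, then translate this gap into parameter distance and feature-selection error via an identifiability-type argument exploiting Assumption~\ref{asp:analytic}.

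\emph{Stage 1: from the optimization guarantee to a population bound.} Starting from Assumption~\ref{asp:private-optimization}, I would decompose $\tilde\Delta({\bm\theta}^{T+1},{\bm\nu}^{T+1},D)$ into the GAN term $\Delta$ and the penalty $\sum_j\lambda_j L({\bm W}_j)$. Comparing against a well-chosen competitor ${\bm\theta}^\star\in\Theta$ (whose weight matrices respect the true Bayes-network supports $\Pi_j$, so its penalty is controlled) on the right-hand side, and using that $\sup_{\bm\nu\in\mathcal F}\Delta$ approximates the $1$-Lipschitz IPM (i.e.\ the Wasserstein distance) up to the discriminator-approximation term $\sup_{f\in\mathcal F_{Lip,1}}\inf_{f'\in\mathcal F}\|f-f'\|_\infty$ appearing in $\psi_{n,d}$, I obtain
\begin{align*}
\mathcal W\big(\mathrm P_{g_{{\bm\theta}^{T+1}}(Z)},\mathrm P_X\big) + \sum_{j}\lambda_j L({\bm W}^{T+1}_j)
\;\lesssim\; \mathcal W\big(\mathrm P_{g_{{\bm\theta}^\star}(Z)},\mathrm P_X\big) + \sum_{j}\lambda_j L({\bm W}^\star_j) + (\text{stochastic terms}) + \mathcal E.
\end{align*}
The stochastic terms are the empirical-process fluctuations $|\Delta - (\text{population version})|$ for both the real sample and the generated sample; these are handled by a Dudley/chaining bound for $1$-Lipschitz functions on $\mathbb R^d$, which together with the moment tail control of Assumption~\ref{asp:finite-moment} yields the $(\sqrt d+\log n)n^{-1/d}$ rate — this is exactly the first summand of $\psi_{n,d}$. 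With $\lambda_j=d^{-1}\psi_{n,d}^{1/2}$ and ${\bm\theta}^\star$ chosen so that $L({\bm W}^\star_j)=O(1)$ (possible since ${\bm\theta}^\star$ lies in $\Theta$ and thus in $\mathcal G$), the right side is $O(\psi_{n,d})$. Hence both the excess Wasserstein distance of ${\bm\theta}^{T+1}$ and, separately, $\sum_j\lambda_j L({\bm W}^{T+1}_j)$ are $O(\psi_{n,d})$, which already gives $\mathbb E[\sum_{k\in\Pi_j^c}\|({\bm W}^{T+1}_j)^k\|_2]\le L({\bm W}^{T+1}_j)\lesssim d\,\psi_{n,d}^{1/2}$ — but this is weaker than the claimed $d\,\psi_{n,d}^{1/(2(a-1))}$, so Stage 2 is needed to sharpen it.

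\emph{Stage 2: from small excess Wasserstein distance to parameter and support recovery.} Here the key is that $d({\bm\theta}^{T+1},\Theta)$ is small whenever the excess Wasserstein distance is small, quantified by a \L ojasiewicz-type inequality: since $g_{\bm\theta}$ is analytic in ${\bm\theta}$ (Assumption~\ref{asp:analytic}), the map ${\bm\theta}\mapsto\mathcal W(\mathrm P_{g_{\bm\theta}(Z)},\mathrm P_X)$ is subanalytic, so on a bounded parameter region there is a constant $a>2$ with $\mathcal W(\mathrm P_{g_{\bm\theta}(Z)},\mathrm P_X)-\min \;\gtrsim\; d({\bm\theta},\Theta)^{2(a-1)}$. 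Inverting this with the Stage-1 bound on the excess Wasserstein distance gives $\mathbb E[d({\bm\theta}^{T+1},\Theta)]\lesssim\psi_{n,d}^{1/(2(a-1))}$, which is \eqref{equ:thm-bound-of-estimation}. For \eqref{equ:thm-bound-of-feature-selection}, note every ${\bm\theta}\in\Theta$ has ${\bm W}_j$ supported on $\Pi_j$ (a parameter putting mass on a coordinate outside $\Pi_j$ cannot improve the fit under Assumption~\ref{asp:bayesnetwork}, so it is not needed to attain the minimum — here one uses that the minimizing set $\Theta$ can be taken to consist of sparse parameters, or that all of $\Theta$ is sparse by the conditional-independence structure). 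Then $\sum_{k\in\Pi_j^c}\|({\bm W}^{T+1}_j)^k\|_2 = \sum_{k\in\Pi_j^c}\|({\bm W}^{T+1}_j)^k - ({\bm W}^\star_j)^k\|_2\le \sqrt{d}\,\|{\bm\theta}^{T+1}-{\bm\theta}^\star\|\le\sqrt d\, d({\bm\theta}^{T+1},\Theta)$ after taking ${\bm\theta}^\star$ the nearest point; combined with Jensen and the estimation bound this is $\lesssim d\,\psi_{n,d}^{1/(2(a-1))}$, matching \eqref{equ:thm-bound-of-feature-selection}.

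\emph{Main obstacle.} The delicate point is Stage~2's \L ojasiewicz/subanalyticity step: one must verify that the Wasserstein objective inherits a genuine \L ojasiewicz inequality from the analyticity of $g_{\bm\theta}$ (the Wasserstein distance is not itself analytic, so this requires a subanalytic-composition argument on a compact parameter set, plus an argument that the iterates ${\bm\theta}^{T+1}$ stay in such a set), and that the exponent can be packaged as a single constant $a>2$ uniformly. A secondary subtlety is ensuring $\Theta$ really consists only of $\Pi_j$-supported weight matrices — if $\Theta$ contained non-sparse minimizers, the penalty comparison in Stage~1 would still force ${\bm W}^{T+1}_j$ toward \emph{some} minimizer, but the identification of $\Theta$ with the sparse set is what makes the bound on $\sum_{k\in\Pi_j^c}$ meaningful; this needs Assumption~\ref{asp:bayesnetwork} to be used not just for the competitor but to characterize all of $\Theta$.
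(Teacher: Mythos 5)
The two-stage architecture you propose (population excess objective bound, then a \L ojasiewicz-type inversion to get parameter distance) does mirror the paper's strategy, and you correctly identify the \L ojasiewicz/subanalyticity step as the key lever afforded by Assumption~\ref{asp:analytic}. However, there is a real gap in Stage~1, and a smaller one in the Stage~2 feature-selection argument.

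\paragraph{Stage 1 does not produce $O(\psi_{n,d})$ on the right side.}
You compare the penalized empirical objective at $\bm\theta^{T+1}$ to a competitor $\bm\theta^\star\in\Theta$ and claim the right-hand side is $O(\psi_{n,d})$. But with $\lambda_j = d^{-1}\psi_{n,d}^{1/2}$ and $L(\bm W^\star_j)=O(1)$, the competitor's penalty term is $\sum_j\lambda_j L(\bm W_j^\star)\asymp \psi_{n,d}^{1/2}$, which dominates $\psi_{n,d}$ when $\psi_{n,d}<1$. So the naive comparison only yields $\sum_j\lambda_j L(\bm W_j^{T+1}) \lesssim \psi_{n,d}^{1/2}$ and hence $L(\bm W_j^{T+1})\lesssim d$, which is vacuous; the $d\,\psi_{n,d}^{1/2}$ you state does not follow. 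The paper avoids this by never separately bounding $\sum_j\lambda_j L(\bm W_j^{T+1})$: it bounds the penalty \emph{difference}
$\sum_j\lambda_j\bigl[L(\widehat{\bm W}_j) - L(\bm W_j^{T+1})\bigr]$
using the triangle inequality (giving $\sum_j\lambda_j\sqrt{j}\,\|\widehat{\bm\theta}_j-\bm\theta^{T+1}_j\|_2$), then Cauchy–Schwarz and, crucially, Young's inequality with the \L ojasiewicz exponent $a$. Young's inequality splits the cross term into $\tfrac{1}{a}\|\widehat{\bm\theta}-\bm\theta^{T+1}\|_2^a$ — which is absorbed into the left-hand side $\mathbb{E}[d(\bm\theta^{T+1},\Theta)^a]$ — plus $(\sum_j\lambda_j^2 j)^{a/(2(a-1))}\asymp\psi_{n,d}^{a/(2(a-1))}$, which is what produces the $\psi_{n,d}^{1/(2(a-1))}$ rate after taking $a$-th roots. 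This absorption trick is the missing key idea; without it you get the weaker exponent $1/(2a)$ at best, not $1/(2(a-1))$. (Also, as a minor point, $d\,\psi_{n,d}^{1/2}$ is actually \emph{stronger}, not weaker, than $d\,\psi_{n,d}^{1/(2(a-1))}$ for $a>2$ and $\psi_{n,d}<1$, so even the diagnosis of why Stage~2 is needed is inverted.)

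\paragraph{Stage 2's feature-selection bound presumes too much about $\Theta$.}
You deduce $\sum_{k\in\Pi_j^c}\|(\bm W^{T+1}_j)^k\|_2 \le \sqrt{d}\, d(\bm\theta^{T+1},\Theta)$ by taking $\bm\theta^\star$ the nearest point in $\Theta$ and assuming $(\bm W^\star_j)^k=0$ for $k\in\Pi_j^c$. The paper does not assume that every element of $\Theta$ — and in particular the nearest one — is $\Pi_j$-supported. What the paper uses (citing Lemma 6 of \citet{wang2024penalized} and Lemma 3.1 of \citet{dinh2020consistent}) is that the sparse projection $\kappa(\widehat{\bm\theta})$ of the nearest minimizer remains in $\Theta$; the oracle inequality in \eqref{equ:proof-variable-selection-oracle} is stated against $\kappa(\widehat{\bm\theta})$, while the parameter-distance terms are expressed against $\widehat{\bm\theta}$. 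Conflating the two requires knowing $\Theta$ is entirely sparse, which is exactly the subtlety you flag at the end but then implicitly assume; the paper's route sidesteps it.

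In short, your outline matches the paper's high-level plan (including the \L ojasiewicz inequality, which the paper also defers to \citet{wang2024penalized}), but the specific estimate that yields the $\psi_{n,d}^{1/(2(a-1))}$ rate — the Young-inequality self-absorption of $\|\widehat{\bm\theta}-\bm\theta^{T+1}\|_2^a$ — is absent, and the feature-selection deduction needs $\kappa(\widehat{\bm\theta})$ rather than a sparse $\bm\theta^\star$.
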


The theorem are interpreted as follows. 
Both bounds in \eqref{equ:thm-bound-of-estimation} and \eqref{equ:thm-bound-of-feature-selection} depend on $\psi_{n,d}$, which is expected to diminish. 
This quantity consists of three components: the estimation error, 
the approximation error, and the private optimization error. 
The estimation error arises due to the discrepancy between the empirical and population objective function, which decreases with larger $n$ and smaller $d$. 
The approximation error measures how well the discriminator function class $\mathcal{F}$ approximates the 1-Lipschitz class, which should be smaller for $\mathcal{F}$ that is more expressive. 
The private optimization error reflects the gap between privately optimized and globally optimal objective functions. 
See also Assumption \ref{asp:private-optimization} and comments below. 
Using developed tools \citep{abadi2016deep, bu2023automatic, liu2025towards}, one should expect this term to diminish with larger $n$, larger $\varepsilon$, and smaller $d$, yet increase when the function classes $\mathcal{G}$ and $\mathcal{F}$ are most complex, i.e., harder to optimize. 
Thus, when $n$, $\varepsilon$, and $d$ are fixed, the choice of function classes and optimizers should balance the approximation and optimization errors.

Combining the three terms together, Theorem \ref{thm:feature-selection} suggests that when $n\to \infty$, and $d,\varepsilon$ both remain moderately small, $\psi_{n,d}$ would diminish. 
Consequently, the obtained $\bm \theta^{T+1}$ not only approaches the optimal parameter set $\Theta$, but also converges to a well-behaved class with zero dependence on redundant variables.
Theorem \ref{thm:feature-selection} applies to general function classes $\mathcal{G}$ and $\mathcal{F}$. 
Specific convergence rates for neural networks can be derived as in \citet{dinh2020consistent, wang2024penalized}. 
Also, for the rate $n^{-\frac{1}{d}}$ to diminish, we require $d = o(\log n)$. 
Here, we implicitly assume $d=O(\log^{c_1} n)$ for $0<c_d<1$.

\subsection{Convergence Leveraging Sparsity}

One should note that the closeness of $d\left({\boldsymbol{\theta}}^{T+1}, \Theta\right)$ does not imply any convergence result for $\mathcal{W} \left( \mathrm{P}_{g_{\bm \theta}(Z)}, \mathrm{P}_X\right)$. 
Moreover, the convergence in \eqref{equ:thm-bound-of-estimation} does not reflect the improvement afforded by sparsity (i.e., Assumption \ref{asp:bayesnetwork}).
In this section, we propose a variant of Algorithm \ref{alg:privdag-gan} by incorporating weight thresholding.
Specifically, we exclude variables with small parameter norms by setting their associated weights to zero.
See the commented code in lines 9–18 of Algorithm \ref{alg:privdag-gan-two-step} in the appendix.
By leveraging this approach, we establish a convergence rate that depends on the underlying sparsity structure $|\Pi_j|, j\in [d]$, rather than the full dimensionality.

For the theoretical analysis, we introduce an additional assumption regarding the second-phase optimization, analogous to Assumption \ref{asp:private-optimization}. 
Let $\overline{\mathcal{G}}$ denote the class of functions where weights $\bm W$ are set to zero if $\left\|(\bm W^{T+1}_j)^{k,:}\right\|_2 \leq \tau$. 
Note that this class is data-dependent.

\begin{assumption}[\textbf{Constraint Effectiveness}]\label{asp:private-optimization-second-stage}
The commented output of Algorithm \ref{alg:privdag-gan}, i.e. $\overline{\bm \nu}^{T+1}$ and $\overline{\bm \theta}^{T+1}$ achieves an optimization result of 
\begin{align}
    {\Delta}(\overline{\bm \theta}^{T+1}, \overline{\bm \nu}^{T+1}, D) \leq \inf_{{\bm \theta}\in \overline{\mathcal{G}}} \sup_{{\bm \nu}\in \mathcal{F}}
    {\Delta}({\bm \theta}, {\bm \nu}, D) + \overline{\mathcal{E}}  
\end{align}
for some $\overline{\mathcal{E}} > 0$. 
Suppose $\overline{\mathcal{E}}$ is diminishing as $n\to \infty$. 
\end{assumption}

This assumption ensures satisfactory optimization performance over the post-feature-selection function class $\overline{\mathcal{G}}$ and is no more restrictive than Assumption \ref{asp:private-optimization}.
In practice, however, one may prefer to run Algorithm \ref{alg:privdag-gan} without explicitly applying the thresholding step (as commented in the code), provided the influence of unimportant variables remains negligible.
However, if the true dependence structure is highly sparse, the improvements brought by this feature selection could be significant, e.g., \citep{ma2024better, kent2024rate}. 
Notably, this guarantee holds even if we simply clip all weights $\left\|(\bm W^{T+1}_j )^{k, :} \right \|_2\leq \tau$ to zero without re-optimization.
Let $g_{\bm \theta^{T+1}_{clip}}$ denote the resulting clipped generator .
Then $g_{\bm \theta^{T+1}_{clip}}\in \overline{\mathcal{G}}$ and under sufficiently small penalization parameters $\lambda_j$, $g_{\bm \theta^{T+1}_{clip}}$ satisfies Assumption \ref{asp:private-optimization-second-stage}.

The following theorem establishes a distance convergence guarantee for the commented output $\overline{\bm \theta}^{T+1}$ of Algorithm \ref{alg:privdag-gan}.
Define the ancestor set $J_{\infty}^j$ of feature $j$ by $J_0^j = \{j\}$, $J_{k+1}^j = \bigcup_{j \in J_{k}^j} \Pi_j $, and $J_{\infty}^j = \lim_{k\to \infty} J_{k}^j$.

\begin{theorem}\label{thm:estimation}
 Suppose that the assumptions in Theorem \ref{thm:feature-selection} and Assumption \ref{asp:private-optimization-second-stage} hold. 
    Suppose we set $\lambda_j=d^{-1}\psi^{1/2}_{n,s} , j = 1,\ldots,d$,
    where we define 
\begin{align}
\nonumber
\psi_{n, s} = &
(\sqrt{d} + \log n)\; n^{- \frac{1}{s}}
+ \overline{\mathcal{E}}
\\
+&
\label{equ:def-of-psi-phase-2}
\sup_{f \in \mathcal{F}_{Lip, 1}}  \inf_{f' \in \mathcal{F}} \left\|f - f'\right\|_{\infty}. 
\end{align}
Here we have $s = \max_{j\in[d]}J_{\infty}^j$. 
Then, with probability $1$, there exists a suitable choice of $\tau$ such that the fitted parameters $  \overline{\bm \theta}^{T+1}$ and $ \overline{\bm W }^{T+1} \in  \overline{\bm \theta}^{T+1}$ from Algorithm \ref{alg:privdag-gan} (with comment) satisfy 
\begin{align}
   \mathbb{E}\left[ \mathcal{W} \left( \mathrm{P}_{g_{\overline{\bm \theta}^{T+1}}(Z)}, \mathrm{P}_X\right)\right] \lesssim \psi_{n,s}. 
\end{align}
The expectation $\mathbb{E}$ is taken w.r.t.  $\left\{X_i\right\}_{i=1}^{n}$ and $\left\{ Z_i \right \}_{ i=1}^{ n}$.
\end{theorem}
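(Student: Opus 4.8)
The plan is to run the standard oracle-inequality decomposition for Wasserstein GANs, but to carry out every error term over the feature-selected class $\overline{\mathcal G}$ rather than over $\mathcal G$, so that the intrinsic dimension entering the statistical term drops from $d$ to $s$. Writing $W = \mathcal W\!\left(\mathrm P_{g_{\overline{\bm\theta}^{T+1}}(Z)},\mathrm P_X\right)=\sup_{f\in\mathcal F_{Lip,1}}\bigl(\mathbb E_{X}[f]-\mathbb E_{Z}[f(g_{\overline{\bm\theta}^{T+1}}(Z))]\bigr)$ and denoting by $\hat{\mathbb E}_n$, $\hat{\mathbb E}_{n_g}$ the empirical averages over $\{X_i\}$ and over $\{g(Z_i)\}$, I would first pass from $\mathcal F_{Lip,1}$ to $\mathcal F$ at the cost of $2\sup_{f\in\mathcal F_{Lip,1}}\inf_{f'\in\mathcal F}\|f-f'\|_\infty$ (the third term of $\psi_{n,s}$), then insert the empirical measures to split $\sup_{f\in\mathcal F}\bigl(\mathbb E_{X}[f]-\mathbb E_{Z}[f(g(Z))]\bigr)$ into a data-side deviation $\mathrm{I}=\sup_{f\in\mathcal F}|\mathbb E_{X}[f]-\hat{\mathbb E}_n[f]|$, the empirical minimax value $\mathrm{II}=\sup_{\bm\nu}\Delta(\overline{\bm\theta}^{T+1},\bm\nu,D)$, and a generator-side deviation $\mathrm{III}=\sup_{f\in\mathcal F,\,g\in\overline{\mathcal G}}|\mathbb E_{Z}[f(g(Z))]-\hat{\mathbb E}_{n_g}[f(g(Z))]|$. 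Term $\mathrm{II}$ is controlled by Assumption~\ref{asp:private-optimization-second-stage} (read as certifying that $\overline{\bm\nu}^{T+1}$ also nearly maximizes $\Delta(\overline{\bm\theta}^{T+1},\cdot,D)$), giving $\mathrm{II}\le\inf_{\bm\theta\in\overline{\mathcal G}}\sup_{\bm\nu}\Delta(\bm\theta,\bm\nu,D)+\overline{\mathcal E}$, and the right-hand infimum is bounded by reversing the same insertion for an oracle $\bm\theta^\dagger\in\overline{\mathcal G}$ realizing the true Bayes structure, producing $\inf_{\bm\theta\in\overline{\mathcal G}}\mathcal W(\mathrm P_{g_{\bm\theta}(Z)},\mathrm P_X)$ plus two further deviation terms of the same type as $\mathrm{I}$ and $\mathrm{III}$. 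Thus $W\lesssim 2\sup_{f\in\mathcal F_{Lip,1}}\inf_{f'\in\mathcal F}\|f-f'\|_\infty+\overline{\mathcal E}+\inf_{\bm\theta\in\overline{\mathcal G}}\mathcal W(\mathrm P_{g_{\bm\theta}(Z)},\mathrm P_X)+(\text{statistical deviations})$.

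The second ingredient is the feature-selection step. I would invoke the feature-selection half of Theorem~\ref{thm:feature-selection}, run at the penalty level $\lambda_j=d^{-1}\psi_{n,s}^{1/2}$ prescribed here — a self-consistent choice, since the estimation error incurred over the sparse class $\overline{\mathcal G}$ is itself $O((\sqrt d+\log n)\,n^{-1/s})$ — to conclude that $\mathbb E\bigl[\sum_{k\in\Pi_j^c}\|(\bm W_j^{T+1})^{k,:}\|_2\bigr]$ is of order $d\,\psi_{n,s}^{1/(2(a-1))}$, so by a Markov/Borel–Cantelli argument the row norms on non-parent coordinates are, with probability one, eventually separated from those on the true parents. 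Hence one can choose a data-dependent threshold $\tau$ for which the surviving set $\widehat\Pi_j=\{k:\|(\bm W_j^{T+1})^{k,:}\|_2>\tau\}$ satisfies $\widehat\Pi_j\subseteq\Pi_j$ (no false positives) while still retaining every true parent (no false negatives, using analyticity in Assumption~\ref{asp:analytic} to exclude degenerate vanishing of a needed weight). Consequently every $g\in\overline{\mathcal G}$ has the property that its $j$-th output coordinate is a function of only the noise variables indexed by its ancestor closure $\widehat J_\infty^j\subseteq J_\infty^j$, so $\overline{\mathcal G}$ has ``block dimension'' at most $s$ along each coordinate; this simultaneously places the oracle $\bm\theta^\dagger$ inside $\overline{\mathcal G}$ (so $\inf_{\bm\theta\in\overline{\mathcal G}}\mathcal W(\mathrm P_{g_{\bm\theta}(Z)},\mathrm P_X)$ is of lower order under an expressive sub-generator family) and reduces the complexity entering the deviations.

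The third and technically central ingredient is the sparsity-exploiting control of the deviation terms, which is exactly where the $n^{-1/d}$ of Theorem~\ref{thm:feature-selection} improves to $n^{-1/s}$. Since $\mathrm P_X$ factorizes along the Bayes network (Assumption~\ref{asp:bayesnetwork}, equation~\eqref{equ:bayesnetautoregressive}) and, after thresholding, $\mathrm P_{g_{\bm\theta}(Z)}$ factorizes along a sub-network of it, I would build a coupling between $g(Z)$ and $X$ coordinate-by-coordinate along a topological order and propagate the transport cost through ancestors; Lipschitz stability of the sub-generators turns this into a bound of the form $\mathcal W(\mathrm P_{g(Z)},\mathrm P_X)\lesssim\sum_{j=1}^d\mathbb E\bigl[\mathcal W(\mathrm P_{X^{J_\infty^j}},\mathrm P_{g(Z)^{J_\infty^j}})\bigr]$, in which each summand is a Wasserstein distance between two distributions living in a space of dimension $\le s$. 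The empirical-measure rate in such a space is $n^{-1/s}$, and the unbounded domain is absorbed by a truncation argument using the finite-moment condition of Assumption~\ref{asp:finite-moment}, which produces the $(\sqrt d+\log n)$ prefactor; the generator-side deviation is handled analogously using the reduced complexity of $\overline{\mathcal G}$. Collecting the discriminator-approximation term, the optimization error $\overline{\mathcal E}$, the $O((\sqrt d+\log n)\,n^{-1/s})$ statistical terms, and the lower-order generator-approximation term, and finally taking expectation over $\{X_i\},\{Z_i\}$ (Jensen passing the almost-sure choice of $\tau$ through the expectation), yields $\mathbb E[W]\lesssim\psi_{n,s}$.

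I expect the main obstacle to be that third ingredient: rigorously establishing the ancestor-wise decomposition of the Wasserstein distance with constants that do not blow up with $d$, since the transport error committed at a node must be shown to propagate to its descendants in a uniformly Lipschitz-controlled fashion — this is precisely why the relevant index set is the full ancestor closure $J_\infty^j$ rather than the immediate parents $\Pi_j$, and why $s=\max_{j\in[d]}|J_\infty^j|$ is the dimension that appears — and then dovetailing this clean decomposition with the probabilistic feature-selection guarantee of Theorem~\ref{thm:feature-selection} so that the whole argument stays self-consistent at the penalty level $\lambda_j=d^{-1}\psi_{n,s}^{1/2}$.
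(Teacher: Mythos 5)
The key mechanism the paper uses to upgrade $n^{-1/d}$ to $n^{-1/s}$ is Lemma~\ref{lem:reduced-disco-complexity}: once exact selection (Lemma~\ref{lem:exact-selection}) has put both $\mathrm P_X$ and $\mathrm P_{g_{\overline{\bm\theta}^{T+1}}(Z)}$ into the same Bayes-network factorization, the Kantorovich supremum over $\mathcal F_{Lip,1}(\mathbb R^d)$ is \emph{exactly} attained on the smaller class $\mathcal F_{\Pi}=\bigl\{\sum_{j\in\mathcal A}f_j(x^{J_\infty^j})\bigr\}$ of sums of Lipschitz functions on \emph{disjoint} ancestor blocks $\mathcal A$, each of dimension at most $s$. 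The covering number governing the uniform-deviation term therefore drops to $\log d+(\log n/\delta)^s$, and the chaining bound of \eqref{equ:proof-uniform-converge-on-l1-function-class} improves to $(\sqrt d+\log n)\,n^{-1/s}$. That is a duality-plus-entropy argument; no transport coupling is built. Your third ingredient instead proposes a coordinate-by-coordinate coupling with Lipschitz propagation, leading to the bound $\mathcal W(\mathrm P_{g(Z)},\mathrm P_X)\lesssim\sum_{j=1}^d\mathcal W\bigl(\mathrm P_{X^{J_\infty^j}},\mathrm P_{g(Z)^{J_\infty^j}}\bigr)$. As written this has two problems. First, the sum runs over all $j\in[d]$ with heavily overlapping $J_\infty^j$, so it is not a product-coupling identity; the clean decomposition requires summing only over the disjoint representatives $\mathcal A$, which is precisely the content of Lemma~\ref{lem:reduced-disco-complexity}. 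Second, even granting the inequality, it controls a population Wasserstein distance, whereas the $n^{-1/s}$ rate is needed for the uniform deviation $\sup_f(R-R_n)$; you never connect the coupling decomposition to a smaller entropy number for the discriminator class entering your terms $\mathrm I$ and $\mathrm{III}$ (which you define as suprema over $\mathcal F$, not over a class whose complexity you have reduced). You yourself flag the propagation step as the ``main obstacle'' — that obstacle is exactly where Lemma~\ref{lem:reduced-disco-complexity} is needed, and absent it (or a rigorous substitute) the proof does not close.

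The remainder of your outline — exact selection yielding $\widehat\Pi_j=\Pi_j$ with a suitable $\tau$, the self-consistency of the penalty $\lambda_j=d^{-1}\psi_{n,s}^{1/2}$, and the recycled oracle decomposition from Theorem~\ref{thm:feature-selection} with $\overline{\mathcal E}$ in place of $\mathcal E$ — tracks the paper's argument closely enough. The gap is confined to the dimension-reduction step, which you treat via an unestablished coupling where the paper uses an exact discriminator-class reduction plus a covering-number bound.
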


The theorem illustrates the generative performance of the obtained parameter $\overline{\bm \theta}^{T+1}$. 
Note that $\psi_{n, s}$ in \eqref{equ:def-of-psi-phase-2} consists of three parts analogous to those in \eqref{equ:def-of-psi}, but the estimation error is significantly improved from $-1/d$ to $-1/s$.
In many cases, features can be divided into several independent groups, allowing $s$ to be much smaller than $d$, illustrating the improvement brought by the Bayes network structure.

The relationship between $\overline{\mathcal{E}}$ and $\mathcal{E}$ is obvious - intuitively, the expectation of $\overline{\mathcal{E}}$ is always no larger than $\mathcal{E}$. 
This is because $\mathcal{E}$ and $\overline{\mathcal{E}}$ only account for optimization error, and the constraint optimization problem $\inf_{{\bm \theta}\in \overline{\mathcal{G}}} \sup_{{\bm \nu}\in \mathcal{F}}
    {\Delta}({\bm \theta}, {\bm \nu}, D)$ is easier than $\inf_{{\bm \theta}\in {\mathcal{G}}} \sup_{{\bm \nu}\in \mathcal{F}}
    {\Delta}({\bm \theta}, {\bm \nu}, D)$. 
    However, explicitly depicting such improvement again yields an involved analysis of the dynamics of GAN. 

\section{Experiment}
\label{sec:exp}
% We begin our experiments with synthetic datasets to perform a comprehensive parameter analysis, aiming to investigate the impact of key model parameters and validate our theoretical findings. To assess the effectiveness of the proposed method, we then compare it against several representative baselines on real-world datasets, including AIM, \texttt{PrivMRF}, \texttt{GEM}, \texttt{DP-MERF}, and \texttt{PrivBayes}. 

In the experiments, we perform both synthetic and real-world experiments in Section \ref{nonlinear_exp} and \ref{real_data}. 
More details on datasets and settings can be found in Appendix \ref{exp_settings}.

\paragraph{Privacy Analysis}
We adopt DP\text{-}SGD as optimizer and evaluate its privacy guarantees through the R\'enyi differential privacy (RDP) accountant \citep{mironov2017renyi} implemented in Opacus. For each experiment, we specify a target privacy budget~$\varepsilon$ and select the noise multiplier~$\sigma$ and the total number of training rounds~$T$ so that the privacy loss computed by the RDP accountant matches the desired~$\varepsilon$ (with~$\delta$ fixed as described in Appendix~\ref{exp_settings}). The RDP framework provides tight privacy tracking for iterative gradient updates, which makes it well suited for DP\text{-}SGD. Through this procedure, all reported results are $(\varepsilon, \delta)$\nobreakdash-DP.

\paragraph{Evaluation Metrics} 
(1) \textbf{Distribution Similarity}: We evaluate the generator's ability to capture distribution similarity by computing the total variation distance (TVD), computed via 2-way marginals, and the Wasserstein distance (WD). 
See more details in Appendix \ref{sec:evaluation}. 
(2) 
\textbf{Machine Learning Efficacy}:
 For real datasets, we assess the quality of synthetic data through downstream machine learning performance. We employ five robust machine learning models known for their strong generalization: MLP \citep{rumelhart1986learning}, CatBoost \citep{prokhorenkova2018catboost}, XGBoost \citep{chen2016xgboost}, Random Forest \citep{breiman2001random}, and SVM \citep{cortes1995support}. 
 Performance is evaluated using the average $R^2$ score and root mean square error (RMSE) on held-out test data, with respect to a model trained on synthetic data.

\subsection{Synthetic Experiments}
\label{nonlinear_exp}

\paragraph{Experiment Setup} We generate synthetic data according to the structural equation model $X_j = f_j(\Pi_j) + z_j$, $j \in [d]$, where $z_j \sim \mathcal{N}(0,1)$, are independent noise terms and recall that $\Pi_j$ denotes the parent set of $X_j$ in the underlying Bayes network. 
Each function $f_j$ is modeled using a multiple index model, represented as a sum of nonlinear transformations applied to one-dimensional linear projections of the parent variables $\Pi_j$.  
This flexible form ensures expressiveness while maintaining identifiability under mild regularity conditions. The ground truth Bayes networks are generated from the Erdős-Rényi graph model \citep{zheng2018dags}, with a fixed number of nodes $d=10$. For each generated network, we simulate $n = 2000$ training samples. 
Parallel sets of experiments using linear functions $f_j$ and alternative evaluation metrics are presented in Appendix~\ref{exp_results}.

\paragraph{Parameter Analysis of Learning Rate} 
% We use {\color{magenta} $f_j(\Pi_j) = \tanh(w_{1j}^\top \Pi_j) + \cos(w_{2j}^\top \Pi_j) + \sin(w_{3j}^\top \Pi_j)$ , where $w_{1j}, w_{2j}, w_{3j}$ are weight vectors randomly initialized with values drawn from a uniform distribution on $[0.5, 2.0]$, each entry being negated with probability 0.5, }to investigate the impact of discriminator learning rate $d_{\text{lr}}$ and the generator learning rate $g_{\text{lr}}$. 
We investigate the impact of the discriminator learning rate $d_{\text{lr}}$ and the generator learning rate $g_{\text{lr}}$. 
We use the nonlinear function $f_j(\Pi_j ) = \tanh(w_{1j}^\top \Pi_j) + \cos(w_{2j}^\top \Pi_j) + \sin(w_{3j}^\top \Pi_j)$, where $w_{kj}^\top \Pi_j = \sum_{j'\in \Pi_j} w_{k,j,j'} X^{j'}$.
Each weight $w_{k,j,j'}$ is randomly initialized by sampling from a uniform distribution over $[0.5, 2.0]$, and independently negated with probability $0.5$.
We vary $d_{\text{lr}}$ while fixing $g_{\text{lr}}$, and vice versa. 
Each parameter setting is repeated 20 times.
Performance is evaluated under different privacy noise levels $\sigma \in \{1, 3, 5, 7\}$. 
The results are shown in Figure~\ref{fig:g_lr_d_lr}. For each $\sigma$, as $d_{\text{lr}}$ increases, both WD and TVD initially decrease, reaching a minimum at a certain value of $d_{\text{lr}}$, and then begin to increase. A similar trend is observed when varying $g_{\text{lr}}$.
These observations indicate the existence of an optimal learning rate that primarily governs the optimization error $\mathcal{E}$.

\begin{figure}[!h]
\vskip -0.1in
    \centering
    \begin{minipage}{\columnwidth}
        \centering
        \includegraphics[width=0.65\linewidth]{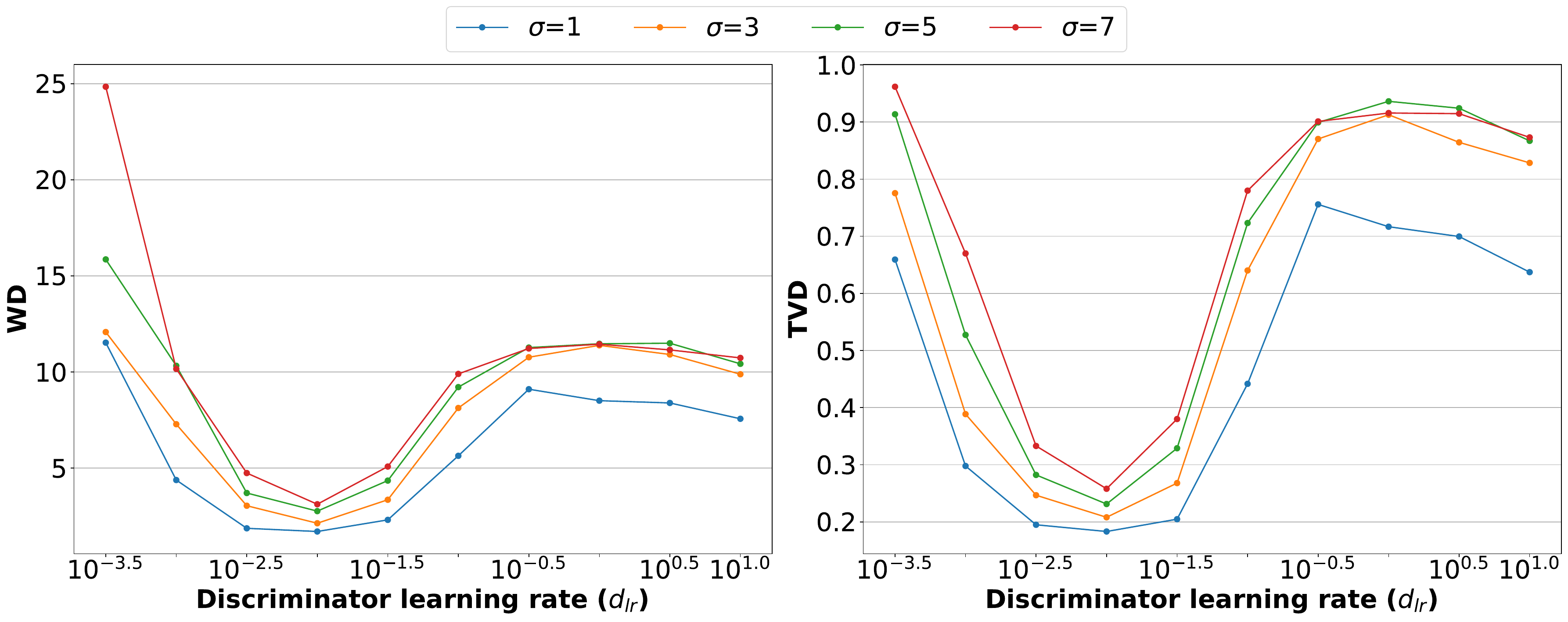}
        \label{fig:d_lr}
    \end{minipage}
    \\  
    % \vspace{0.5cm}
    \begin{minipage}{\columnwidth}
        \centering
        \includegraphics[width=0.65\linewidth]{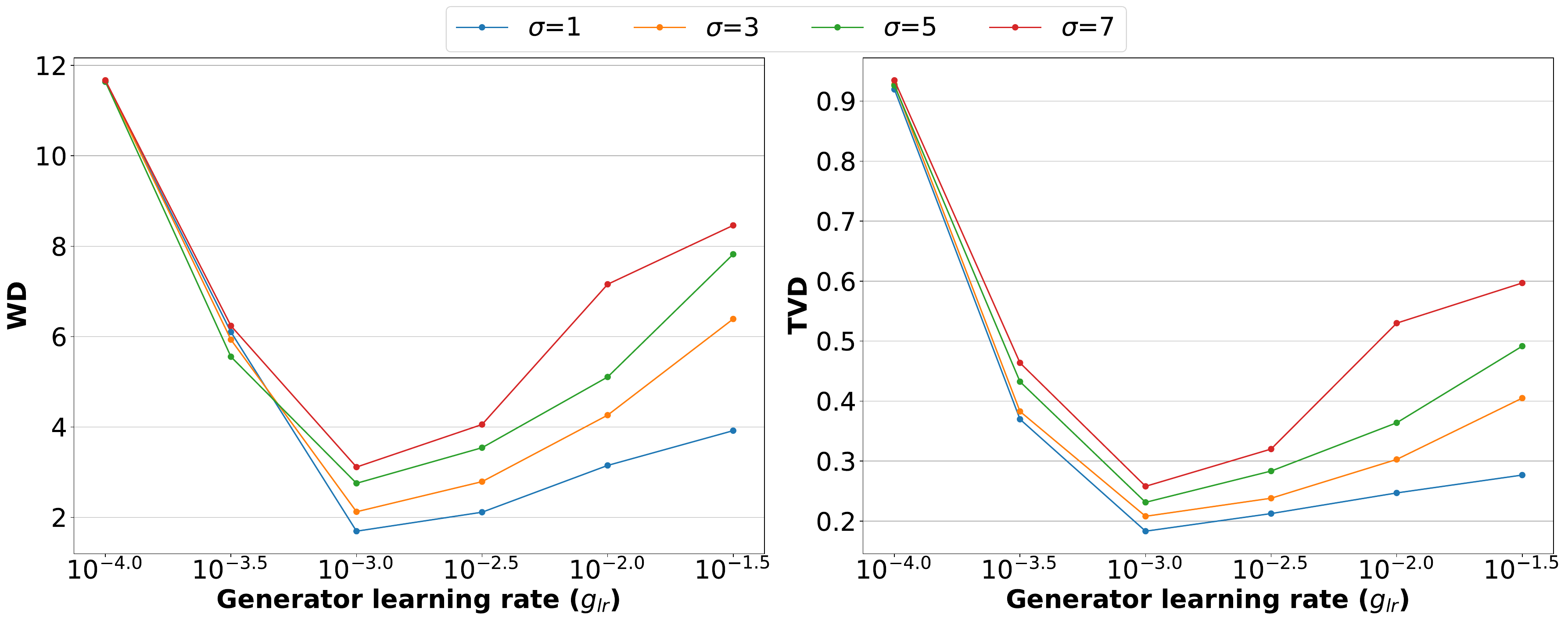}
        \label{fig:g_lr}
    \end{minipage}
    \vskip -0.1in
    \caption{Average WD and TVD. Top: varying discriminator learning rates \( d_{\text{lr}} \) across $10^{h}$ \( g_{\text{lr}} = 10^{-2} \). Bottom: varying generator learning rates \( g_{\text{lr}} \) across $10^{h}$ with \( d_{\text{lr}} = 10^{-1} \).
}
    \label{fig:g_lr_d_lr}
    \vskip -0.1in
\end{figure}

% {\color{red}For each combination of learning rates and noise level ($d_{\text{lr}}$ with fixed $g_{\text{lr}}=10^{-2}$ and $\sigma$, or $g_{\text{lr}}$ with fixed $d_{\text{lr}}=10^{-1}$ and $\sigma$), we plot the 20-time averaged Wasserstein Distance and TVD against the varying learning rate} {\color{magenta} repetitive and should be moved to caption}.

\begin{table*}[!t]
\centering
\tiny
\caption{
Real data comparison for distribution similarity.
Best results are in \textbf{bold}; second-best are \underline{underlined}. 
To ensure statistical significance, we adopt the Wilcoxon signed-rank test  \citep{wilcoxon1992individual} with a significance level of 0.05 to check if the result is significantly better.
The best results that hold significance over the others have a *.
}
\setlength{\tabcolsep}{2pt}
\vskip -0.05in
\label{tab:similarity_results}
\resizebox{0.8\textwidth}{!}{%
\begin{tabular}{lcccccc cccccc cccccc}
\toprule
\textbf{Dataset} & \multicolumn{6}{c}{\textbf{\textsc{California} }} & \multicolumn{6}{c}{\textsc{House-16H}} & \multicolumn{6}{c}{\textsc{Cpu-act}} \\
\cmidrule(lr){2-7} \cmidrule(lr){8-13} \cmidrule(lr){14-19}
$\varepsilon$ & \multicolumn{2}{c}{0.2} & \multicolumn{2}{c}{1} & \multicolumn{2}{c}{5} 
& \multicolumn{2}{c}{0.2} & \multicolumn{2}{c}{1} & \multicolumn{2}{c}{5} 
& \multicolumn{2}{c}{0.2} & \multicolumn{2}{c}{1} & \multicolumn{2}{c}{5} \\
Metric & WD & TVD & WD & TVD & WD & TVD 
& WD & TVD & WD & TVD & WD & TVD 
& WD & TVD & WD & TVD & WD & TVD \\
\midrule
\texttt{PrAda-GAN} & \textbf{2.034}* & \textbf{0.354}* & \textbf{1.533}* & \textbf{0.287}* & \textbf{1.235}* & \textbf{0.257}* & \textbf{4.498}* &  \underline{0.355} & \textbf{3.149}* & \textbf{0.260} & \textbf{3.121}* & \underline{0.220} & \textbf{7.628}* & \textbf{0.502} & \underline{4.674} & \textbf{0.344}* & 4.716 & \textbf{0.263}* \\
\texttt{AIM} & 3.627 & 0.498 & 2.804 & 0.465 & \underline{2.648} & 0.405 & 6.594 &  \underline{0.355} & 3.845 & 0.299 & 3.641 & 0.274 & 12.193 & 0.504 & 4.705 & \underline{0.417} & \underline{3.866} & \underline{0.343} \\
\texttt{PrivMRF} & \underline{3.387} & \underline{0.488} & \underline{2.748} & \underline{0.391} & 2.996 & \underline{0.384} & \underline{5.829} & \textbf{0.349} & \underline{3.562} & \underline{0.267} & \underline{3.391} & \textbf{0.202}* & \underline{10.124} & \underline{0.503} & \textbf{4.431}* & \underline{0.417} & \textbf{3.352}* & 0.373 \\
\texttt{GEM} & 7.367 & 0.584 & 4.374 & 0.460 & 4.197 & 0.458 & 26.549 & 0.614 & 12.077 & 0.465 & 12.694 & 0.468 & 31.716 & 0.682 & 17.112 & 0.458 & 13.680 & 0.382 \\
\texttt{DP-MERF} & 5.663 & 0.737 & 4.681 & 0.687 & 4.171 & 0.679 & 10.125 & 0.595 & 7.180 & 0.510 & 7.725 & 0.502 & 12.700 & 0.666 & 9.104 & 0.632 & 9.255 & 0.617 \\
\texttt{PrivBayes} & 33.261 & 0.478 & 29.141 & 0.470 & 28.354 & 0.458 & 10.967 & 0.409 & 5.985 & 0.365 & 5.119 & 0.358 & 35.978 & 0.538 & 15.139 & 0.441 & 9.832 & 0.420 \\
\midrule
\textbf{Ground Truth} & 0.046 & 0.140 & 0.046 & 0.140 & 0.046 & 0.140 & 0.053 & 0.346 & 0.053 & 0.346 & 0.053 & 0.346 & 0.410 & 0.066 & 0.410 & 0.066 & 0.410 & 0.066 \\
\bottomrule
\end{tabular}%
}
\vskip -0.0in
\end{table*}

\begin{table*}[htbp]
\centering
\tiny
\setlength{\tabcolsep}{3.5pt}
\caption{Real data comparison for downstream machine learning efficacy ($R^2$).
Best results are in \textbf{bold}; second-best are \underline{underlined}.
The best results that hold significance over the others have a *.}
\label{tab:R^2}
\vskip -0.05in
\begin{tabular}{@{}lccccccccccccccc@{}}
\toprule
& \multicolumn{5}{c}{\textsc{California} } & \multicolumn{5}{c}{\textsc{House-16H}} & \multicolumn{5}{c}{\textsc{Cpu-act}} \\
\cmidrule(lr){2-6} \cmidrule(lr){7-11} \cmidrule(lr){12-16}
\textbf{Method} & Cat & MLP & RF & XGB & SVM & Cat & MLP & RF & XGB & SVM & Cat & MLP & RF & XGB & SVM \\
\midrule
\multicolumn{16}{c}{\textbf{$\varepsilon=0.2$}} \\
\texttt{PrAda-GAN}      & \textbf{0.312}* & \textbf{0.247}*& \textbf{0.285}* & \textbf{0.272}*& \textbf{0.226}* & \textbf{0.139}* & \textbf{0.088}* &\textbf{ 0.086} & \textbf{0.083}*  & \textbf{0.105}* & \textbf{0.133}* & \textbf{-0.010} & \textbf{0.098}*  & \textbf{0.095} & \underline{0.017}\\
\texttt{AIM}       & 0.015 & -0.004 &-0.190 & -0.012 & -0.001 & -0.039 & -0.045 & -0.071 & -0.186 & \underline{-0.002} & 0.003 & -0.166 & 0.039 & -0.037 & -0.078  \\
\texttt{PrivMRF}   & 0.003 & \underline{0.010}  & -0.238& -0.019 & -0.015 & -0.009 & \underline{-0.009} &\underline{ -0.020} & \underline{-0.066} & \underline{-0.002} & -0.001 &-0.208  &  \underline{0.044}  & -0.039  & -0.058   \\
\texttt{GEM}       & -0.152& -0.138 & -0.471 & -0.234 & -0.260 & -0.307 & -0.679 & -0.686 & -1.281 & -0.305 & -0.327 & -0.457 & -0.955 & -0.489 & -0.371 \\
\texttt{DP-MERF}   & \underline{0.201}& -0.166 & \underline{-0.082}& \underline{0.126} & \underline{0.047} & \underline{0.066}  & -0.652  & -1.213 & -0.944  & -0.123 & \underline{0.111} & \underline{-0.066} & 0.002 &\underline{0.042}  & \textbf{0.134}  \\
\texttt{PrivBayes} & 0.024 & -0.040 & -0.088& -0.098& -0.030& -0.007 & -0.314 & -0.083 & -0.237 & -0.026  & -0.142 & -1.289 & -2.779 & -0.877 & -0.004  \\
\midrule
\multicolumn{16}{c}{\textbf{$\varepsilon=1.0$}} \\
\texttt{PrAda-GAN}      & \textbf{0.495}* & \textbf{0.480}* & \textbf{0.434}* & \textbf{0.443}* & \textbf{0.427}* & \textbf{0.240}* & \textbf{0.185}*
 & \textbf{0.198}* & \textbf{ 0.196}* & \textbf{0.160} & \underline{0.127} & \textbf{0.073}* & \textbf{0.103} & \underline{0.106}* & \textbf{0.096}\\
\texttt{AIM}       & 0.005 & \underline{0.017} & -0.212 &-0.038 & -0.009 & \underline{0.157} & \underline{0.124} & \underline{0.076} & \underline{-0.049} & \underline{0.145} & 0.042 & -0.117& \underline{0.101} & 0.012& -0.054\\
\texttt{PrivMRF}   & 0.003&-0.009 & -0.212 & -0.022& -0.013& 0.0004 & -0.004 & -0.028 & -0.058 & -0.002 & 0.024 & -0.195 & 0.075 & -0.003 & -0.071 \\
\texttt{GEM}       & -0.041&-0.079 &-0.345 & -0.087& -0.070& -0.093 & -0.181 & -0.248 & -0.421 & -0.041 & 0.102 & -0.157 & -0.107 & 0.027 & 0.038\\
\texttt{DP-MERF}   & \underline{0.290} & -0.220 & \underline{-0.037} & \underline{0.254} & \underline{0.157} & 0.126 & -0.534 & -0.472 & -0.411  & 0.065 & \textbf{0.149}* & \underline{-0.003}& 0.085 & \textbf{0.112} & \underline{0.090}\\
\texttt{PrivBayes} & 0.019 & -0.017 & -0.075 & -0.088  & -0.005 & 0.005 & -0.450 & -0.003 & -0.114 & -0.018  &0.013 &-0.717 & -3.596 &-0.505 &  -0.033 \\
\midrule
\multicolumn{16}{c}{\textbf{$\varepsilon=5.0$}} \\
\texttt{PrAda-GAN}      & \underline{0.615} & \textbf{0.607}* & \textbf{0.577}& \underline{0.580} &\textbf{0.587}* & 0.261 & \underline{0.205} & 0.225 & \underline{0.221} & \textbf{0.218} & 0.171 & \textbf{0.158}* & \textbf{0.161} & \textbf{0.167}
& \underline{0.090}\\
\texttt{AIM}       & \textbf{0.642}* & \underline{0.489} & \underline{0.567} & \textbf{0.583} & 0.527 & \textbf{0.343}* & 0.184 & \textbf{0.320}* & \textbf{0.277}* & \underline{0.207} & \textbf{0.212}* & \underline{-0.005} & \underline{0.151} & \underline{0.165} & \textbf{0.150}* \\
\texttt{PrivMRF}   & 0.057& -0.046 &-0.166 & 0.010 & 0.107 & \underline{0.317} &\textbf{0.232}* &  \underline{0.305} & 0.185 & 0.197 & 0.037 & -0.246 & 0.079 & 0.006 & -0.074\\
\texttt{GEM}       & -0.029& -0.049 & -0.343& -0.093 & -0.046 & -0.042 & -0.102 & -0.124 & -0.364 & -0.006  & \underline{0.179} & -0.023
& 0.112 & 0.145 & 0.067 \\
    \texttt{DP-MERF}   &0.298 & -0.078& 0.173& 0.300 & 0.021& 0.112 & -0.573 & -0.044 & -0.356 & 0.088 & 0.174 & -0.327 & 0.148 & 0.143 & 0.088
 \\
\texttt{PrivBayes} & 0.459 & 0.398 & 0.419 & 0.405 & 0.469 & 0.014 & -0.708 & -0.011 & -0.093 & -0.021 &0.020 & -0.497& -3.620 & -0.408 & -0.046\\
\cmidrule[0.5pt]{1-16}
\textbf{Ground Truth} & 0.856 & 0.809 & 0.812 & 0.818  & 0.657  & 0.548 & 0.524 & 0.540 & 0.510 & 0.309  &0.979 & 0.952 & 0.977 & 0.968 & 0.203 \\
\bottomrule
\end{tabular}
\vskip -0.1in
\end{table*}

\paragraph{Parameter Analysis of Penalty Parameters} 
Using the same functions \( f_j \), we investigate the effects of the penalty parameters \( \lambda_j \) for \( j \in [d] \).  
To reduce the complexity of the search, we parameterize the penalties as \( \lambda_j = \lambda \cdot j^{\gamma} \) for \( j = 1, \dots, d \), and focus on varying the global parameters \( \lambda \) and \( \gamma \).  
As shown in the top panel of Figure~\ref{fig:lambda_gamma}, there exists an optimal value of \( \lambda \) for each noise level \( \sigma \), and this optimal value increases as \( \sigma \) becomes larger.  
This observation aligns with the theory, where the optimal choice of \( \lambda_j \) in equations~\eqref{equ:def-of-psi} and~\eqref{equ:def-of-psi-phase-2} increases with the optimization error, which itself grows with larger noise levels \( \sigma \).  
In the bottom panel of Figure~\ref{fig:lambda_gamma}, we observe that in most cases, setting \( \gamma = 0 \) yields a sufficiently low error, indicating that the magnitude of the penalty need not depend on the number of preconditioned features.  
This phenomenon is also consistent with Theorems~\ref{thm:feature-selection} and~\ref{thm:estimation}.

\begin{figure}[!h]
\vskip -0.0in
    \centering
    % \scalebox{0.92}{
    % \begin{minipage}{0.48\textwidth}
    \begin{minipage}{\columnwidth}
        \centering
        \includegraphics[width=0.75\linewidth]{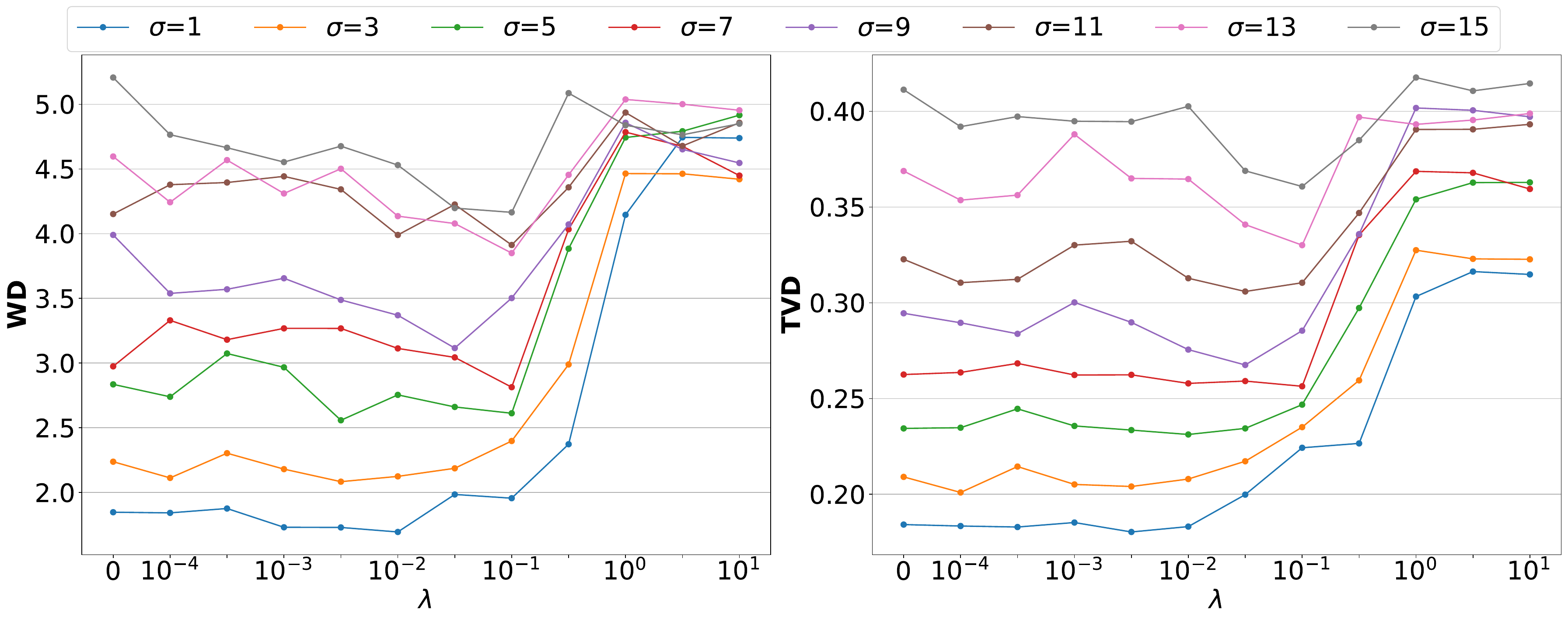}
        % \caption{MSE Loss}
        \label{fig:lambda}
    \end{minipage}
    % \hfill
    \\  % 强制换行
    % \vspace{0.5cm}
    \begin{minipage}{\columnwidth}
    % \begin{minipage}{0.48\textwidth}
        \centering
\includegraphics[width=0.75\linewidth]{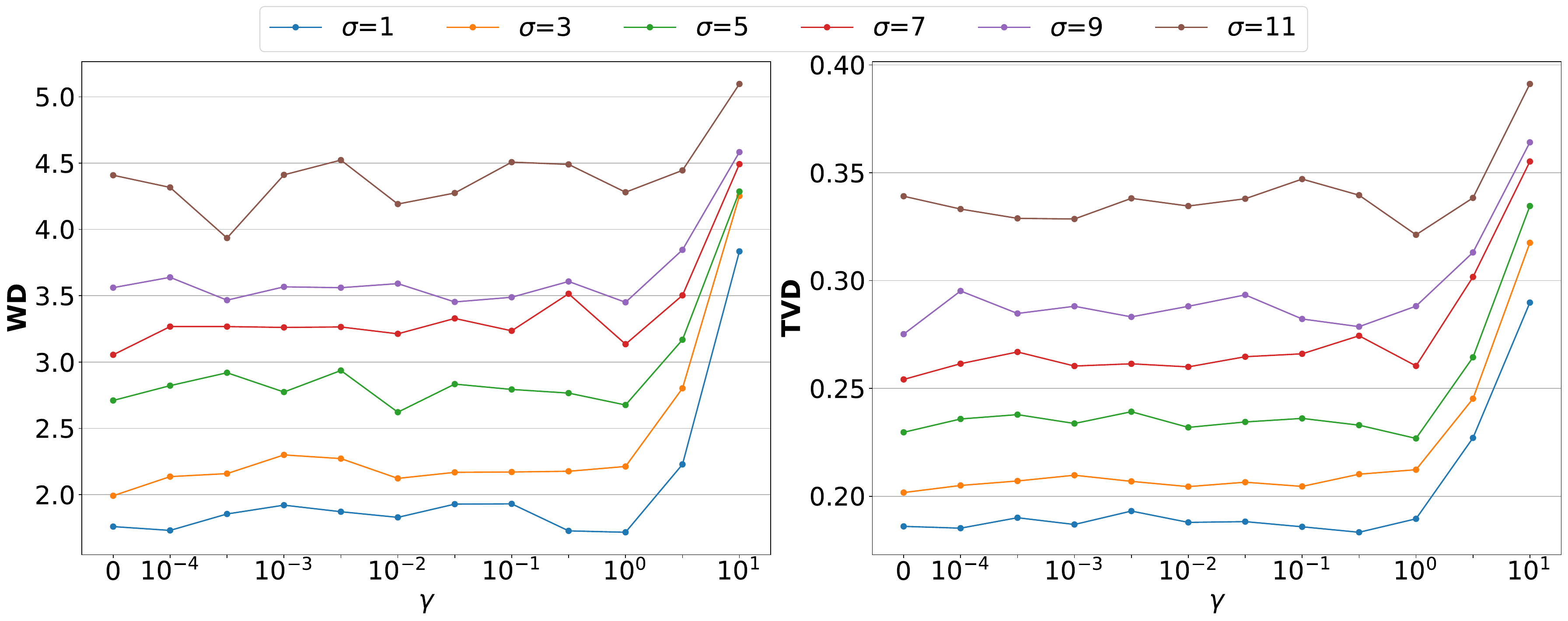}
        % \caption{Wasserstein distance}
        \label{fig:gamma}
    \end{minipage}
    % \caption{Combined results}
    \vskip -0.1in
    \caption{Average WD and TVD under varying $\lambda$ and $\gamma$.}
    \label{fig:lambda_gamma}
    \vskip -0.2in
\end{figure}

\subsection{Real Data Comparison}
\label{real_data}

\paragraph{Experiment Setup} 
In our benchmarking suite, we compare \texttt{PrAda-GAN} against several baseline methods, including Bayesian network–based approaches such as \texttt{AIM} \citep{mckenna2022aim}, \texttt{PrivMRF} \citep{cai2021data}, and \texttt{PrivBayes} \citep{zhang2017privbayes}, as well as state-of-the-art deep learning methods for differentially private synthetic data generation, including \texttt{GEM} \citep{liu2021iterative} and \texttt{DP-MERF} \citep{harder2021dp}.
We select three commonly used machine learning datasets, \textsc{California}, \textsc{House-16H}, and \textsc{Cpu-act}, from OpenML \citep{OpenML2013}, all of which contain continuous features and targets. Descriptive statistics for these datasets are provided in Table~\ref{tab:realdatainfo} in the appendix.
Each experiment was repeated over 10 random trials, with randomness stemming from initialization, data partitioning, and shuffling.  
In each trial, the dataset was randomly split into training and testing sets in 6:4.  
For downstream tasks, model hyperparameters were predetermined using a validation set held out from the training split.

\paragraph{Distribution Similarity}
The representative results on real-world datasets for privacy budgets $\varepsilon = 0.2, 1.0,$ and $5.0$ are presented in Table \ref{tab:similarity_results}. Additional evaluation results using other metrics are provided in the appendix. As shown in the table, \texttt{PrAda-GAN} achieves superior performance compared to competing methods in most scenarios. Among the baseline methods, \texttt{AIM} and \texttt{PrivMRF} emerge as the strongest competitors, while other deep learning-based methods fail to achieve comparable performance.
% Notably, in certain cases, such as $\varepsilon = 0.2$ on the \textsc{House-16H} dataset and $\varepsilon = 1.0$ on the \textsc{Cpu-act} dataset, \texttt{PrivMRF} generates synthetic data that attains the lowest TVD or Wasserstein distance when evaluated against the held-out test data, with our method achieving the second-best results in these specific instances.

\paragraph{Machine Learning Efficacy}
A corresponding set of experiments evaluating machine learning efficacy is presented in Table~\ref{tab:R^2} for \( R^2 \) scores, and in Table~\ref{tab:rmse} in the appendix for RMSE.  
Our method consistently achieves higher utility across all scenarios, and demonstrates substantial advantages over the competitors under low privacy budgets, i.e., \( \varepsilon = 0.2 \) and \( 1.0 \).

\section{Conclusion}
\label{sec:con}

This paper addresses the limitations of existing methods for differentially private tabular data synthesis by introducing a novel generative framework, \texttt{PrAda-GAN}.  We provide theoretical analysis of \texttt{PrAda-GAN}, including a bound on the distance to the optimal generator and a generalization bound for the Wasserstein distance between the generated and true data distributions. Our results highlight the role of adaptive regularization in improving convergence. Empirical evaluations on real-world datasets show that \texttt{PrAda-GAN} outperforms existing methods across multiple metrics, achieving strong utility while ensuring rigorous privacy guarantees. 

\section{Acknowledgments}
The authors would like to thank the SPC and the reviewers for their constructive comments and recognition of this work. This work is supported by National Natural Science Foundation of China (72371241), the MOE Project of Key Research Institute of Humanities and Social Sciences (22JJD910001); the Big Data and Responsible Artificial Intelligence for National Governance, Renmin University of China; Public Computing Cloud,
Renmin University of China.

\bibliography{aaai2026}

\appendix

\newpage

\onecolumn

\setcounter{secnumdepth}{2}

\section{An Example of Bayes Network}\label{app:bayesian network}
\begin{example}\label{ex:bayesnetwork}
    Consider a Bayes network over a set of continuous variables $\{X^1, X^2, X^3, X^4, X^5\}$ representing \texttt{age}, \texttt{education\_years}, \texttt{salary}, \texttt{working\_hours}, and \texttt{credit\_score}, respectively. A possible Bayes network structure is as follows:
    \begin{itemize}
        \item $\Pi_1 = \emptyset$ (age is a root node)
        \item $\Pi_2 = \{1\}$ (education\_years depends on age)
        \item $\Pi_3 = \{1, 2\}$ (salary depends on age and education\_years)
        \item $\Pi_4 = \{3\}$ (working\_hours depends on salary)
        \item $\Pi_5 = \{3, 4\}$ (credit\_score depends on salary and working\_hours)
    \end{itemize}
    This structure satisfies Assumption~\ref{asp:bayesnetwork}, where each variable $X^j$ only depends on its parent set $\Pi_j$, and the graph is acyclic.
\end{example}

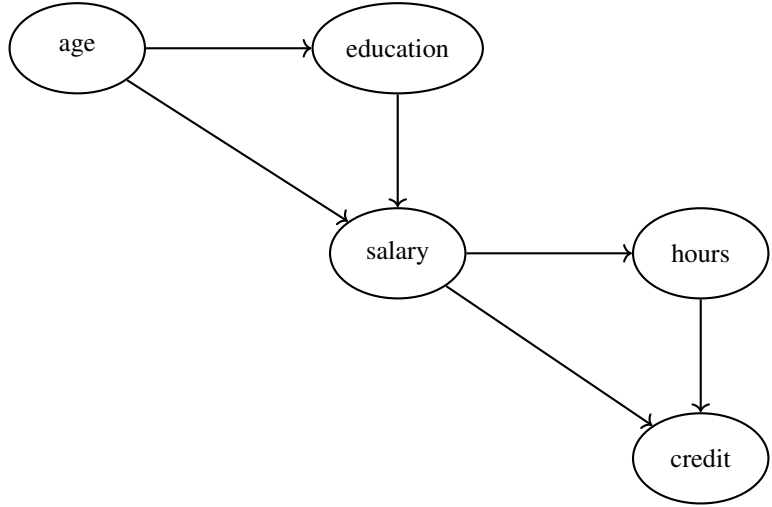
\begin{figure}[h]
    \centering
    \begin{tikzpicture}[
        node distance=1.5cm and 2.2cm,
        every node/.style={shape=ellipse, draw, minimum width=1.8cm,
            minimum height=1.2cm, },
        every path/.style={->, thick}
        ]
        \node (age) {age};
        \node[right=of age] (edu) {education};
        \node[below=of edu] (salary) {salary};
        \node[right=of salary] (hours) {hours};
        \node[below=of hours] (credit) {credit};

        \draw (age) -- (edu);
        \draw (age) -- (salary);
        \draw (edu) -- (salary);
        \draw (salary) -- (hours);
        \draw (salary) -- (credit);
        \draw (hours) -- (credit);
    \end{tikzpicture}
    \caption{An example Bayes network over continuous variables.}
\end{figure}

\section{Methodologies}

\subsection{A Two-step Algorithms}

In this section, we propose a two-step version of \texttt{PrAda-GAN} that is convenient for theoretical analysis. 
The algorithm is presented in Algorithm \ref{alg:privdag-gan-two-step}.
Note that there are commented-out codes after line 9, which were included for theoretical considerations but may not be used in practice.

\begin{algorithm}[htbp]
   	\caption{\texttt{PrAda-GAN}}
   	\label{alg:privdag-gan-two-step}
    {\bfseries Input: }{ Private data ${D} = \{X_i\}_{i=1}^n$.   }\\
     {\bfseries Parameters: }{ Learning rate $\eta_{\bm \theta}, \eta_{\bm \nu}$. Number of iterations $T$, relative number of iterations $t_g$.  
     Threshold $\tau$. 
     Regularizations $\lambda_1,\ldots, \lambda_d$. Privacy noise level $\sigma$.  }\\
     {\bfseries Initialization: }{ Initial parameters $\bm \theta^1$, $\bm \nu^1$, \texttt{\# $\overline{\bm \theta}^1$, $\overline{\bm \nu}^1$}.   }\\
     \For{$t$ in $[T]$}{
    \texttt{\# Update $\nu$ every step.  }\\
     ${\bm \nu}^{t+1}={\bm \nu}^{t} + \eta_{\nu}\cdot $ \texttt{PrivGrad}$(\nabla_{\bm \nu} \tilde{\Delta}({\bm \theta}^{t}, {\bm\nu}^{t}, D), \sigma)$  \\
    \texttt{\# Update $\theta$ every $t_g$ steps.  }
     \\
     ${\bm \theta}^{t+1} = {\bm \theta}^t - \eta_{\theta} \cdot \eins \left(t\mod t_g \equiv 0\right) \cdot \nabla_{\bm \theta} \tilde{\Delta}({\bm \theta}^{t}, {\bm\nu}^{t+1}, D)$. 
    }
    \texttt{\#} \ \ 
    \For{$j\in [d]$}{
\texttt{\#} \ \  \For{$k \in [j-1]$}{
    \texttt{\# Rule out the unimportant variables.  }\\
\texttt{\#} \ \  \If{$\left\|(\bm W^{T+1}_j)^{k,:}\right\|_2\leq \tau$}{
\texttt{\#} \ \ 
$(\overline{\bm W}_j^{T+1})^{j', :} = \mathbf{0}$, $(\overline{\bm W}_j^{T+1})^{j', :}$.requires\_grad = \texttt{False}. 
}
} }

     \texttt{\#} \ \ \For{$t$ in $[T]$}{
    \texttt{\# Re-optimize.  }\\
     \texttt{\#} \ \  $\overline{\bm \nu}^{t+1}=\overline{\bm \nu}^{t} + \eta_{\nu}\cdot $ \texttt{PrivGrad}$(\nabla_{{\bm \nu}} {\Delta}(\overline{\bm \theta}^{t}, \overline{\bm\nu}^{t}, D), \sigma)$ \\
     \texttt{\#} \ \  $\overline{\bm \theta}^{t+1} = \overline{\bm \theta}^t - \eta_{\theta} \cdot \eins \left(t\mod t_g \equiv 0\right) \cdot \nabla_{{\bm \theta}} {\Delta}(\overline{\bm \theta}^{t}, \overline{\bm\nu}^{t+1}, D)$.  }

    {\bfseries Output: }{Generator $g_{\bm \theta^{T+1}}$ and discriminator $f_{\bm \nu^{T+1}}$,  \texttt{\#  $g_{\overline{\bm \theta}^{T+1}}$ and $f_{\overline{\bm \nu}^{T+1}}$.}} 
 % \vskip -0.2in
   \end{algorithm} 

\section{Proofs}\label{app:proof}

\subsection{Proof of Theorem \ref{thm:feature-selection}}

\begin{proof}[\textbf{Proof of Theorem \ref{thm:feature-selection}}]
We first focus on the quantity $\mathbb{E}\left[d\left({\boldsymbol{\theta}}^{T+1}, \Theta\right)\right]$, which is defined as follows. 
Define the collection 
\begin{align}
\Theta = \left\{{\bm \theta} \in \mathcal{G} \mid g_{\bm \theta} \in {\arg \min}_{\bm \theta } \mathcal{W} \left( \mathrm{P}_{g_{\bm \theta}(Z)}, \mathrm{P}_X\right)\right\}, 
\end{align}
meaning that all the feasible parameters that achieve the minimum Wasserstein distance in $\mathcal{G}$. 
Define 
\begin{align*}
    \widehat{\bm \theta} \in {\arg \min}_{{\bm \theta}\in \Theta} d({\bm \theta}^{T+1}, {\bm \theta}), 
\end{align*}
which is an arbitrary element in $\mathcal{\theta}$ that is closest to ${\bm \theta}^{T+1}$. 
Under Assumption \ref{asp:analytic}, we can use Lojasewicz’s inequality to show that there exists a constant $a>2$ such that 
\begin{align}\label{equ:proof-lowerbound-w-by-d}
    d\left({\boldsymbol{\theta}}^{T+1}, \Theta\right)^{a} \lesssim \mathcal{W} \left( \mathrm{P}_{g_{{\bm \theta}^{T+1}}(Z)}, \mathrm{P}_X\right) - \mathcal{W} \left( \mathrm{P}_{g_{\widehat{\bm \theta}}(Z)}, \mathrm{P}_X\right).
\end{align}
See a similar reasoning in \citet{wang2024penalized}. 
This notion is also analogous to the Tsybakov noise condition, and see \citet{ramdas2012optimal, xu2025beyond} for related discussions. 
Then, let the class of 1-Lipschitz functions from $\mathbb{R}^d$ to $\mathbb{R}$ be $\mathcal{F}_{Lip, 1}$. 
By the property of Wasserstein distance \citep{villani2008optimal, arjovsky2017wasserstein}, there hold
\begin{align*}
    \mathcal{W} \left( \mathrm{P}_{g_{{\bm \theta}^{T+1}}(Z)}, \mathrm{P}_X\right) = \sup_{f \in \mathcal{F}_{Lip, 1}} \left(
    \mathbb{E}_{X} \left[f(X)\right] - \mathbb{E}_{Z} \left[f(g_{{\bm \theta}^{T+1}}(Z))\right]\right)
\end{align*}
as well as 
\begin{align*}
    \mathcal{W} \left( \mathrm{P}_{g_{\widehat{\bm \theta}}(Z)}, \mathrm{P}_X\right) = \sup_{f \in \mathcal{F}_{Lip, 1}} \left(
    \mathbb{E}_{X} \left[f(X)\right] - \mathbb{E}_{Z} \left[f(g_{\widehat{\bm \theta}}(Z))\right]\right)
\end{align*}
We define two abbreviations for notation simplicity. 
\begin{align*}
    R({\bm \theta}, f) := \mathbb{E}_{X} \left[f(X)\right] - \mathbb{E}_{Z} \left[f(g_{{\bm \theta}}(Z))\right],
\end{align*}
\begin{align*}
    R_n({\bm \theta}, f) := \frac{1}{n}\sum_{i=1}^{n}f(X_i) - \frac{1}{n_g}\sum_{i=1}^{n_g} f(g_{{\bm \theta}}(Z)). 
\end{align*}
Then, bringing the above equations into \eqref{equ:proof-lowerbound-w-by-d}, we get 
\begin{align}
    \mathbb{E}\left[d\left({\boldsymbol{\theta}}^{T+1}, \Theta\right)^{a} \right]\lesssim &  \mathbb{E}\left[\mathcal{W} \left( \mathrm{P}_{g_{{\bm \theta}^{T+1}}(Z)}, \mathrm{P}_X\right) - \mathcal{W} \left( \mathrm{P}_{g_{\widehat{\bm \theta}}(Z)}, \mathrm{P}_X\right)\right] \\
    = & \mathbb{E}\left[\sup_{f \in \mathcal{F}_{Lip, 1}} \left(R(\bm \theta^{T+1}, f)\right) - \sup_{f \in \mathcal{F}_{Lip, 1}} \left(R(\widehat{\bm \theta}, f)\right)\right].
    \label{equ:proof-two-terms}
\end{align}
Here, the outside $\mathbb{E}$ is taken with respect to the randomness contained in $\bm \theta^{T+1}$ and its projection $\widehat{\bm \theta}$. 

We deal with the two terms separately in \eqref{equ:proof-two-terms}.
For the first term, we have 
\begin{align}\label{equ:proof-two-terms-plus}
    \sup_{f \in \mathcal{F}_{Lip, 1}} \left(R(\bm \theta^{T+1}, f)\right) \leq \sup_{f \in \mathcal{F}_{Lip, 1}} \left(R(\bm \theta^{T+1}, f) - R_n(\bm \theta^{T+1}, f)\right) + \sup_{f \in \mathcal{F}_{Lip, 1}} \left(R_n(\bm \theta^{T+1}, f)\right).  
\end{align}
By a similar argument as in the Lemmas 1 and 2 in \citet{wang2024penalized} (chaining + applying Proposition 3.1 in \citet{lu2020universal}), we have 
\begin{align}\label{equ:proof-uniform-converge-on-l1-function-class}
    \mathbb{E}\left[\sup_{f \in \mathcal{F}_{Lip, 1}} \left(R(\bm \theta^{T+1}, f) - R_n(\bm \theta^{T+1}, f)\right) \right] \lesssim (\sqrt{d} + \log n) n^{- \frac{1}{d}}. 
\end{align}
We also have 
\begin{align}
    \sup_{f \in \mathcal{F}_{Lip, 1}} \left(R_n(\bm \theta^{T+1}, f)\right) = & \sup_{f \in \mathcal{F}_{Lip, 1}} \left(R_n(\bm \theta^{T+1}, f)\right) -  R_n(\bm \theta^{T+1}, f_{\bm \nu^{T+1}}) + R_n(\bm \theta^{T+1}, f_{\bm \nu^{T+1}}) \\
    \leq & \sup_{f \in \mathcal{F}_{Lip, 1}}  \inf_{f' \in \mathcal{F}} \left(R_n(\bm \theta^{T+1}, f) -  R_n(\bm \theta^{T+1}, f')\right) + R_n(\bm \theta^{T+1}, f_{\bm \nu^{T+1}}) \\
    \leq & \sup_{f \in \mathcal{F}_{Lip, 1}}  \inf_{f' \in \mathcal{F}} \left\|f - f'\right\|_{\infty} + R_n(\bm \theta^{T+1}, f_{\bm \nu^{T+1}}). 
    \label{equ:proof-derive-approximiation-error}
\end{align}
Denote one of the solutions of $\inf_{{\bm \nu}\in \mathcal{F}} \sup_{{\bm \theta}\in \mathcal{G}}\tilde{\Delta}({\bm \theta}, {\bm \nu}, D)$ as $\overline{\bm \theta}$ and $\overline{\bm \nu}$.  
By Assumption \ref{asp:private-optimization}, we have 
\begin{align}\label{equ:proof-use-of-assumption-optimizaiton}
R_n(\bm \theta^{T+1}, f_{\bm \nu^{T+1}}) \leq &  R_n(\overline{\bm \theta}, f_{\overline{\bm \nu}})  +   \sum_{j = 1}^d \lambda_{j} L(\overline{\bm W}_j) - \sum_{j = 1}^d \lambda_{j} L({\bm W}^{T+1}_j) +  \mathcal{E} ,
\end{align}
where we refer to the $\bm W$s in $\bm \theta^{T+1}$ and $\overline{\bm \theta}$ as the ones with the same superscript. 
Then, by the definition of $\overline{\bm \theta}$, there holds 
\begin{align}\label{equ:proof-use-of-def-of-overline-theta}
R_n(\bm \theta^{T+1}, f_{\bm \nu^{T+1}}) \leq R_n(\widehat{\bm \theta}, f_{\overline{\bm \nu}})  +   \sum_{j = 1}^d \lambda_{j} L(\widehat{\bm W}_j) - \sum_{j = 1}^d \lambda_{j} L({\bm W}^{T+1}_j) +  \mathcal{E}  
\end{align}
Then, bringing \eqref{equ:proof-uniform-converge-on-l1-function-class}, \eqref{equ:proof-derive-approximiation-error}, and \eqref{equ:proof-use-of-def-of-overline-theta} into \eqref{equ:proof-two-terms-plus} leads to 
\begin{align}\nonumber
    \sup_{f \in \mathcal{F}_{Lip, 1}} \left(R(\bm \theta^{T+1}, f)\right) \lesssim & (\sqrt{d} + \log n) n^{- \frac{1}{d}} + \sup_{f \in \mathcal{F}_{Lip, 1}}  \inf_{f' \in \mathcal{F}} \left\|f - f'\right\|_{\infty}  \\
    & + R_n(\widehat{\bm \theta}, f_{\overline{\bm \nu}})  +   \sum_{j = 1}^d \lambda_{j} L(\widehat{\bm W}_j) - \sum_{j = 1}^d \lambda_{j} L({\bm W}^{T+1}_j) +  \mathcal{E}.
    \label{equ:proof-result-of-two-terms-plus}
\end{align}

Next, we consider the second term, which is $\sup_{f \in \mathcal{F}_{Lip, 1}} \left(R(\widehat{\bm \theta}, f)\right)$. 
Again, we first have 
\begin{align}\label{equ:proof-two-terms-minus}
    \sup_{f \in \mathcal{F}_{Lip, 1}} \left(R(\widehat{\bm \theta}, f)\right) \geq  \sup_{f \in \mathcal{F}_{Lip, 1}} \left(R_n(\widehat{\bm \theta}, f)\right) - \sup_{f \in \mathcal{F}_{Lip, 1}} \left(R_n(\widehat{\bm \theta}, f) - R(\widehat{\bm \theta}, f)\right).
\end{align}
Similar to \eqref{equ:proof-uniform-converge-on-l1-function-class}, we have
\begin{align*}
    \sup_{f \in \mathcal{F}_{Lip, 1}} \left(R_n(\widehat{\bm \theta}, f) - R(\widehat{\bm \theta}, f)\right) \lesssim (\sqrt{d} + \log n) n^{- \frac{1}{d}}. 
\end{align*}
Also, there holds 
\begin{align*}
    \sup_{f \in \mathcal{F}_{Lip, 1}} \left(R_n(\widehat{\bm \theta}, f)\right)  = & \sup_{f \in \mathcal{F}_{Lip, 1}} \left(R_n(\widehat{\bm \theta}, f)\right) - \sup_{f \in \mathcal{F}} \left(R_n(\widehat{\bm \theta}, f)\right)  + \sup_{f \in \mathcal{F}} \left(R_n(\widehat{\bm \theta}, f)\right) \\
    \geq & - \sup_{f \in \mathcal{F}_{Lip, 1}}  \inf_{f' \in \mathcal{F}} \left\|f - f'\right\|_{\infty} + \sup_{f \in \mathcal{F}} \left(R_n(\widehat{\bm \theta}, f)\right) \\
    \geq & - \sup_{f \in \mathcal{F}_{Lip, 1}}  \inf_{f' \in \mathcal{F}} \left\|f - f'\right\|_{\infty} + R_n(\widehat{\bm \theta}, f_{\overline{\bm \nu}}). 
\end{align*}

Then, bringing these results into \eqref{equ:proof-two-terms-minus} leads to
\begin{align}
-  \sup_{f \in \mathcal{F}_{Lip, 1}} \left(R(\widehat{\bm \theta}, f)\right) \lesssim 
(\sqrt{d} + \log n) n^{- \frac{1}{d}} +\sup_{f \in \mathcal{F}_{Lip, 1}}  \inf_{f' \in \mathcal{F}} \left\|f - f'\right\|_{\infty} - R_n(\widehat{\bm \theta}, f_{\overline{\bm \nu}}),
    \label{equ:proof-result-of-two-terms-minus}
\end{align}
which together with \eqref{equ:proof-result-of-two-terms-plus} lead to 
\begin{align}\label{equ:proof-before-get-d-a}
     & \mathbb{E}\left[d\left({\boldsymbol{\theta}}^{T+1}, \Theta\right)^{a} \right]
     \\
     \lesssim & (\sqrt{d} + \log n) n^{- \frac{1}{d}} +\sup_{f \in \mathcal{F}_{Lip, 1}}  \inf_{f' \in \mathcal{F}} \left\|f - f'\right\|_{\infty} + \sum_{j = 1}^d \lambda_{j} L(\widehat{\bm W}_j) - \sum_{j = 1}^d \lambda_{j} L({\bm W}^{T+1}_j) +  \mathcal{E}. \nonumber
\end{align}
Recall the definition of $\bm W$s, which has 
\begin{align*}
  \sum_{j = 1}^d \lambda_{j} L(\widehat{\bm W}_j) - \sum_{j = 1}^d \lambda_{j} L({\bm W}^{T+1}_j) = & \sum_{j = 1}^d  \lambda_j\sum_{k = 1}^{j-1} \|(\widehat{\bm W}_j)^{k, :}\|_2 -   \sum_{j = 1}^d  \lambda_j\sum_{k = 1}^{j-1} \|({\bm W}_j^{T+1})^{k, :}\|_2 \\
  \leq & \sum_{j = 1}^d  \lambda_j\sum_{k = 1}^{j-1} \|(\widehat{\bm W}_j)^{k, :} - ({\bm W}_j^{T+1})^{k, :}\|_2 \\
  \leq & \sum_{j = 1}^d  \lambda_j \sqrt{j} \|\widehat{\bm W}_j - {\bm W}_j^{T+1}\|_2 \leq \sum_{j = 1}^d  \lambda_j \sqrt{j} \|\widehat{\bm \theta}_j - {\bm \theta}_j^{T+1}\|_2. 
\end{align*}
Applying Cauchy's inequality and subsequently Young's inequality, we have 
\begin{align}
   \mathbb{E}\left[ \sum_{j = 1}^d \lambda_{j} L(\widehat{\bm W}_j) - \sum_{j = 1}^d \lambda_{j} L({\bm W}^{T+1}_j)\right] \leq &   \mathbb{E}\left[ \sum_{j = 1}^d  \lambda_j \sqrt{j} \|\widehat{\bm \theta}_j - {\bm \theta}_j^{T+1}\|_2 \right]\\
    \leq & \mathbb{E}\left[ \|\widehat{\bm \theta} - {\bm \theta}^{T+1}\|_2 \sqrt{\sum_{j=1}^d \lambda_j^2 j} \right] \\
    \leq & \mathbb{E}\left[ \frac{\|\widehat{\bm \theta} - {\bm \theta}^{T+1}\|_2^a }{a} + \frac{\left(\sqrt{\sum_{j=1}^d \lambda_j^2 j}\right)^{\frac{a}{a-1}} }{(a - 1) / a}\right].
    \label{equ:proof-apply-young}
\end{align}
This brings \eqref{equ:proof-before-get-d-a} into 
\begin{align*}
    \mathbb{E}\left[d\left({\boldsymbol{\theta}}^{T+1}, \Theta\right)^{a} \right]
     = &     \mathbb{E}\left[\|\widehat{\bm \theta} - {\bm \theta}^{T+1}\|_2^a \right]\\
     \lesssim & (\sqrt{d} + \log n) n^{- \frac{1}{d}} +\sup_{f \in \mathcal{F}_{Lip, 1}}  \inf_{f' \in \mathcal{F}} \left\|f - f'\right\|_{\infty} +    \mathcal{E}\\
     & +
\mathbb{E}\left[ \frac{\|\widehat{\bm \theta} - {\bm \theta}^{T+1}\|_2^a }{a}\right] +   \left(\sum_{j=1}^d \lambda_j^2 j \right)^{\frac{a}{2(a-1)}}.
\end{align*}
Thus, some algebra leads to 
\begin{align}\label{equ:proof-final-result-of-d}
    \mathbb{E}\left[d\left({\boldsymbol{\theta}}^{T+1}, \Theta\right)^{a} \right] \lesssim (\sqrt{d} + \log n) n^{- \frac{1}{d}} +\sup_{f \in \mathcal{F}_{Lip, 1}}  \inf_{f' \in \mathcal{F}} \left\|f - f'\right\|_{\infty} +    \mathcal{E} + \left(\sum_{j=1}^d \lambda_j^2 j \right)^{\frac{a}{2(a-1)}}.
\end{align}

We proceed to show the induced sparsity of $\bm W$. 
Denote the function $\kappa({\bm W}_j)$ as setting the entries of ${\bm W}_j^{\Pi_j^c}$ to be 0. 
Also, denote $\kappa\left(\bm \theta\right)$ as replacing $\bm W$ in $\bm \theta$ by $\kappa(\bm W)$. 
As stated in the literature, if $\bm \theta \in \Theta$, then $\kappa(\bm \theta)$ too. 
See, for instance, Lemma 6 in \citet{wang2024penalized} and Lemma 3.1 in \citet{dinh2020consistent}. 
Thus, we start from 
\begin{align}\label{equ:proof-variable-selection-oracle}
 R_n({\bm \theta}^{T+1}, f_{{\bm \nu}^{T+1}})  +   \sum_{j = 1}^d \lambda_{j} L({\bm W}_j^{T+1}) \leq & 
    R_n(\overline{\bm \theta}, f_{\overline{\bm \nu}})  +   \sum_{j = 1}^d \lambda_{j} L(\overline{\bm W}_j) + \mathcal{E}
    \\ 
    \leq & 
    R_n(\kappa(\widehat{\bm \theta}), f_{\overline{\bm \nu}})  +   \sum_{j = 1}^d \lambda_{j} L(\kappa(\widehat{\bm W}_j)) + \mathcal{E}. 
    \nonumber
\end{align}
By definition, 
\begin{align}
   \sum_{j = 1}^d \lambda_{j}  L(\kappa(\widehat{\bm W}_j)) = \sum_{j = 1}^d  \lambda_j\sum_{k \in \Pi_j} \|(\widehat{\bm W}_j)^{k, :}\|_2. 
\end{align}
Thus,  \eqref{equ:proof-variable-selection-oracle} leads to 
\begin{align}
    \mathbb{E}\left[ \sum_{j=1}^d \lambda_{j} \sum_{k \in \Pi_j^c }\left\| \left({\bm W}_j^{T+1}\right)^{k}\right\|_2\right] \leq & R_n(\kappa(\widehat{\bm \theta}), f_{\overline{\bm \nu}})  - R_n({\bm \theta}^{T+1}, f_{{\bm \nu}^{T+1}}) \\
     + &
    \sum_{j = 1}^d  \lambda_j\sum_{k \in \Pi_j} \|(\widehat{\bm W}_j)^{k, :}\|_2 - \sum_{j = 1}^d  \lambda_j\sum_{k \in \Pi_j} \|({\bm W}_j^{T+1})^{k, :}\|_2\\
    \leq & R_n(\widehat{\bm \theta}, f_{\overline{\bm \nu}})  - R_n({\bm \theta}^{T+1}, f_{{\bm \nu}^{T+1}}) + \mathbb{E}\left[ \|\widehat{\bm \theta} - {\bm \theta}^{T+1}\|_2 \sqrt{\sum_{j=1}^d \lambda_j^2 j} \right],
    \label{equ:proof-derive-single-bound-W}
\end{align}
where the last line is due to the definition of $\kappa$ and a similar reasoning in \eqref{equ:proof-apply-young}. 
With a similar reasoning in \eqref{equ:proof-uniform-converge-on-l1-function-class} and \eqref{equ:proof-derive-approximiation-error}, we get 
\begin{align*}
    R_n(\widehat{\bm \theta}, f_{\overline{\bm \nu}})  - R_n({\bm \theta}^{T+1}, f_{{\bm \nu}^{T+1}}) \lesssim (\sqrt{d} + \log n) n^{- \frac{1}{d}} + \sup_{f \in \mathcal{F}_{Lip, 1}}  \inf_{f' \in \mathcal{F}} \left\|f - f'\right\|_{\infty} + \mathcal{E} = \psi_{n, d}. 
\end{align*}
Remind that we took $\lambda_j =  d^{-1} \psi_{n,d}^{\frac{1}{2}}$. 
Then, 
\begin{align*}
    \sum_{j=1}^d \lambda_j^2 j = \frac{1}{d^2} \frac{d (d+1)}{2} \psi_{n,d}^{\frac{2}{2}} \asymp \psi_{n,d}
\end{align*}
and \eqref{equ:proof-derive-single-bound-W} becomes
\begin{align*}
    \mathbb{E}\left[  \sum_{k \in \Pi_j^c }\left\| \left({\bm W}_j^{T+1}\right)^{k}\right\|_2\right] \lesssim &  \lambda_{j}^{-1} \left[\psi_{n, d} + \left(\psi_{n,d}^{\frac{1}{a}} + \left(\sum_{j=1}^d \lambda_j^2 j \right)^{\frac{1}{2(a-1)}} \right)\sqrt{\sum_{j=1}^d \lambda_j^2 j} \right]\\
    \asymp &  \lambda_{j}^{-1} \psi_{n,d}^{\frac{a}{2 (a-1)}}  =  d \cdot \psi_{n,d}^{\frac{1}{2(a-1)}},
\end{align*}
which also utilized \eqref{equ:proof-final-result-of-d}. 
Similarly, this choice of $\lambda_j$ leads to 
\begin{align*}
    \mathbb{E}\left[d\left({\boldsymbol{\theta}}^{T+1}, \Theta\right) \right] \lesssim \psi_{n,d}^{\frac{1}{a}} + \psi_{n,d}^{\frac{1}{2(a-1)}} \asymp\psi_{n,d}^{\frac{1}{2(a-1)}}. 
\end{align*}

\end{proof}

\begin{lemma}[\textbf{Exact Selection}]\label{lem:exact-selection}
Suppose the conditions in Theorem \ref{equ:thm-bound-of-feature-selection} hold. 
If $d = \log^{c_1} d$, then there exists a positive threshold $\tau$ such that, for all $j\in[d]$ and $k\in [j-1]$,
\begin{align}
    \mathrm{P} \left( \left\|(\bm W^{T+1}_j)^{k,:}\right\|_2 = 0 \iff k \in \Pi_j \right) \geq 1 - 2 \psi_{n,d}^{\frac{1}{4(a-1)}}.
\end{align}
    
\end{lemma}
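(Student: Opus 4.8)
\textbf{Proof plan for Lemma \ref{lem:exact-selection}.}

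The plan is to exhibit a single threshold $\tau = \tau_n \to 0$ separating the group norms of the redundant rows of $\bm W_j^{T+1}$ from those of the genuine-parent rows, and then to control the two failure events by Markov's inequality applied to the two estimates of Theorem \ref{thm:feature-selection}. Write $r_n := \psi_{n,d}^{1/(2(a-1))}$, the rate appearing in \eqref{equ:thm-bound-of-estimation}, so the target failure probability is $2\sqrt{r_n} = 2\psi_{n,d}^{1/(4(a-1))}$; note $r_n \to 0$, hence $r_n = o(\sqrt{r_n})$, and under the assumed regime $d = O(\log^{c_1} n)$ with $c_1<1$ one also has $d\sqrt{r_n}\to 0$.

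\emph{Redundant rows are small.} Fix $j$ and $k\in\Pi_j^c$. By \eqref{equ:thm-bound-of-feature-selection},
\[
\mathbb{E}\,\|(\bm W_j^{T+1})^{k}\|_2 \;\le\; \mathbb{E}\Big[\,\textstyle\sum_{k'\in\Pi_j^c}\|(\bm W_j^{T+1})^{k'}\|_2\,\Big] \;\lesssim\; d\,r_n,
\]
so Markov's inequality yields, for a constant $C_1$ absorbing the hidden factor, $\mathrm{P}\!\big(\|(\bm W_j^{T+1})^{k}\|_2 > C_1 d\sqrt{r_n}\big) \le \sqrt{r_n}$. I take $\tau := C_1 d\sqrt{r_n}$, which tends to $0$ by the remark above.

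\emph{Genuine-parent rows stay above $\tau$.} This requires a detectability constant $c_0 := \min_{j}\min_{k\in\Pi_j}\inf_{\bm\theta\in\Theta}\|(\bm W_j)^{k,:}\|_2 > 0$. Every $\bm\theta\in\Theta$ attains the minimal Wasserstein distance over $\mathcal{G}$; when $\mathcal{G}$ is rich enough to realize $\mathrm{P}$ (or, more generally, when near-optimal generators still load on every true parent), such a $g_{\bm\theta}$ must depend genuinely on $X^{k}$ for each $k\in\Pi_j$ --- since $\Pi_j$ is the minimal parent set and $(\bm W_j)^{k,:}=0$ would make $g^{j}$ independent of $X^{k}$ --- so $\|(\bm W_j)^{k,:}\|_2 > 0$ pointwise on $\Theta$. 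Compactness of $\Theta$ (a closed subset of the compact parameter space) and continuity of $\bm\theta\mapsto\|(\bm W_j)^{k,:}\|_2$ upgrade this to the uniform bound $c_0>0$. Now let $\widehat{\bm\theta}\in\Theta$ realize $d(\bm\theta^{T+1},\Theta)$; for $k\in\Pi_j$,
\[
\|(\bm W_j^{T+1})^{k,:}\|_2 \;\ge\; \|(\widehat{\bm W}_j)^{k,:}\|_2 - \|\bm\theta^{T+1}-\widehat{\bm\theta}\|_2 \;\ge\; c_0 - d(\bm\theta^{T+1},\Theta).
\]
By \eqref{equ:thm-bound-of-estimation} and Markov, $\mathrm{P}\big(d(\bm\theta^{T+1},\Theta) > c_0/2\big) \lesssim r_n \le \sqrt{r_n}$ for $n$ large; on the complementary event $\|(\bm W_j^{T+1})^{k,:}\|_2 \ge c_0/2 > \tau$ for $n$ large, since $\tau\to0$.

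\emph{Conclusion and main obstacle.} For a fixed pair $(j,k)$ exactly one of the two cases applies, so the relevant Markov bound already shows that $\|(\bm W_j^{T+1})^{k,:}\|_2 \le \tau$ holds precisely when $k\notin\Pi_j$, with probability at least $1-\sqrt{r_n}\ge 1-2\sqrt{r_n}$; equivalently, the thresholding step of Algorithm \ref{alg:privdag-gan-two-step} zeroes the row $(\bm W_j^{T+1})^{k,:}$ if and only if $k$ is not a true parent, which is the asserted exact selection (intersecting all such events over $(j,k)$ gives the simultaneous version at the same rate). The step I expect to be the main obstacle is establishing $c_0>0$: it relies on $\mathrm{P}$ being (approximately) realizable in $\mathcal{G}$ and on the Bayes-network parent sets being minimal, so that an optimal generator cannot silently drop a true parent; once positivity on $\Theta$ is secured, the uniform lower bound is a routine compactness argument, and the rest is just Markov's inequality combined with the two bounds of Theorem \ref{thm:feature-selection}.
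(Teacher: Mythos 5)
Your proof follows essentially the same route as the paper: apply Markov's inequality to both bounds of Theorem~\ref{thm:feature-selection} to control redundant and genuine-parent rows respectively, and separate them by a vanishing threshold sitting below a uniform detectability level on $\Theta$. The one place you elaborate beyond the paper is in establishing the lower bound $c_0>0$ on genuine-parent row norms (via realizability, minimality of $\Pi_j$, and a compactness/continuity argument), which the paper instead defers to an analogous argument in \citet{wang2024penalized}; you also implicitly state the iff in the correct direction ($\|(\bm W_j^{T+1})^{k,:}\|_2$ small $\iff k\notin\Pi_j$), matching the paper's usage in Theorem~\ref{thm:estimation}'s proof rather than the lemma statement's sign.
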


\begin{proof}[\textbf{Proof of Lemma \ref{lem:exact-selection}}]
We first show that for all ${\bm\theta}\in \Theta$, there exists a $\widehat{\tau}$ such that
\begin{align*}
     \left\|(\widehat{\bm W}^{T+1}_j)^{k,:}\right\|_2 \geq  \widehat{\tau} \iff k \in \Pi_j 
\end{align*}
If no such $\widehat{\tau}$ exists, then there exists a $(j,k)$ such that $\left\|(\widehat{\bm W}^{T+1}_j)^{k,:}\right\|_2\to 0$. 
Suppose $j \in \Pi_{j'}$. 
Then, there exists a constant $c_W$, 
\begin{align*}
   \mathcal{W}\left(  \mathrm{P}_{g_{\widehat{\bm \theta}^{T+1}}(Z)},   \mathrm{P}_{g_{\tau(\widehat{\bm \theta}^{T+1})}(Z)} \right) \geq c_W. 
\end{align*}
See for a similar argument in \citet{wang2024penalized}.
By Theorem \ref{equ:thm-bound-of-feature-selection}, for all $k \in \Pi_j$, 
\begin{align*}
    \mathbb{E}\left[ \max_{k\in\Pi_j}\left\|(\bm W^{T+1}_j)^{k,:} - (\widehat{\bm W}^{T+1}_j)^{k,:}\right\|_2\right] \leq  \mathbb{E}\left[d\left({\boldsymbol{\theta}}^{T+1}, \Theta\right) \right] \lesssim \psi_{n,d}^{\frac{1}{2(a-1)}}. 
\end{align*}
This implies that 
\begin{align*}
   \mathrm{P}\left(\max_{k\in\Pi_j}\left\|(\bm W^{T+1}_j)^{k,:} - (\widehat{\bm W}^{T+1}_j)^{k,:}\right\|_2 \geq \psi_{n,d}^{\frac{1}{4(a-1)}}\right) \lesssim  \psi_{n,d}^{\frac{1}{4(a-1)}}. 
\end{align*}
Then, following the previous derivation, we have 
\begin{align*}
   \widehat{\tau} -  \left\|(\bm W^{T+1}_j)^{k,:} \right\|_2
  \leq  
 \left\| (\widehat{\bm W}^{T+1}_j)^{k,:}\right\|_2 -  \left\|(\bm W^{T+1}_j)^{k,:} \right\|_2 
   \leq   \left\|(\bm W^{T+1}_j)^{k,:} - (\widehat{\bm W}^{T+1}_j)^{k,:}\right\|_2.  
\end{align*}
Thus, we have 
\begin{align}\label{equ:important-variable-selection}
    \mathrm{P}\left( \min_{k \in \Pi_j} 
    \left\|(\bm W^{T+1}_j)^{k,:} \right\|_2  \geq \widehat{\tau} - \psi_{n,d}^{\frac{1}{4(a-1)}}  \right) \leq 1 - \psi_{n,d}^{\frac{1}{4(a-1)}}
\end{align}
by Markov's inequality. 
Turning to the unimportant variables, we use Theorem \ref{equ:thm-bound-of-feature-selection}.
For each $j\in [d]$, 
\begin{align*}
\mathbb{E}\left[ \left\|(\bm W^{T+1}_j)^{k,:} \right\|_2\right] \leq 
    \mathbb{E}\left[ \sum_{k \in \Pi_j^c }\left\| \left({\bm W}_j^{T+1}\right)^{k}\right\|_2\right] \lesssim  d \cdot \psi_{n,d}^{\frac{1}{2(a-1)}}, k\in [j-1]. 
\end{align*}
Thus, again by Markov's inequality, we have 
\begin{align*}
    \mathrm{P} \left( \max_{j}\left\|(\bm W^{T+1}_j)^{k,:} \right\|_2 \geq d^2 \cdot \psi_{n,d}^{\frac{1}{4(a-1)}}  \right) \leq & 
    \sum_{j=1}^d
    \mathrm{P} \left( \left\|(\bm W^{T+1}_j)^{k,:} \right\|_2 \geq d^2 \cdot \psi_{n,d}^{\frac{1}{4(a-1)}}  \right)\\
    \leq &d \cdot  d \cdot \psi_{n,d}^{\frac{1}{2(a-1)}}  / \left(d^2\cdot \psi_{n,d}^{\frac{1}{4(a-1)}}\right) = \psi_{n,d}^{\frac{1}{4(a-1)}}. 
\end{align*}
This, together with \eqref{equ:important-variable-selection} yields the derived conclusion. 
    
\end{proof}

\begin{lemma}[\textbf{Reduced Discriminator Complexity}]\label{lem:reduced-disco-complexity}
Remind that we defined $J_0^j = \{j\}$, $J_{k+1}^j = \bigcup_{j \in J_{k}^j} \Pi_j $, and $J_{\infty}^j = \lim_{k\to \infty} J_{k}^j$. 
For each $j$, there exists some $j'$ such that if $j\in J_{\infty}^k$, then $k \in J_{\infty}^{j'}$.
Then let $j' = h(j)$ from $[d]$ to $[d]$ be this map from $j$ to find its smallest ancestor feature. 
Let $\mathcal{A}$ be the collection of all such smallest ancestors. 
Define
\begin{align}
\mathcal{F}_{\Pi} = \left\{ \sum_{j\in \mathcal{A}} f_j (x^{J_{\infty}^j}) \;\;  \bigg| f_j \in \mathcal{F}_{Lip, 1} (\mathbb{R}^{|J_{\infty}^j|+1}) \right\}. 
\end{align}
For any data-generating process $\mathrm{P}$ and $\mathrm{Q}$ with Bayes networks, i.e., Assumption \ref{asp:bayesnetwork}, there holds 
\begin{align}
    \sup_{f \in \mathcal{F}_{Lip, 1} } \left( \mathbb{E}_{\mathrm{P}} \left[f(X)\right] - \mathbb{E}_{\mathrm{Q}} \left[f(X)\right] \right) 
    = \sup_{f \in \mathcal{F}_{\Pi}} 
    \left( \mathbb{E}_{\mathrm{P}} \left[f(X)\right] - \mathbb{E}_{\mathrm{Q}} \left[f(X)\right] \right).
\end{align}
\end{lemma}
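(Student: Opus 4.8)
The plan is to reduce both sides of the claimed identity to the same quantity, namely a sum of Wasserstein distances over the ancestor-blocks, by combining Kantorovich--Rubinstein duality with the product structure that Assumption~\ref{asp:bayesnetwork} forces on $\mathrm{P}$ and $\mathrm{Q}$. Write $C_j := J_{\infty}^j \cup \{j\}$ for $j \in \mathcal{A}$. The combinatorial content packaged in the definitions of $h$ and $\mathcal{A}$ is precisely that $\{C_j\}_{j\in\mathcal{A}}$ is a partition of $[d]$ into \emph{ancestor-closed} sets (if $m \in C_j$ then $\Pi_m \subseteq C_j$). Granting this, the first step is to observe, from the factorization \eqref{equ:bayesnetautoregressive}, that $\mathrm{P} = \bigotimes_{j\in\mathcal{A}} \mathrm{P}_{C_j}$ and $\mathrm{Q} = \bigotimes_{j\in\mathcal{A}} \mathrm{Q}_{C_j}$: grouping the product $\prod_{k=1}^d \mathrm{P}[X^k \mid \Pi_k]$ according to which block contains $k$, each group involves only coordinates lying in a single $C_j$ (by ancestor-closedness), so the blocks are mutually independent under $\mathrm{P}$, and likewise under $\mathrm{Q}$.

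Next I would compute the right-hand side. Since the summands $f_j$ defining $\mathcal{F}_{\Pi}$ may be chosen independently of one another, and $\mathbb{E}_{\mathrm{P}}[f_j(X^{C_j})]$ depends only on the marginal $\mathrm{P}_{C_j}$, the supremum of a sum equals the sum of the suprema:
\[
\sup_{f \in \mathcal{F}_{\Pi}} \big(\mathbb{E}_{\mathrm{P}}[f] - \mathbb{E}_{\mathrm{Q}}[f]\big) = \sum_{j\in\mathcal{A}} \sup_{f_j \in \mathcal{F}_{Lip, 1}} \big(\mathbb{E}_{\mathrm{P}}[f_j(X^{C_j})] - \mathbb{E}_{\mathrm{Q}}[f_j(X^{C_j})]\big) = \sum_{j\in\mathcal{A}} \mathcal{W}(\mathrm{P}_{C_j}, \mathrm{Q}_{C_j}),
\]
where the last equality is Kantorovich--Rubinstein duality applied on $\mathbb{R}^{|C_j|}$. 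Note this step uses neither Assumption~\ref{asp:bayesnetwork} nor the independence of the blocks.

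For the left-hand side, Kantorovich--Rubinstein duality gives $\sup_{f \in \mathcal{F}_{Lip, 1}}(\mathbb{E}_{\mathrm{P}}[f]-\mathbb{E}_{\mathrm{Q}}[f]) = \mathcal{W}(\mathrm{P},\mathrm{Q})$, so it remains to show $\mathcal{W}(\mathrm{P},\mathrm{Q}) = \sum_{j\in\mathcal{A}} \mathcal{W}(\mathrm{P}_{C_j},\mathrm{Q}_{C_j})$. The inequality ``$\le$'' follows by taking the product of optimal couplings on the blocks — admissible precisely because $\mathrm{P}$ and $\mathrm{Q}$ factor as products over the $C_j$ by the first step — and using that the transport cost is additive across the blocks (as it is for the $\ell_1$ metric underlying the block-separable form of $\mathcal{F}_{\Pi}$). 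The inequality ``$\ge$'' follows because, for any coupling $\pi$ of $(\mathrm{P},\mathrm{Q})$, its marginal on the $C_j$-coordinates is a coupling of $(\mathrm{P}_{C_j},\mathrm{Q}_{C_j})$, so that piece of the cost is at least $\mathcal{W}(\mathrm{P}_{C_j},\mathrm{Q}_{C_j})$, and summing over the partition recovers the total cost. Combining the three displays yields the claim. (The direction ``$\ge$'' of the lemma can alternatively be read off once $\{C_j\}$ is a partition: any $f=\sum_j f_j \in \mathcal{F}_{\Pi}$ is then itself $1$-Lipschitz for the $\ell_1$ metric, so $\mathcal{F}_{\Pi} \subseteq \mathcal{F}_{Lip, 1}$; I would still run the Wasserstein argument, since it delivers ``$\le$'' in the same breath.)

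The main obstacle is the first step: making rigorous that $\{C_j\}_{j\in\mathcal{A}}$ really is a partition of $[d]$ into ancestor-closed blocks and that the Bayes-network factorization \eqref{equ:bayesnetautoregressive} then collapses to a product over these blocks. This is where the structure hidden in the map $h$ does its work — one must check that every coordinate lies in exactly one ancestor closure $J_{\infty}^j$ with $j\in\mathcal{A}$, which leans on the property stated just before the lemma that the descendants of any feature share a common ``smallest'' descendant. A secondary point requiring care is the additivity of the transport cost across blocks, which is why the $\ell_1$-type metric (implicit in writing discriminators as sums of per-block $1$-Lipschitz functions) is the natural choice here; the remaining pieces (the two invocations of duality and the coupling manipulations) are routine.
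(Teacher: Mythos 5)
Your proof follows the same route as the paper's: express both sides via Kantorovich--Rubinstein duality, decompose the Wasserstein distance over the independent, ancestor-closed blocks indexed by $\mathcal{A}$, and recombine by exchanging supremum and sum. You fill in two points the paper leaves implicit --- the verification that the blocks $\{C_j\}_{j\in\mathcal{A}}$ actually partition $[d]$ and that the transport cost must be additive across blocks (so the underlying metric should be $\ell_1$ rather than $\ell_2$ for the intermediate identity $\mathcal{W}(\mathrm{P},\mathrm{Q})=\sum_{j\in\mathcal{A}}\mathcal{W}(\mathrm{P}_{C_j},\mathrm{Q}_{C_j})$ to hold with equality) --- which is a genuine tightening of the paper's one-line appeal to ``decomposition of the Wasserstein distance along independent features.''
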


\begin{proof}[\textbf{Proof of Lemma \ref{lem:reduced-disco-complexity}}]

We have 
\begin{align*}
    \sup_{f \in \mathcal{F}_{Lip, 1} } \left( \mathbb{E}_{\mathrm{P}} \left[f(X)\right] - \mathbb{E}_{\mathrm{Q}} \left[f(X)\right] \right)
    = \mathcal{W}(\mathrm{P}, \mathrm{Q})
    =  \sum_{j\in\mathcal{A}} \mathcal{W}(\mathrm{P}^{J^j_{\infty}}, \mathrm{Q}^{J^j_{\infty}}),
\end{align*}
where the last equation is due to the decomposition of the Wasserstein distance along independent features. 
Then, 
\begin{align*}
    \sum_{j\in\mathcal{A}} \mathcal{W}(\mathrm{P}^{J^j_{\infty}}, \mathrm{Q}^{J^j_{\infty}}) = & \sum_{j\in\mathcal{A}} 
    \sup_{f\in \mathcal{F}_{Lip, 1}(\mathbb{R}^{|J^j_{\infty}|+1})}(\mathbb{E}_{\mathrm{P}^{J^j_{\infty}}}\left[f(X)\right] 
    - \mathbb{E}_{\mathrm{Q}^{J^j_{\infty}}}\left[f(X)\right])\\
    = & \sup_{f \in \mathcal{F}_{\Pi} } \left( \mathbb{E}_{\mathrm{P}} \left[f(X)\right] - \mathbb{E}_{\mathrm{Q}} \left[f(X)\right] \right).
\end{align*}
    
\end{proof}

\begin{proof}[\textbf{Proof of Theorem \ref{thm:estimation}}]

Lemma \ref{lem:exact-selection} indicates that $\left\|(\bm W^{T+1}_j)^{k,:} \right\|_2 = 0$ if and only if $k\notin \Pi_j$. 
Then by definition, with probability $1- 2 \psi_{n,d}^{\frac{1}{4(a-1)}}$, there holds
\begin{align*}
     \mathbb{E}\left[ \mathcal{W} \left( \mathrm{P}_{g_{\overline{\bm \theta}^{T+1}}(Z)}, \mathrm{P}_X\right)\right] = \mathbb{E} \left[\sup_{f \in \mathcal{F}_{Lip,1}} 
     \left(
     R(\overline{\bm \theta}^{T+1}, f)
     \right)
     \right] = \mathbb{E} \left[\sup_{f \in \mathcal{F}_{\Pi}} 
     \left(
     R(\overline{\bm \theta}^{T+1}, f)
     \right)
     \right]. 
\end{align*}
Following the same strategy of \eqref{equ:proof-uniform-converge-on-l1-function-class}, \eqref{equ:proof-derive-approximiation-error}, and \eqref{equ:proof-use-of-assumption-optimizaiton}, we get 
\begin{align*}
    \mathbb{E} \left[\sup_{f \in \mathcal{F}_{\Pi}} 
     \left(
     R(\overline{\bm \theta}^{T+1}, f)
     \right)
     \right] \lesssim (\sqrt{d} + \log n) n^{- \frac{1}{s}} + \sup_{f \in \mathcal{F}_{Lip, 1}}  \inf_{f' \in \mathcal{F}} \left\|f - f'\right\|_{\infty} + \overline{\mathcal{E}}. 
\end{align*}
Here, the exponential component on the first term becomes $-1 /s$, where $s = \max_{j\in[d]}J_{\infty}^j$. 
This is because Lemma 1 in \citet{wang2024penalized} used the bound of the covering number 
\begin{align*}
    \log N (\delta, \mathcal{F}_{Lip,1}, \|\cdot\|_{\infty}) \leq C \left(\frac{\log n}{\delta}\right)^{d},
\end{align*}
while here, we can consider a smaller bound 
\begin{align*}
    \log N (\delta, \mathcal{F}_{\Pi}, \|\cdot\|_{\infty}) \leq & \log \left(d \cdot N (\delta, \mathcal{F}_{Lip,1}(\mathbb{R}^s), \|\cdot\|_{\infty})\right)\\
    \leq & \log d + C \left(\frac{\log n}{\delta}\right)^{s}.  
\end{align*}

\end{proof}

\section{More on Experimental Settings}\label{exp_settings}

\subsection{Dataset Information}
We make a summary of these datasets’ basic statistics in Table \ref{tab:realdatainfo}.
These datasets are widely adopted in the literature of tabular data \citep{cai2023extrapolated}.
In this table, we represent the number of records, number of attributes, and attributes’
domain size range of these datasets. All of these datasets have been processed to ensure that there are no missing values.

\begin{table}[htbp]
\centering
\caption{Summary of real datasets.}
\label{tab:realdatainfo}
\resizebox{0.6\linewidth}{!}{
\renewcommand{\arraystretch}{1}
\setlength{\tabcolsep}{5pt}
\begin{tabular}{r|r|r|r|r}
\toprule
\multicolumn{1}{c|}{Dataset } & \multicolumn{1}{c|}{$n$} & \multicolumn{1}{c|}{ Attr } &  \multicolumn{1}{c|}{\begin{tabular}[c]{@{}c@{}}Min/Max\\ Domain Range\end{tabular}} & \multicolumn{1}{c}{\begin{tabular}[c]{@{}c@{}}Min/Max\\ Domain Range after\\ preprocessing\end{tabular}}  \\ \midrule
\textsc{California}         & 20640                                 & 8                                               & 1.652 $\sim$ 35679.000                                                                               & 4.378 $\sim$ 92.931                                                                                 \\ 
% \textsc{Jester}        & 48447                                 & 1                        & 24904                                          & 5                        & 5                                                                                 & 5                                                                              \\ 
\textsc{House-16H}          & 22784                                 & 17                                     &  0.512 $\sim$ 7322562.000                                                                                & 3.027  $\sim$ 84.820                                                                         \\ 
\textsc{Cpu-act}          & 8192                                & 20                        & 20.120 $\sim$ 2526371.000                                                                               & 4.761 $\sim$  36.358                                                                               \\ \bottomrule
\end{tabular}
}

\end{table}

\subsection{Computational Resources}

All experiments are conducted on a machine with 72-core Intel Xeon 2.60GHz and 128GB of main memory.

\subsection{DPSGD}
A gradient-based training algorithm can be privatized by using differentially private stochastic gradient descent (DPSGD) \citep{abadi2016deep, bassily2014private, song2013stochastic} as a direct replacement for standard SGD. DPSGD operates by clipping per-example gradients and adding Gaussian noise to their aggregated sum, thereby bounding and obfuscating the influence of any single data point on the learned model parameters. The privacy guarantees of DPSGD are derived using classical tools from the DP literature, including the Gaussian mechanism, privacy amplification via subsampling, and composition theorems \citep{abadi2016deep, balle2018privacy, dwork2006our, wang2019subsampled}. In our work, we implemented the DPSGD analysis in the Opacus library \citep{yousefpour2021opacus}.

\subsection{Hyperparameters}

We summarize the hyperparameter configurations used in our experiments in Section~\ref{real_data}. Table~\ref{tab:fixed_params} reports the set of hyperparameters that remain fixed across all privacy levels ($\varepsilon$) for each dataset. These include learning rates for the generator and discriminator, regularization parameters ($\lambda$, $\gamma$), the generator-to-discriminator update ratio, and the batch size. Table~\ref{tab:epsilon_params} presents the privacy-dependent hyperparameters, including the noise scale $\sigma$ used for DP and the number of discriminator update steps. These values are tuned individually for each dataset and $\varepsilon$ to balance the trade-off between privacy and utility.

\begin{table}[h]
\centering
\caption{Hyperparameters choices for \texttt{PrAda-GAN} across all $\varepsilon$ values.}
\label{tab:fixed_params}
\begin{tabular}{l|ccc}
\toprule
\textbf{Hyperparameter} & \textsc{California}& \textsc{House-16H} & \textsc{Cpu-act}\\
\midrule
$d_{lr}$ & 0.01 & 0.01 & 0.01  \\
$g_{lr}$& 0.001 & 0.001 & 0.001 \\
$\lambda$& 0.003 & 0.003 &  0.003\\
$\gamma$& 0.0 & 0.0 & 0.2 \\
$t_g$ & 1:10 & 1:10 & 1:10 \\
Batch Size & 50 & 50 & 50 \\
\bottomrule
\end{tabular}
\end{table}

\begin{table}[h]
\centering
\caption{Privacy-dependent hyperparameters for each dataset with clipping threshold $C=1.0$. }
\label{tab:epsilon_params}
\begin{tabular}{l|ccc|ccc|ccc}
\toprule
\textbf{Hyperparameter} & \multicolumn{3}{c|}{\textsc{California}} & \multicolumn{3}{c|}{\textsc{House-16H}} & \multicolumn{3}{c}{\textsc{Cpu-act}} \\
$\varepsilon$ & $0.2$ & $1$ & $5$ &  $0.2$ & $1$ & $5$ &  $0.2$ & $1$ & $5$ \\
\midrule
$\sigma$ & 10.00 & 2.00 & 0.80 & 7.90 & 2.00 & 0.77 & 13.00 & 4.00 & 1.20 \\
Discriminator Steps & 8000 & 7000 & 8000 & 6000 & 9000 & 8000 & 4000 & 7000 & 8000 \\
\bottomrule
\end{tabular}
\end{table}

\subsection{Model Architecture}
\label{sec:model_architecture}
Our GAN framework comprises a sequential generator and a unified discriminator. The generator is composed of multiple one-hidden-layer residual networks, each tasked with generating a single attribute conditioned on all previously generated attributes. These generators are trained in a fixed topological order that reflects an estimated Bayesian network structure. Each subgenerator features a hidden layer of size 10 with LeakyReLU activation. The discriminator is a fully connected network with two hidden layers, also using LeakyReLU activations. The first layer has a width equal to the input dimension (i.e., the number of attributes), while the second layer has half that size. Architectural configurations for each dataset are detailed in Table~\ref{tab:arch}.

\begin{table}[p]
\centering
\caption{Generator and discriminator architectures for each dataset. The discriminator consists of two layers: the first layer has the same dimensionality as the input data, while the second layer has half that width. }
\label{tab:arch}
\begin{tabular}{l|cc|cc}
\toprule
\textbf{Dataset} & \multicolumn{2}{c|}{\textbf{Generator}} & \multicolumn{2}{c}{\textbf{Discriminator}} \\
 & Num Generators & Hidden Layers & Input Dim & Hidden Layers \\
\midrule
\textsc{California} & 8  & [10]       & 8  & [8, 4] \\
\textsc{House-16H}  & 17 & [10]       & 17 & [17, 8] \\
\textsc{Cpu-act}    & 20 & [10]       & 20 & [20, 10] \\
\bottomrule
\end{tabular}
\end{table}

\subsection{Evaluation Metrics}\label{sec:evaluation}

We define the TVD as $ \frac{1}{2} \sum_{1\le i\le j\le d} |M_{i,j}^{syn} - M_{i,j}^{test}|$ where $M_{i,j}^{syn}$ and $M_{i,j}^{test}$ represent the 2-way marginals determined by the synthetic dataset and the test dataset respectively. 
See \citet{chen2025benchmarking} for details. 
Additional metrics, including Maximum Mean Discrepancy (MMD), Jensen-Shannon Divergence (JS), and 1-way marginal discrepancy (TVD (1-way)), are provided for comprehensive comparison.

\section{Additional Experimental Results}\label{exp_results}
\subsection{Synthetic Experiments}

Under the nonlinear setting described in Section~\ref{nonlinear_exp}, additional results for the analysis of learning rates and penalty parameters—evaluated using alternative metrics such as MMD, JS divergence, and one-way marginals—are provided in Figures~\ref{fig:ER_nonlinear_g_lr_d_lr_2} and~\ref{fig:ER_nonlinear_gamma_lambda_2}, respectively.

Under linear setting, the ground truth Bayes networks are generated from the scale-free (SF) graph model, with a fixed number of nodes $d=10$. For each generated network, we simulated $n=2000$ training samples. The synthetic data are generated according to a structural equation model (SEM) of the form: $X_j = f_j(\Pi_j) + z_j$, $j \in [d]$, where $z_j \sim \mathcal{N}(0,1)$, are independent noise terms and recall that $\Pi_j$ denotes the parent set of $X_j$ in the underlying Bayes network. 
Each function $f_j$ is specified as a linear combination of its parent variables: $f_j(\Pi_j ) = \sum_{j'\in \Pi_j} w_{j,j'} X^{j'}$.
Each weight $w_{j,j'}$ is sampled independently from a uniform distribution over $[0.5, 2.0]$ and is independently negated with probability $0.5$. 
The results for the parameter analysis of learning rates under the linear $f_j(\Pi_j)$ setting are shown in Figures~\ref{fig:SF_linear_g_lr_d_lr_1} and~\ref{fig:SF_linear_g_lr_d_lr_2}, while Figures~\ref{fig:SF_linear_lambda_gamma} and~\ref{fig:SF_linear_lambda_gamma_2} present the performance across varying penalty parameters $\lambda$ and $\gamma$.

\begin{figure}[p]
    \centering
    \begin{minipage}{\columnwidth}
        \centering
        \includegraphics[width=0.8\linewidth]{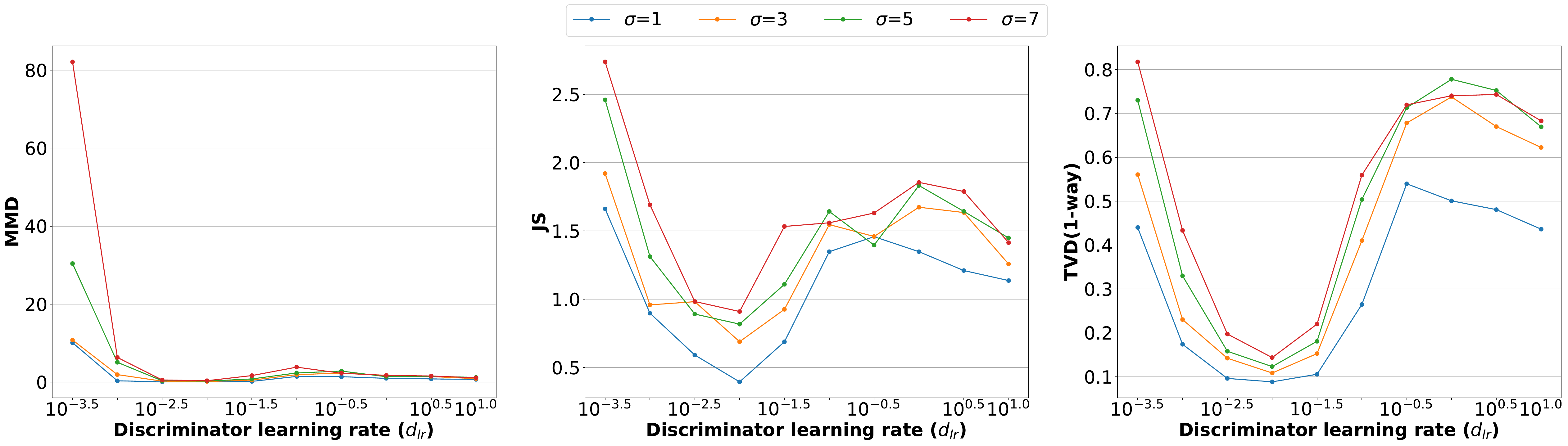}
        \label{fig:SF_d_lr_MMD}
    \end{minipage}
    \\  
    % \vspace{0.5cm}
    \begin{minipage}{\columnwidth}
        \centering
        \includegraphics[width=0.8\linewidth]{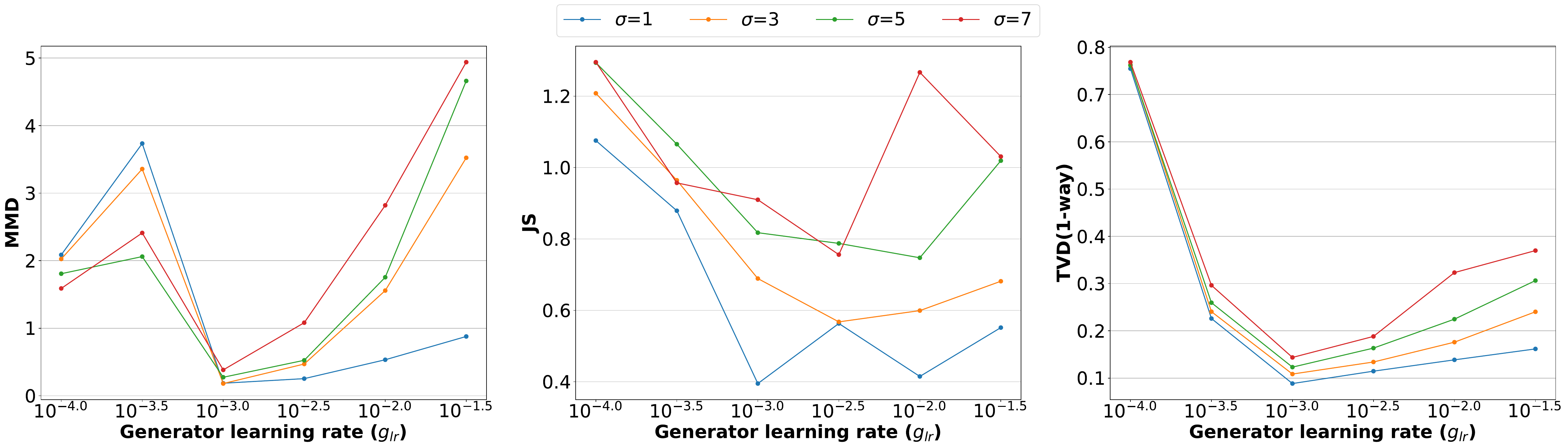}
        \label{fig:SF_g_lr_MMD}
    \end{minipage}
    \caption{Average MMD, JS and TVD(1-way) under nonlinear function $f_j(\Pi_j )$. Top: varying discriminator learning rates \( d_{\text{lr}} \) across $10^{h}$ for $h \in \{-3.5, \dots, 1\}$ with \( g_{\text{lr}} = 10^{-3} \). Bottom: varying generator learning rates \( g_{\text{lr}} \) across $10^{h}$ for $h \in \{-4, -3.5, \dots, -1.5\}$ with \( d_{\text{lr}} = 10^{-1} \).
}
    \label{fig:ER_nonlinear_g_lr_d_lr_2}
\end{figure}

\begin{figure}[p]
    \centering
    \begin{minipage}{\columnwidth}
        \centering
        \includegraphics[width=0.8\linewidth]{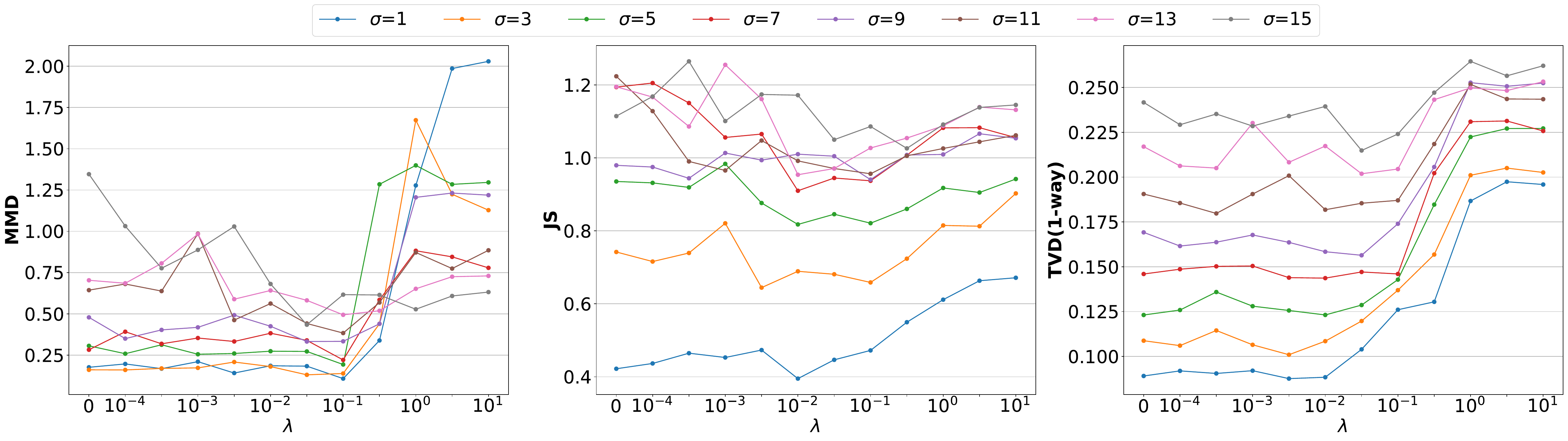}
        \label{fig:SF_lambda_MMD}
    \end{minipage}
    \\  
    % \vspace{0.5cm}
    \begin{minipage}{\columnwidth}
        \centering
        \includegraphics[width=0.8\linewidth]{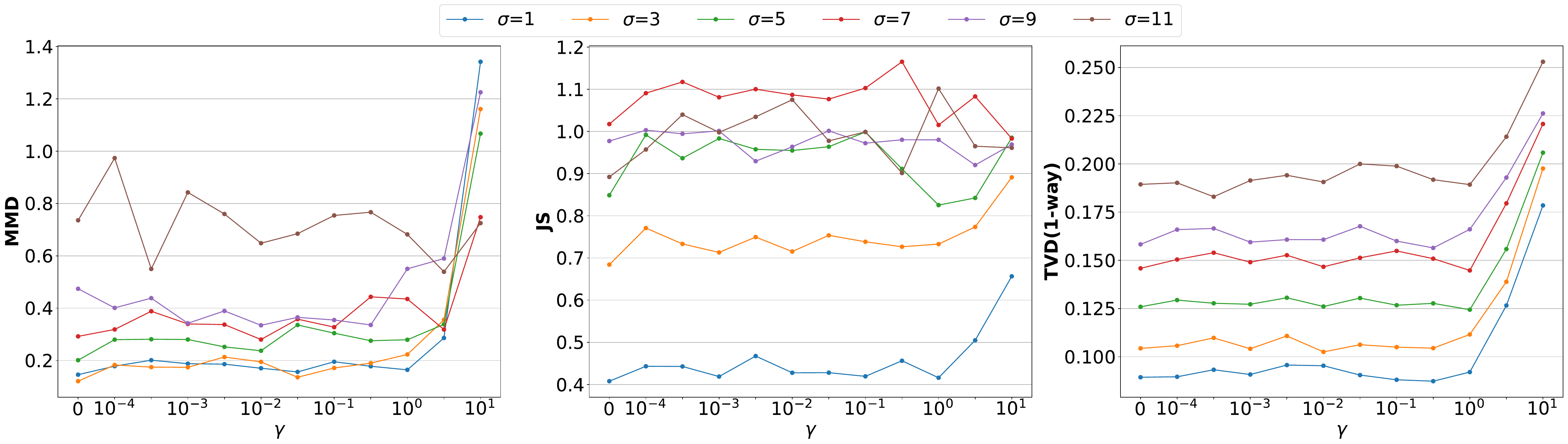}
        \label{fig:SF_gamma_MMD}
    \end{minipage}
    \caption{Average MMD, JS and TVD(1-way) under nonlinear function $f_j(\Pi_j )$, evaluated across varying $\lambda$ and $\gamma$.}.
    \label{fig:ER_nonlinear_gamma_lambda_2}
\end{figure}

\begin{figure}[p]
    \centering
    \begin{minipage}{\columnwidth}
        \centering
        \includegraphics[width=0.8\linewidth]{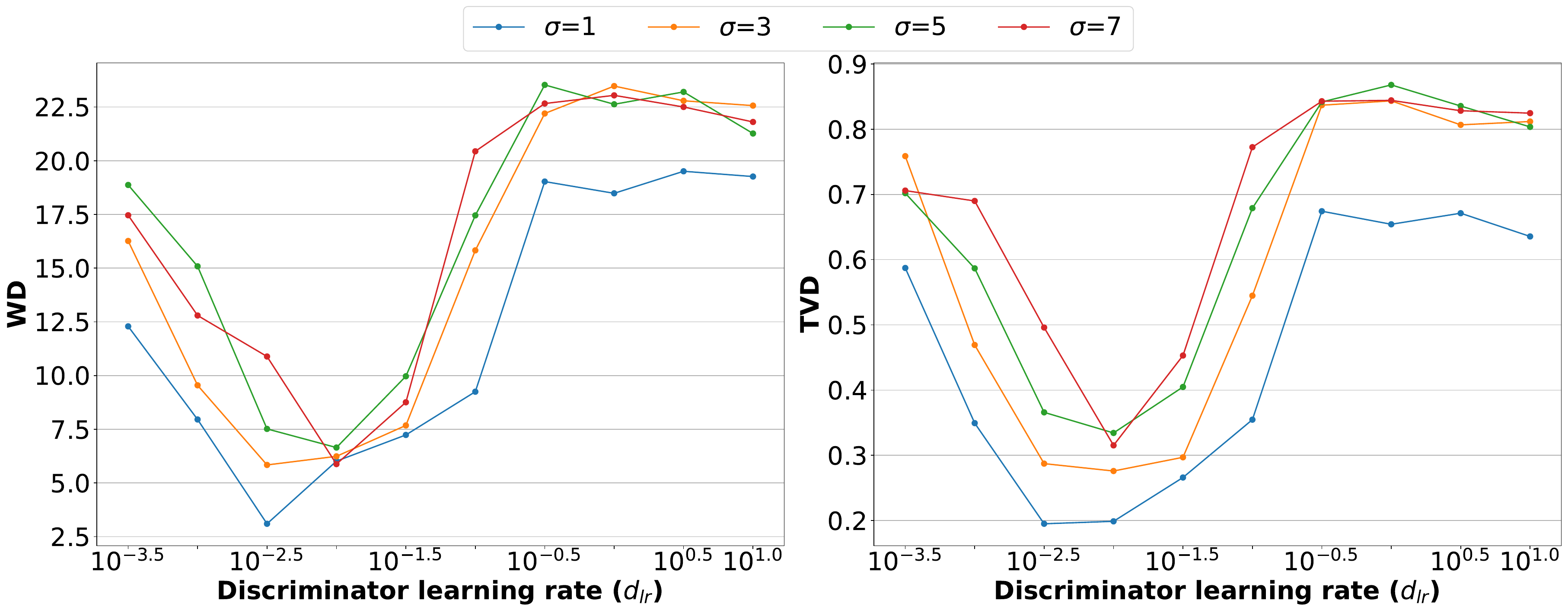}
        \label{fig:SF_d_lr_1}
    \end{minipage}
    \\  
    % \vspace{0.5cm}
    \begin{minipage}{\columnwidth}
        \centering
        \includegraphics[width=0.8\linewidth]{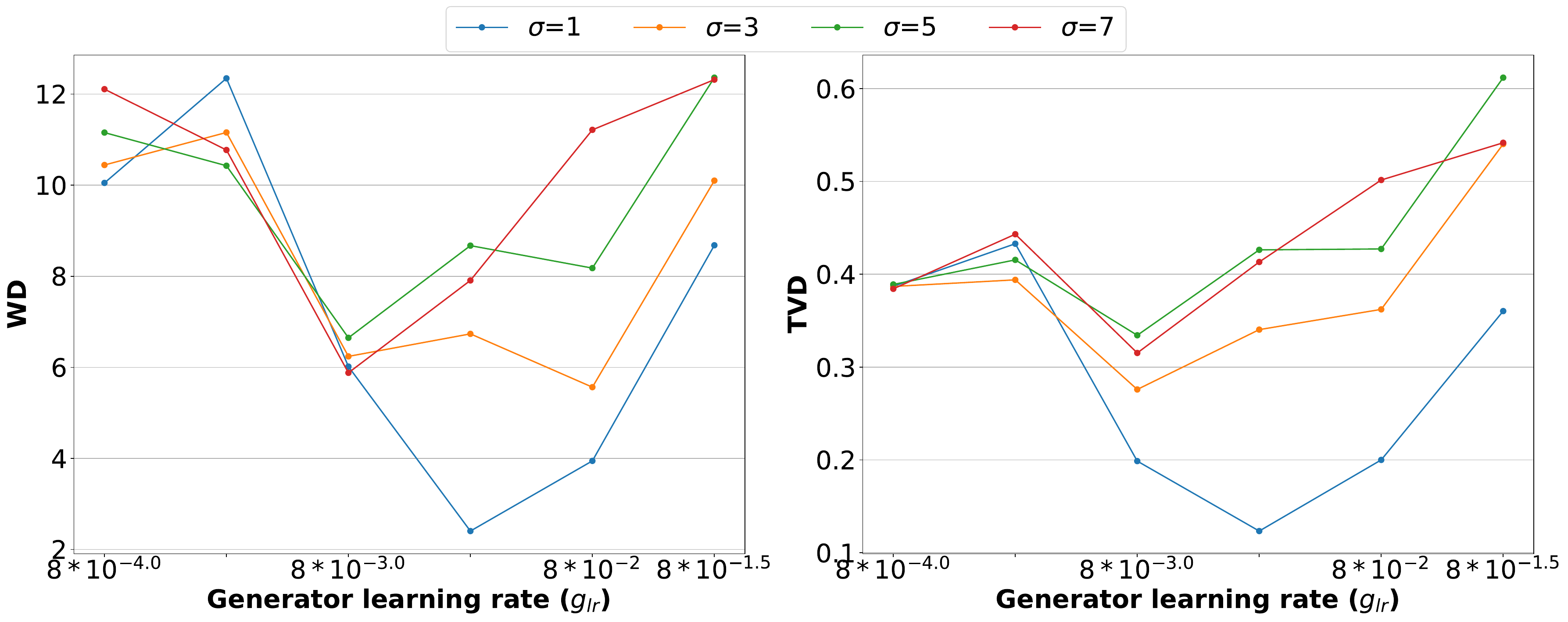}
        \label{fig:SF_g_lr_1}
    \end{minipage}
    \caption{Average WD and TVD. Top: varying discriminator learning rates \( d_{\text{lr}} \) across $10^{h}$ for $h \in \{-3.5, \dots, 1\}$ with \( g_{\text{lr}} = 8*10^{-3} \). Bottom: varying generator learning rates \( g_{\text{lr}} \) across $8*10^{h}$ for $h \in \{-4, -3.5, \dots, -1.5\}$ with \( d_{\text{lr}} = 10^{-1} \).
}
    \label{fig:SF_linear_g_lr_d_lr_1}
\end{figure}

\begin{figure}[p]
    \centering
    \begin{minipage}{\columnwidth}
        \centering
        \includegraphics[width=0.8\linewidth]{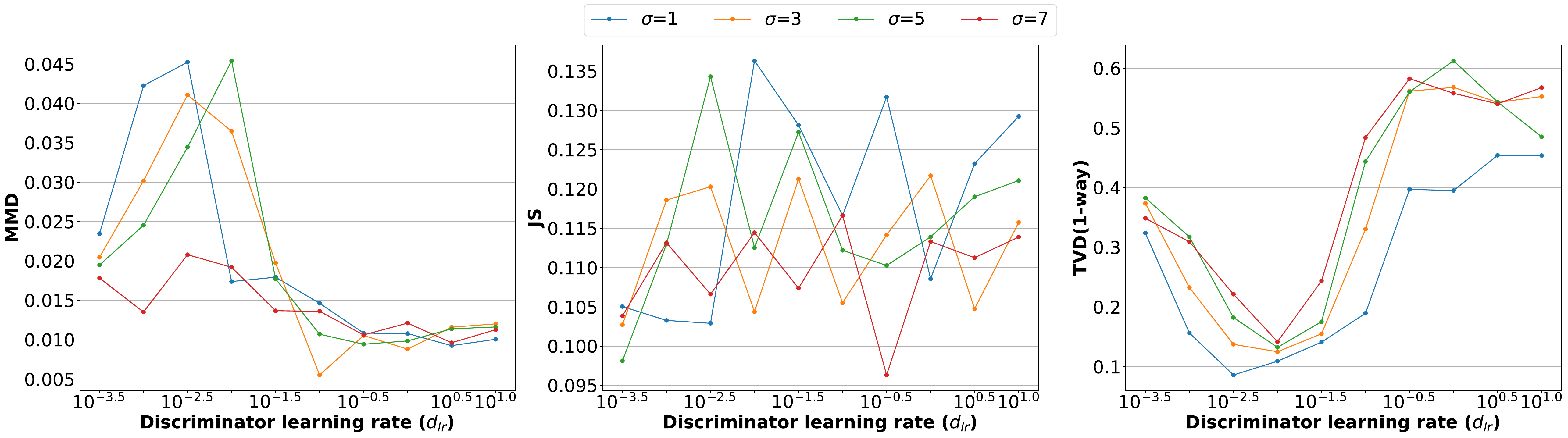}
        \label{fig:SF_d_lr_2}
    \end{minipage}
    \\  
    % \vspace{0.5cm}
    \begin{minipage}{\columnwidth}
        \centering
        \includegraphics[width=0.8\linewidth]{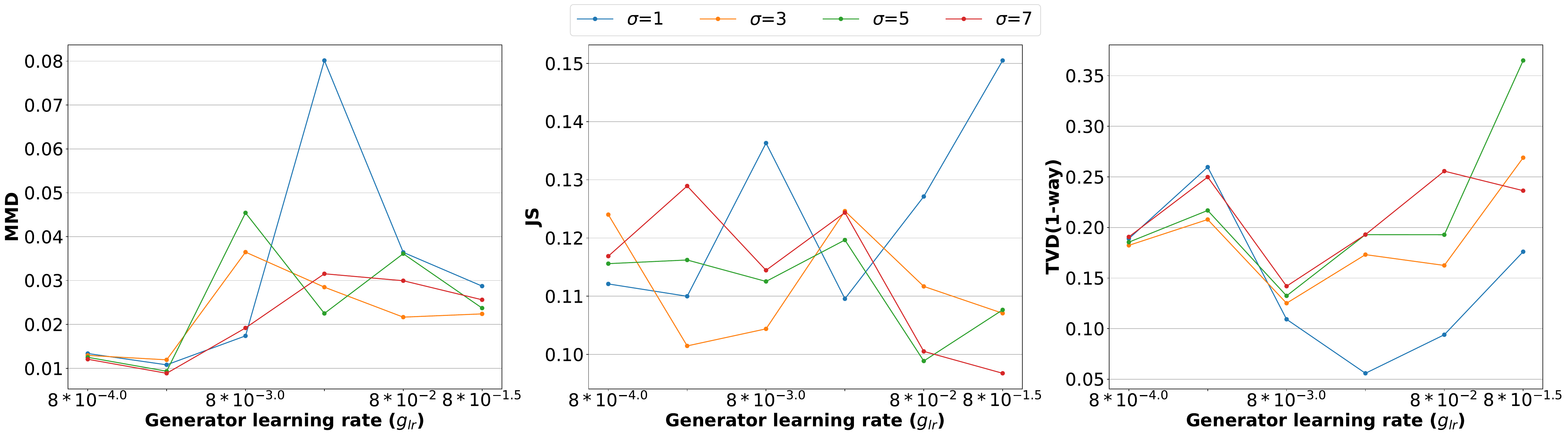}
        \label{fig:SF_g_lr_2}
    \end{minipage}
    \caption{Average MMD, JS and TVD(1-way). Top: varying discriminator learning rates \( d_{\text{lr}} \) across $10^{h}$ for $h \in \{-3.5, \dots, 1\}$ with \( g_{\text{lr}} = 8*10^{-3} \). Bottom: varying generator learning rates \( g_{\text{lr}} \) across $8*10^{h}$ for $h \in \{-4, -3.5, \dots, -1.5\}$ with \( d_{\text{lr}} = 10^{-1} \).
}
    \label{fig:SF_linear_g_lr_d_lr_2}
\end{figure}

\begin{figure}[p]
    \centering
    \begin{minipage}{\columnwidth}
        \centering
        \includegraphics[width=0.8\linewidth]{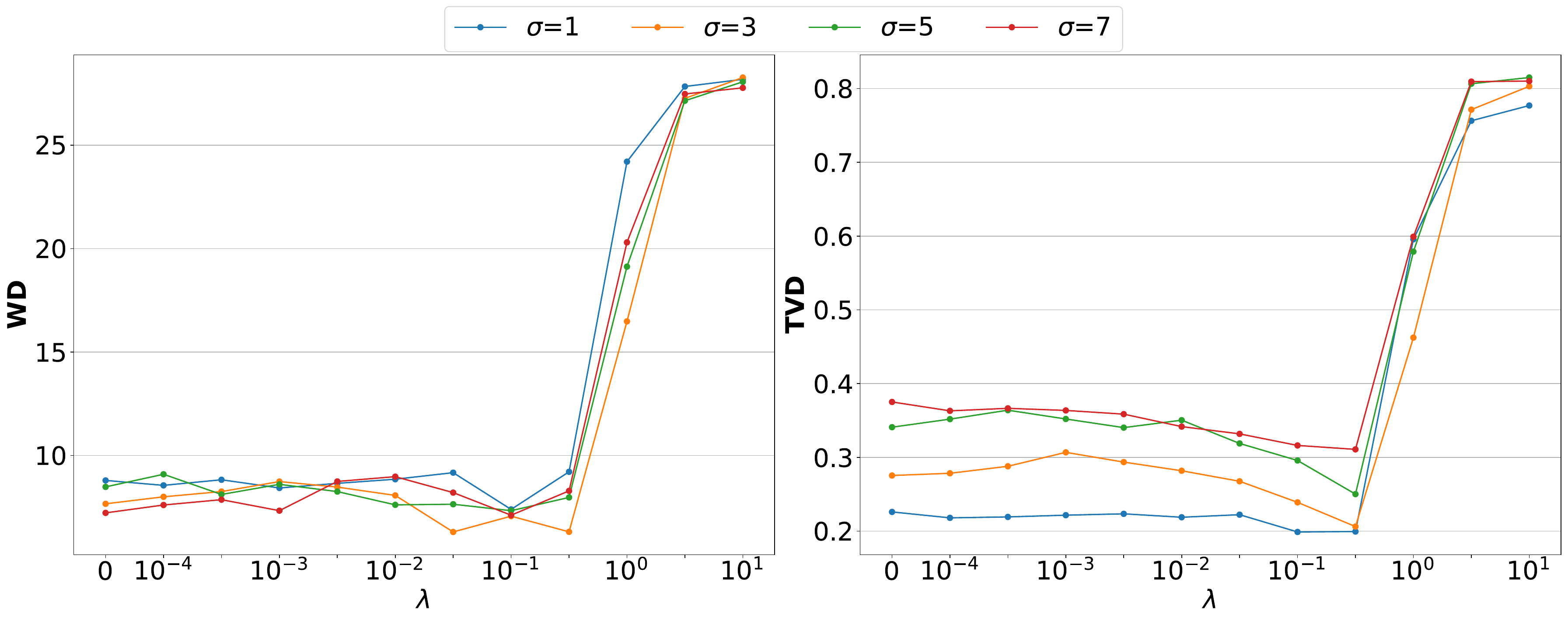}
        % \caption{MSE Loss}
        \label{fig:SF_lambda_1}
    \end{minipage}
    % \hfill
    \\  % 强制换行
    % \vspace{0.5cm}
    \begin{minipage}{\columnwidth}
    % \begin{minipage}{0.48\textwidth}
        \centering
\includegraphics[width=0.8\linewidth]{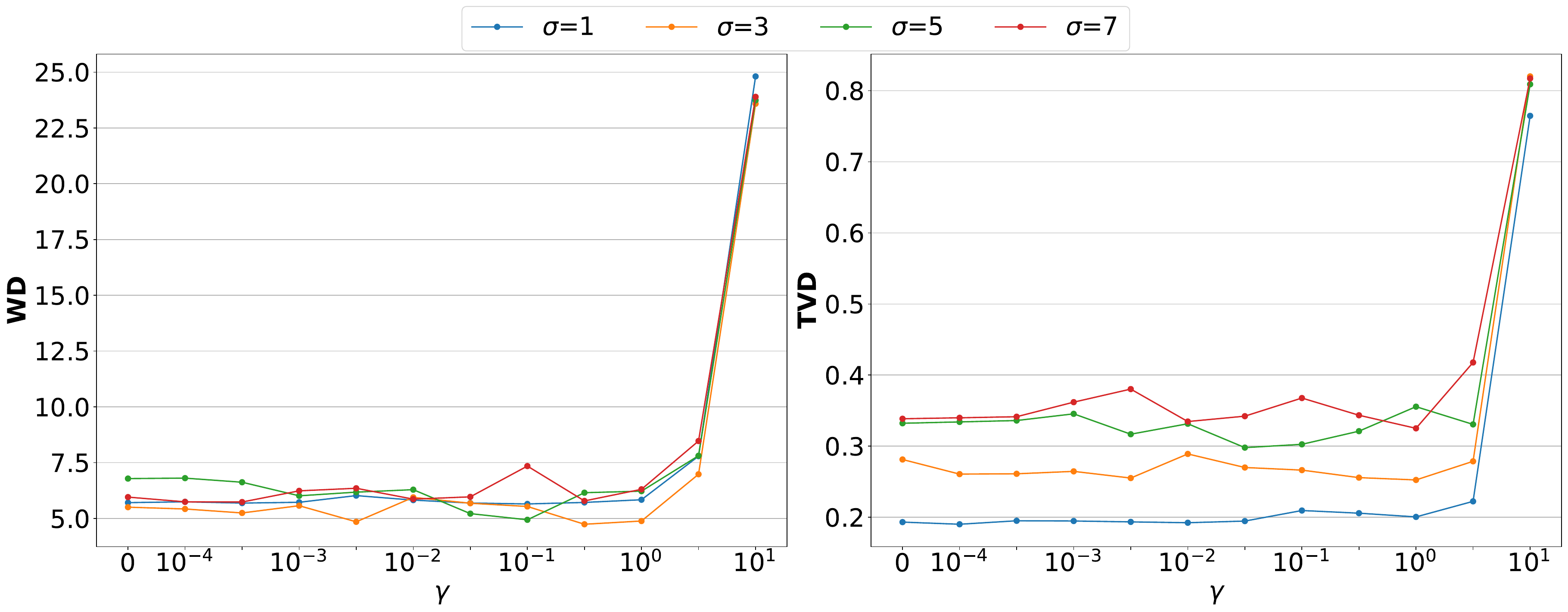}
        % \caption{Wasserstein distance}
        \label{fig:SF_gamma_1}
    \end{minipage}
    % \caption{Combined results}
    \caption{Average WD and TVD under varying $\lambda$ and $\gamma$.}
    \label{fig:SF_linear_lambda_gamma}
\end{figure}

\begin{figure}[p]
    \centering
    \begin{minipage}{\columnwidth}
        \centering
        \includegraphics[width=0.8\linewidth]{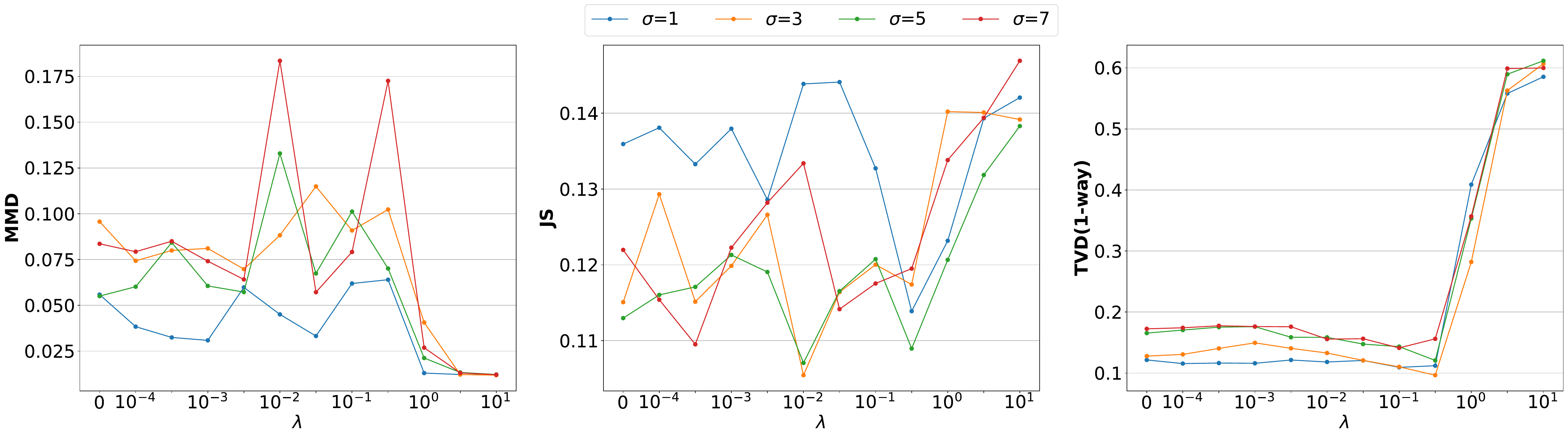}
        \label{fig:SF_lambda_2}
    \end{minipage}
    \\  
    % \vspace{0.5cm}
    \begin{minipage}{\columnwidth}
        \centering
        \includegraphics[width=0.8\linewidth]{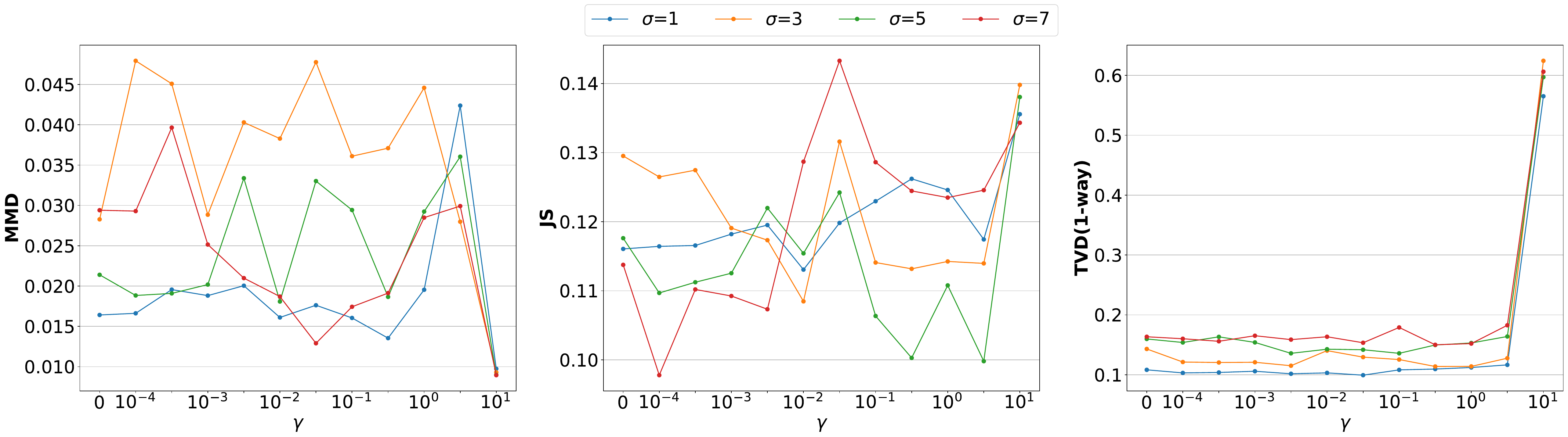}
        \label{fig:SF_gamma_2}
    \end{minipage}
    \vskip -0.1in
    \caption{Average MMD, JS and TVD(1-way) under varying $\lambda$ and $\gamma$.
}\label{fig:SF_linear_lambda_gamma_2}
\end{figure}

\subsection{Real Data Comparison}
We provide additional results to complement Section~\ref{real_data}. Table~\ref{tab:mmd_js_1-way} reports distributional similarity metrics, including MMD, JS divergence, and one-way marginals, while Table~\ref{tab:rmse} presents RMSE results evaluating downstream machine learning performance.

\begin{table*}[p]
\centering
\scriptsize
\caption{
Real data comparison for distribution similarity MMD, JS, and 1-way marginal.
Best results are in \textbf{bold}; the second-best are \underline{underlined}.
The best results that hold significance over the others have a *.
}
\label{tab:mmd_js_1-way}
\resizebox{1\linewidth}{!}{
\renewcommand{\arraystretch}{1.2}
\setlength{\tabcolsep}{2pt}
\begin{tabular}{lccccccccc cccccccccc cccccccccc}
\toprule
\textbf{Dataset} & \multicolumn{9}{c}{\textbf{\textsc{California}}} & \multicolumn{9}{c}{\textsc{House-16H}} & \multicolumn{9}{c}{\textsc{Cpu-act}} \\
\cmidrule(lr){2-10} \cmidrule(lr){11-19} \cmidrule(lr){20-28}
$\varepsilon$ & \multicolumn{3}{c}{0.2} & \multicolumn{3}{c}{1} & \multicolumn{3}{c}{5} 
& \multicolumn{3}{c}{0.2} & \multicolumn{3}{c}{1} & \multicolumn{3}{c}{5} 
& \multicolumn{3}{c}{0.2} & \multicolumn{3}{c}{1} & \multicolumn{3}{c}{5} \\
Metric & MMD & JS & 1-way & MMD & JS & 1-way  & MMD & JS & 1-way 
& MMD & JS & 1-way  & MMD & JS & 1-way  & MMD & JS & 1-way 
& MMD & JS & 1-way  &MMD & JS & 1-way  &MMD & JS & 1-way  \\
\midrule
\texttt{PrAda-GAN} & \textbf{0.213}* &  \underline{2.779}* &  \textbf{0.203}* &  \textbf{0.242}* &  \underline{2.660} &  \textbf{0.159}* &  \textbf{0.077}* & \underline{2.640} &  \textbf{0.142}* &  \textbf{0.425}* & \underline{4.317} & 0.222 &  \textbf{0.275}* &  \underline{3.624} & 0.154 &  \textbf{0.427}* &  \underline{3.536} &  \underline{0.126} &  \textbf{2.341}* & \underline{4.338} & 0.317 &  \textbf{0.783}* &  \textbf{2.766}* &  \textbf{0.190}* & 0.990 &  \textbf{2.311}* &  \textbf{0.157}* \\
\texttt{AIM} &2.274 & 3.330 & \underline{0.274} &1.673 & 3.132 & 0.252 & \underline{1.390} & 2.895 & 0.226 & 4.770 & 5.064 & \underline{0.191} & 2.270 & 4.773 &  \underline{0.149} &  \underline{1.881} & 4.618 & 0.148 & 12.746 & 5.972 &  \textbf{0.304} & 1.309 &  5.227 &  \underline{0.230} &  \underline{0.647} &  \underline{4.895} & \underline{0.195} \\
\texttt{PrivMRF}& \underline{2.010} & 3.235 & 0.275 & \underline{1.491} & 2.850 & \underline{0.207} & 1.903 & 2.860 & \underline{0.201} & \underline{3.430} & 5.160 &  \textbf{0.189} & \underline{1.991} & 4.630 &  \textbf{0.130}* & 2.042 & 4.439 &  \textbf{0.097}* &  \underline{6.552} &  5.676 &  \underline{0.305} &  \underline{0.798} & 5.150 & 0.234 & \textbf{0.451} & 4.967 & 0.199  \\
\texttt{GEM} & 9.680 & 3.291 & 0.379 & 2.011 & 3.366 & 0.271 & 1.628 & 3.189 & 0.268 & 59.783 & 5.718 & 0.391 & 8.865 & 5.470 & 0.257 & 9.294 & 5.489 & 0.262 & 77.804 & 6.536 & 0.451 & 15.533 & 5.472 & 0.285 & 9.309 & 5.204 & 0.232 \\
\texttt{DP-MERF}& 4.366 & 3.672 & 0.480 & 4.721 & 3.783 & 0.433 & 2.202 & 3.865 & 0.433 & 5.576 & 6.005 & 0.380 & 2.611 & 5.784 & 0.317 & 3.086 & 5.837 & 0.311 & 13.269 & 7.871 & 0.448 & 4.223 & 7.215 & 0.427 & 4.314 & 7.330 & 0.413  \\
\texttt{PrivBayes}& 576.149 & \textbf{1.942} & 0.319 & 427.872& \textbf{1.970}* & 0.313 & 401.090 &  \textbf{1.874}* & 0.311 & 8.778 &  \textbf{3.102}* & 0.246 & 3.411 &  \textbf{2.563}* & 0.215 & 3.084 &  \textbf{2.379}* & 0.210 & 195.08 & \textbf{4.145}* & 0.350 & 34.221 & \underline{3.283} & 0.279 & 10.462 & 3.087 & 0.413 \\
\midrule
\textbf{Ground Truth} & 0.001 & 0.202 & 0.015 & 0.001 & 0.202 & 0.015  & 0.001 & 0.202 & 0.015 & 0.004 & 0.311 & 0.020 &0.004 & 0.311 & 0.020& 0.004 & 0.311 & 0.020 & 0.003& 0.608& 0.066 &0.003& 0.608& 0.066 & 0.003& 0.608& 0.066 \\
\bottomrule
\end{tabular}%
}
\end{table*}

\begin{table}[p]
\centering
\tiny
\caption{Real data comparison for downstream machine learning efficacy (RMSE).
Best results are in \textbf{bold}; second-best are \underline{underlined}.
The best results that hold significance over the others have a *.}
\label{tab:rmse}
\resizebox{1\linewidth}{!}{
\renewcommand{\arraystretch}{0.99}
\setlength{\tabcolsep}{3.5pt}
\begin{tabular}{@{}lccccccccccccccc@{}}
\toprule
& \multicolumn{5}{c}{\textsc{California} } & \multicolumn{5}{c}{\textsc{House-16H}} & \multicolumn{5}{c}{\textsc{Cpu-act}} \\
\cmidrule(lr){2-6} \cmidrule(lr){7-11} \cmidrule(lr){12-16}
\textbf{Method} & Cat & MLP & RF & XGB & SVM & Cat & MLP & RF & XGB & SVM & Cat & MLP & RF & XGB & SVM \\
\midrule
\multicolumn{16}{c}{\textbf{$\varepsilon=0.2$}} \\
\texttt{PrAda-GAN}      & \textbf{0.829}* & \textbf{0.867}* & \textbf{0.844}* & \textbf{0.851}* & \textbf{0.879}* & \textbf{0.961}* & \textbf{0.989}* & \textbf{0.990} & \textbf{0.992}* & \textbf{0.980} & \textbf{0.939}* & \textbf{1.012} & \textbf{0.958}* & \textbf{0.959}* & \underline{1.000}\\
\texttt{AIM}       & 0.993 & 1.000 & 1.091 & 1.007 & 1.001 & 1.055 & 1.059 & 1.071 & 1.125 & \underline{1.037} & \underline{1.007} & 1.088 & 0.989 & \underline{1.027} & 1.047\\
\texttt{PrivMRF}   & 0.999 & \underline{0.995} & 1.113& 1.010&1.008 & 1.041 &  \underline{1.041} & \underline{1.047} & \underline{1.070} & 1.038 & 1.009 &1.108  & \underline{0.986} & 1.028 & 1.038 \\
\texttt{GEM}       & 1.067 & 1.064& 1.204& 1.106& 1.109 & 1.180 & 1.316 & 1.336 & 1.552 & 1.178  & 1.155 & 1.211 & 1.381 & 1.221 & 1.178 \\
\texttt{DP-MERF}   &\underline{0.890} & 1.064& \underline{1.026} & \underline{0.925} & \underline{0.969} &  \underline{1.001} & 1.300 & 1.484 & 1.396 & 1.097 & 0.951  & \underline{1.039}  & 1.004 & 0.986 &\textbf{0.939}* \\
\texttt{PrivBayes} &0.988 & 1.020& 1.043 & 1.048& 1.015  &1.040  & 1.183 & 1.078  & 1.149  & 1.050 & 1.078 &1.386  & 1.956 & 1.357 & 1.011 \\
\midrule
\multicolumn{16}{c}{\textbf{$\varepsilon=1.0$}} \\
\texttt{PrAda-GAN}      &\textbf{0.711}* &\textbf{0.720}* & \textbf{0.750}* & \textbf{0.746}* &\textbf{0.754}* &  \textbf{0.903}* & \textbf{0.935}* & \textbf{0.928}* & \textbf{0.929}* & \textbf{0.949}* & \underline{0.942} & \textbf{0.970}* & \textbf{0.955} & \underline{0.954} & \underline{0.959}\\
\texttt{AIM}       &0.998 & \underline{0.992}&1.102 &1.019 & 1.005& \underline{0.951} & \underline{0.970} & \underline{0.995} & \underline{1.059} & \underline{0.958}  & 0.987 & 1.064 & \underline{0.957} & 1.002& 1.036\\
\texttt{PrivMRF}   & 0.999& 1.005&1.102 & 1.012 &1.007 & 1.036 & 1.038 & 1.051& 1.066  & 1.037  & 0.997 & 1.099 & 0.970 & 1.010 & 1.044\\
\texttt{GEM}       & 1.021& 1.039 & 1.160 & 1.043& 1.035& 1.079 & 1.116  & 1.140 & 1.224 & 1.057 &  0.954& 1.080 & 1.057& 0.993 & 0.988 \\
\texttt{DP-MERF}   & \underline{0.840} & 1.086 & \underline{0.993} & \underline{0.858} & \underline{0.911} & 0.968 & 1.251 & 1.247 & 1.221 & 1.002  & \textbf{0.930}* & \underline{1.008} & 0.964& \textbf{0.950} & \textbf{0.957}\\
\texttt{PrivBayes} & 0.991 & 1.009 & 1.037 &1.043  & 1.003 & 1.034 & 1.238 & 1.038 & 1.093 & 1.046 & 1.002& 1.268& 2.121 & 1.228 & 1.025\\
\midrule
\multicolumn{16}{c}{\textbf{$\varepsilon=5.0$}} \\
\texttt{PrAda-GAN}      & \underline{0.621} & \textbf{0.627}*& \textbf{0.651}* & \underline{0.649} & \textbf{0.643}* & 0.891 & \underline{0.924} & 0.912 & \underline{0.914} & \textbf{0.916} & 0.918 & \textbf{0.925}* & \textbf{0.924} & \textbf{0.921} & \underline{0.962} \\
\texttt{AIM}       & \textbf{0.599}* & \underline{0.715} & \underline{0.659}& \textbf{0.646} & \underline{0.676} & \textbf{0.840} & 0.935 & \textbf{0.855}* & \textbf{0.881}* & \underline{0.921} & \textbf{0.896}* & \underline{1.009} & \underline{0.930} & \underline{0.922}  & \textbf{0.930}* \\
\texttt{PrivMRF}   & 0.972 & 1.023 & 1.081 & 0.996 &  0.946 & \underline{0.857}  & \textbf{0.908}* & \underline{0.864} & 0.934 & 0.929 & 0.990& 1.125 & 0.968  & 1.006 & 1.046 \\
\texttt{GEM}       & 1.015&1.025 & 1.159 & 1.046 & 1.023 & 1.058 & 1.087 & 1.097 & 1.208 & 1.039 & \underline{0.914} & 1.020 & 0.950 & 0.933 & 0.974\\
\texttt{DP-MERF}   & 0.837 & 1.024 & 0.907 & 0.835 & 0.983 & 0.977 & 1.268 & 1.057 & 1.195 & 0.989 & 0.917 & 1.145 & 0.931 & 0.934 & 0.962 \\
\texttt{PrivBayes} & 0.736 & 0.776 & 0.763 & 0.772 & 0.729 & 1.029  & 1.315  & 1.042 & 1.083 & 1.047 & 0.999 & 1.174 & 2.153 & 1.190 &1.032 \\
\cmidrule[0.5pt]{1-16}
\textbf{Ground Truth}       &0.380 & 0.437 & 0.434 & 0.427 & 0.587 & 0.697 & 0.715 &0.703 & 0.725 & 0.862 & 0.145 & 0.222 & 0.154 & 0.181 & 0.900 \\
\bottomrule
\end{tabular}
}
\end{table}

\end{document}